\documentclass[11pt]{article}

\usepackage[letterpaper,margin=1in]{geometry}
\usepackage[T1]{fontenc}
\usepackage[utf8]{inputenc}
\usepackage{amsmath,amssymb,amsthm}
\usepackage{graphicx}
\usepackage{booktabs}
\usepackage[numbers]{natbib}
\usepackage{titling}
\usepackage{titlesec}
\usepackage{wrapfig}
\usepackage[colorlinks=true,allcolors=blue]{hyperref}

\usepackage[toc,page,header]{appendix}
\usepackage{minitoc}

\title{\bfseries
  Deep Generative Models for Phylogenetic Inference\\with Complex Evolutionary Processes
}

\author{
  Ethan Baron\textsuperscript{*}\quad \quad
  Alan N.~Amin\textsuperscript{*}\quad 
  \quad
  Andrew Gordon Wilson \\[3pt]
  \normalsize{New York University}
  \texttt{}
}

\date{}

\usepackage{microtype}
\usepackage{graphicx}
\usepackage{subcaption}
\usepackage{booktabs} 
\usepackage{hyperref}

\usepackage{algorithm}
\usepackage{algorithmic}

\usepackage{amsmath}
\usepackage{amssymb}
\usepackage{mathtools}
\usepackage{amsthm}
\usepackage{enumitem}
\usepackage{placeins}

\usepackage{bbm}
\usepackage[dvipsnames]{xcolor}

\definecolor{bu_purple}{RGB}{103, 93, 163}
\definecolor{mo_green}{RGB}{54, 154, 83}
\definecolor{td_blue}{RGB}{44, 120, 178}

\newcommand{\bu}{\textbf{\textcolor{bu_purple}{bottom-up}}}
\newcommand{\mo}{\textbf{\textcolor{mo_green}{middle-out}}}
\newcommand{\td}{\textbf{\textcolor{td_blue}{top-down}}}
\newcommand{\Bu}{\textbf{\textcolor{bu_purple}{Bottom-up}}}
\newcommand{\Mo}{\textbf{\textcolor{mo_green}{Middle-out}}}
\newcommand{\Td}{\textbf{\textcolor{td_blue}{Top-down}}}
\newcommand{\Root}[1]{\mathrm{root}}

\usepackage[capitalize,noabbrev]{cleveref}

\theoremstyle{plain}
\newtheorem{theorem}{Theorem}[section]
\newtheorem{proposition}[theorem]{Proposition}

\theoremstyle{definition}

\theoremstyle{remark}

\usepackage{setspace}
\usepackage{indentfirst}

\begin{document}

\maketitle
{\renewcommand{\thefootnote}{}%
 \footnotetext{\textsuperscript{*}Equal contribution. Correspondence to: \href{mailto:alan.nawzad.a@gmail.com}{\texttt{alan.nawzad.a@gmail.com}}}}
\vspace{-3.0em}

\begin{abstract}
\noindent
Inference of phylogenetic trees from sequences plays a central role in understanding evolutionary relationships, with applications ranging from tracking pathogen spread to reconstructing the history of life.
Conventionally, practitioners obtain posteriors over trees using likelihood-based methods like MCMC.
However, likelihoods are only available for simple evolutionary models with restrictive assumptions. 
For realistic evolutionary models, inference using conventional methods is prohibitively expensive, inaccurate, or even impossible.
In practice, practitioners settle for using as flexible a conventional method as possible, even though they expect it is misspecified.
In principle, we could use simulation-based inference: simulate tree and sequence data from an evolutionary model and train a neural network to predict the posterior over trees given sequences.
However it is a mathematical challenge to represent and train complex posterior distributions over trees.
Here we present a flexible family of conditional generative models over trees.
Our models iteratively build trees using three natural paradigms --- top-down, middle-out, or bottom-up ---
and are trained using a discrete diffusion framework.
For real and synthetic datasets of up to $65$ taxa and $>10^5$ tokens, our models recover MCMC posteriors much more accurately than the previous state-of-the-art simulation-based method Phyloformer 2.
Finally, we compare to IQ-Tree, a popular flexible-but-misspecified conventional method.
Our models match its tree estimates on simple data, but on more realistic data, our method is reliably more accurate.
\end{abstract}

\vspace{0.4em}
\onehalfspacing

\section{Introduction}\label{sec: intro}

To understand ancient evolution~\citep{Whelan2017-il}, the origins of pathogens~\citep{Bloom2025-hj}, and the function of the immune system~\citep{DeWitt2018-vw}, biologists collect DNA or protein sequences and try to infer a phylogenetic tree~\citep{Felsenstein2004-xi}.
In addition to the topology of the tree, biologists wish to infer scalar parameters that describe, for example, when a most recent common ancestor has occurred~\citep{Tamura2012-uf}, whether there is selective pressure~\citep{Goldman1994-az}, or how fast a pathogen is growing~\citep{Drummond2005-mv}.
Just as important as inferring the tree is representing our uncertainty~\citep{Huelsenbeck2000-ef}.
Bayesian inference provides a natural framework: the posterior represents our belief in the tree, including our uncertainty~\citep[Ch. 18]{Felsenstein2004-xi}.

To sample from the posterior, in principle we can use MCMC~\citep{Huelsenbeck2001-mo, Suchard2018-bd}.
However, Bayesian inference tools like MCMC require access to a likelihood, which is only easy to calculate for the simplest models of evolution.
More complex models of evolution either (A) have tractable but expensive likelihoods or (B) completely intractable likelihoods.

In models with expensive likelihoods, (A), inference with MCMC is prone to errors~\citep{Whelan2017-gv} and time consuming, in some cases taking weeks to infer a tree with a hundred species~\citep{Susko2018-nz}.
These models include those accounting for different mutation processes at different sites \citep{Banos2024-tz, Lartillot2004-ri}, changes of mutation processes over time \citep{Wang2007-xa, Crotty2020-kh}, and different mutation rates at different sites \citep{Yang1994-iq} and their correlation between nearby sites~\citep{Felsenstein1996-kl}.
In practice, practitioners speed up inference with strategies 
like taking the values of evolutionary parameters from previously fit models~\citep{Le2008-ht, Le2012-kh}, 
alternate fitting evolutionary parameters and the tree~\citep{Prillo2023-lq}, 
or relying on simpler models. Unfortunately,
these strategies are the sources of many errors in the literature~\citep{Pisani2015-io, Susko2018-nz, Youssef2020-kt}.

Other models, (B), have intractable likelihoods despite describing ubiquitous biological processes.
These include models of sequence insertions and deletions~\citep{Fletcher2009-yg}, selection-dependence between sites~\citep{Youssef2020-kt}, and context-dependent mutation rates~\citep{Seplyarskiy2021-nl}.
Such models are often dealt with by making strong assumptions~\citep{Holmes2001-xa, Matsen2022-go}, discarding data~\citep{Siepel2004-cn, Tan2015-lg}, or, more often, ignoring these phenomena entirely.

Unlike likelihood calculation, simulation is simple in nearly all models of evolution, including those with intractable likelihoods.
Ideally we could learn to perform inference by training a model on data simulated from the prior~\citep{Papamakarios2016-tp}, a framework known as simulation-based inference.
Indeed, this framework has been extremely successful for Bayesian inference in astronomy~\citep{Dax2021-dy}, neuroscience~\citep{Goncalves2020-hn}, and on tabular data~\citep{Hollmann2025-bv}.
The main challenge in applying simulation-based inference to this problem is representing a posterior distribution over the combinatorial space of tree topologies, a space of size $(2N-3)!!$.
Indeed, inference for scalar evolutionary parameters is mature~\citep{Fourment2024-tk}, so here we focus on the hard problem of inferring posteriors over trees.

Previous models use a single forward pass to produce a parametric distribution over trees, limiting expressivity~\citep{Blassel2025-vq}.
In this paper, to build flexible models that can support rich posterior distributions, we design iterative generative models that perform multiple forward passes to sample a tree topology from the posterior. 
In Sec.~\ref{sec:flexible}, we describe three natural paradigms for parameterizing iterative generative models for trees: \bu, \mo, and \td.
Since phylogenetic trees can be represented as graphs, we parameterize our models as graph neural networks, leveraging methods from the graph foundation model community \citep{wang2025graphfoundationmodelscomprehensive};
    we also introduce new architectural ideas to condition on a tree and avoid representation collapse on huge data.
In Sec.~\ref{sec:training}, we show we can train large-scale models for each paradigm using a discrete diffusion objective.
After training on only 100 GPU hours, our models accurately fit posteriors of trees for data with up to $65$ DNA or protein sequences of length $>2500$, mapping $>10^5$ input tokens to a posterior over $>10^{100}$ topologies.

We show in Sec.~\ref{sec:results} that our generative models learn to represent an accurate and calibrated posterior on real and synthetic data in cases where Bayesian inference is tractable, substantially outperforming previous simulation-based models.
Moreover, we show that our models successfully learn posteriors under complex evolutionary models where Bayesian inference is intractable, significantly outperforming the conventional strategy of using misspecified models.

Our code is available at \url{https://github.com/AlanNawzadAmin/DeepPhylo}.

\section{Background and motivation}
\label{sec:background}

In this section, we introduce Bayesian phylogenetic inference, explain why conventional methods cannot be used under realistic evolutionary models, and advocate for simulation-based inference.

\subsection{Overview of phylogenetic inference}

Let $X_1, \ldots, X_N \in V^L$ be a set of DNA or protein sequences of length $L$ from the genomes of $N$ species (also known as ``taxa'') descended from a common ancestor, where $V$ is the alphabet of nucleotides or amino acids.
We represent how these sequences evolved with a phylogenetic tree $T$ with $N$ leaves, one representing each species. In a phylogenetic tree, each internal node represents the common ancestor of its two children. Phylogenetic trees are bifurcating, meaning each internal node has exactly two children.

In Bayesian phylogenetic inference, we wish to sample from the \emph{posterior} $p(T \mid X_{1:N})$ under some \emph{prior} base distribution over tree topologies $p(T)$ and \emph{likelihood} corresponding to an evolutionary model $p(X_{1:N} \mid T)$. 
As explained in Sec.~\ref{sec: intro}, we consider inferring the topology of the tree $T$ without predicting branch lengths, though our methods can be easily adapted to accommodate branch length prediction.

In some settings, the root of the tree is not identifiable from sequence data, so $T$ is an unrooted tree.
Throughout this paper we consider a rooted $T$ and the nonidentifiability of the root will manifest as its uncertainty in the posterior.

\subsection{Priors over tree topologies}

There are several classical choices for the prior over tree topologies $p(T)$. The PDA distribution (also known sometimes as simply the uniform model) assigns a uniform probability to each topology, typically producing ``unbalanced'' trees.
In contrast, the Yule model results from assuming a uniform rate of speciation at all nodes, producing relatively balanced trees.
In practice, published trees are generally less balanced than Yule trees but more balanced than PDA trees. Ford's alpha model  \citep{Ford2005-gf} and Aldous' beta model \citep{Aldous_beta} offer two ways to interpolate between these extremes.

\subsection{Evolutionary models}

The simplest evolutionary model is the Jukes-Cantor (JC) model~\citep{Jukes1969-ap}.
The JC model simulates an independent Markov substitution process at each position of the sequence for each species.

Formally, JC parameterizes the evolution of species along a tree $T$ with given branch lengths $r_{u\to v}$ for all parents and children $u, v$ in $T$. We start with a single species $Y$ at the root, whose sequence is sampled uniformly at random $Y_{\Root{}}^{(\ell)}\sim\mathrm{Uniform}\{A, T, C, G\}$ for all $\ell = 1, 2, \ldots, L$. For each branch, we then simulate an independent uniform Markov substitution process for time $r_{\Root{} \to v}$, mutating $Y^{(\ell)}$ into a different random letter every $\Delta t \sim\mathrm{Exp}(1)$. We continue similarly, simulating the Markov process down the tree $T$ until we have simulated the sequences at all of the leaves $X_1, \dots, X_N$.
\label{sec:jc}

Despite the simplicity of the JC model, it is not obvious how to calculate $p(X_{1:N}\mid T)$, since it requires marginalizing out all ancestral sequences. For this calculation, \citet{Felsenstein1973-ro} developed the pruning algorithm, which continues to be the backbone for likelihood calculations in phylogenetic inference. Felsenstein's pruning algorithm can also be applied to calculate likelihoods for some more elaborate models, such as ones that allow different sites to mutate with different rates. However, it strongly relies on the assumption that evolution occurs independently at each position.

Say, when simulating the JC model, the rate of mutation of a letter $Y^{(l)}$ was not independent, but rather depended on the identity of its neighbors $Y^{(l-1)}$ and $Y^{(l+1)}$. Alternatively, say there was a chance for a random chunk of five contiguous letters to be inserted or deleted. These phenomena are ubiquitous in real data, and while it is straightforward to simulate $X_{1:N}\mid T$, computing the likelihood becomes intractable, making typical strategies for posterior inference impossible.

\subsection{Simulation-based posterior inference}

When we use simple evolutionary models like JC, we can compute likelihoods and perform Bayesian inference with MCMC. However, when likelihood evaluation is expensive or impossible, we cannot perform Bayesian inference with these methods. How can we perform posterior inference when we cannot compute the likelihood $p(X_{1:N} \mid T)$, but we can only sample from this distribution?

We suggest \emph{simulation-based inference}~\citep{Papamakarios2016-tp}, in which we sample many synthetic examples of $(X_{1:N}, T)$ pairs following a given prior and likelihood, and train a generative model to predict $T$ conditioned on $X_{1:N}$. The trained model can then sample from the posterior for $T$ given real sequences $X_{1:N}$!
To do so, we need a scalable, model which can learn a full posterior $p(T\mid X_{1:N})$ without access to the likelihood.

\section{Related works}
\label{sec:rel_work}
There are many works that apply machine learning methods for Bayesian phylogenetic inference. 
Most of these methods seek to improve upon MCMC methods for Bayesian inference, which are applicable when likelihoods are tractable.
These include Hamiltonian Monte Carlo~\citep{Dinh2017-zp}, sequential Monte Carlo~\citep{Wang2015-md, Moretti2021-yg, Chen2025-yg}, variational inference~\citep{Zhang2024-jz}, and pseudo-likelihood methods~\citep{Prillo2023-lq}.
Recent work however has considered likelihood-free inference.
\citet{Suvorov2020-fe} considered learning to map from $X_{1:N}$ to a single ``most-correct'' tree $T$.
Finally \citet{Blassel2025-vq} suggested learning the posterior distribution $p( T \mid X_{1:N})$.

\textbf{Can we apply generative models in the literature to our problem?}
Although many generative models over trees have been proposed for variational likelihood-based inference (VI)~\citep{Zhang2024-jz},
summarizing distributions over trees~\citep{Xie2025-nv},
or likelihood-free inference~\citep{Suvorov2020-fe}, almost all existing methods cannot be used for our problem since
\begin{itemize}[noitemsep]
    \item \textbf{they do not learn a full posterior. }
    Methods that predict quartets~\citep{Suvorov2020-fe, Su2025-sa} or distance matrices~\citep{Jiang2023-lx, Nesterenko2024-fm}; or use a heuristic loss~\citep{Zhang2025-bo} only make point predictions.
    \item \textbf{they do not scale to our setting.} It is not clear how to efficiently learn energy-based models~\citep{Zhang2023-pc} 
    for datasets larger than a few short sequences.
    And the number of parameters of a split or subsplit network is at least $2^N$~\citep{Larget2013-kx, Zhang2018-fv, Zhang2024-jz};
    when applying these models practitioners restrict support to the small number of splits that were observed in MCMC posteriors, but we don't have that option when we're interested in supporting the full combinatorial space of tree topologies.
    \item \textbf{they require a tractable likelihood to train.}
    Integrating the continuous encodings of ~\citet{Mimori2023-sf} and~\citet{Duan2024-dv} and training subsplit networks with restricted support~\citep{Zhang2018-fv} is only possible when optimizing a reverse KL.
    Also, the GFlowNet framework of \citet{Zhou2024-xo} requires a reward model.
    \item \textbf{or they don't condition on sequences.} ARTree \cite{Xie2023-ed} and ARTreeFormer \cite{Xie2025ARTreeFormer} summarize MCMC posteriors for a single, \textbf{fixed} set of sequences $X_{1:N}$. As a result their architectures do not condition on sequences.
    And, even with a conditional architecture, training using their paradigm leads to a worse model (see App.~\ref{app:art_limits}).
\end{itemize}
This leaves, to our knowledge, only Phyloformer 2~\citep{Blassel2025-vq} as a method appropriate for likelihood-free Bayesian inference.
However its flexibility is restricted as it only performs one forward pass to represent a distribution of trees (see App.~\ref{app:pf_limits} for details).

\section{Flexible generative models over trees}
\label{sec:flexible}

To perform simulation-based inference for phylogenetic trees, we need a flexible model capable of representing complex posterior distributions over tree topologies. 
In this section, 
we start by introducing three natural paradigms for constructing generative models over trees
and argue why we should use generative models that use many forward passes to sample a tree.

\begin{figure*}[htb]
    \centering
    \includegraphics[width=0.95\linewidth]{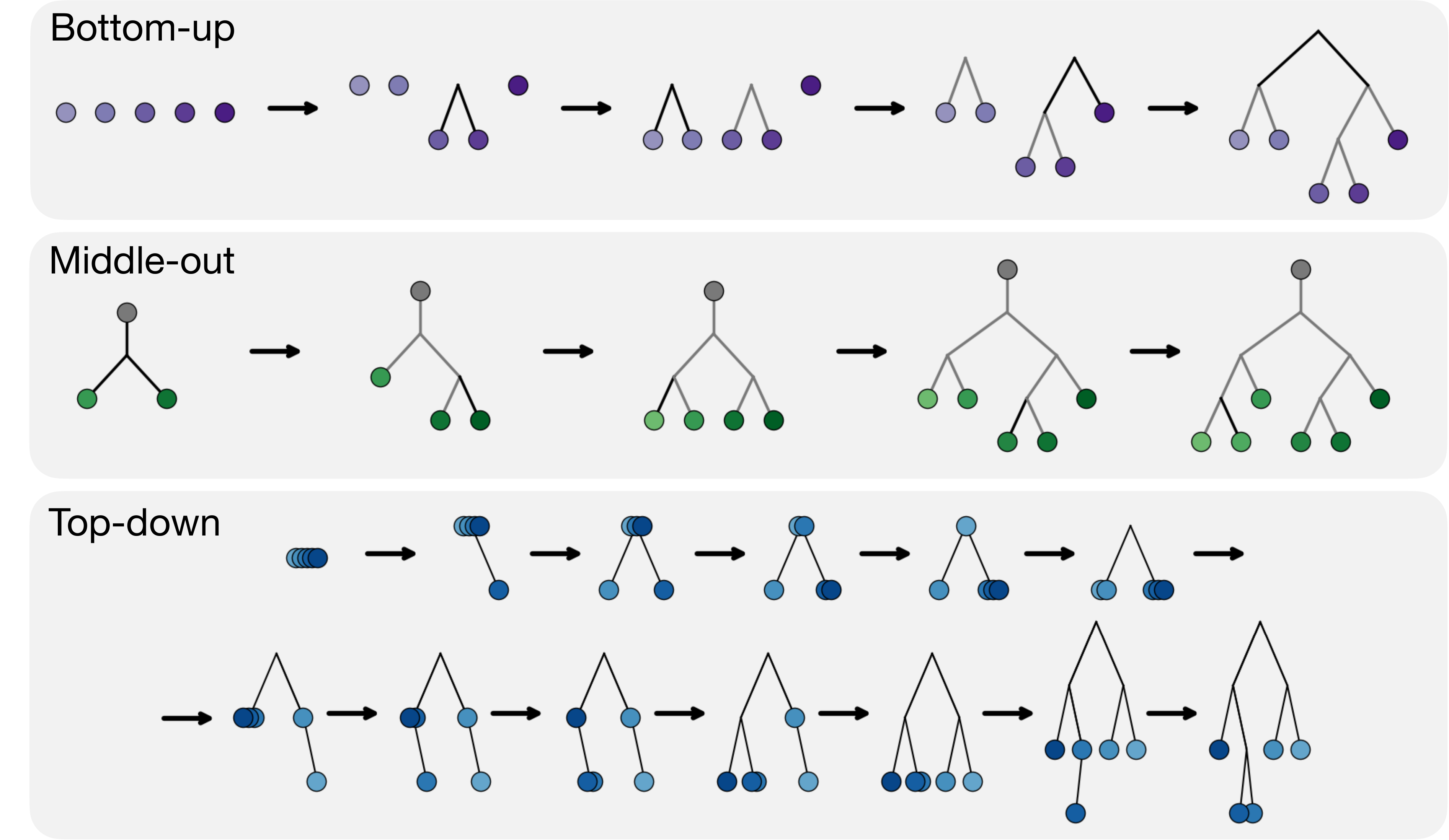}
    \caption{Illustration of three generative paradigms for trees. Each colored circle represents one taxon. For \bu, we start with each taxon as a separate subtree and repeatedly merge two subtrees until a single bifurcating tree is formed. Dark edges indicate which subtrees were merged in each step.
    For \mo, we start with a tree containing two taxa and add one taxon at a time. The gray circle depicts the ``pseudo-taxon'' representing the root of the tree, and dark edges show which taxon was added in each step.
    For \td, we start with all taxa at the root and percolate them downwards one step at a time until we reach a full bifurcating tree.}
    \label{fig:traj}
\end{figure*}

\subsection{Three generative paradigms for trees}

We consider three popular and natural paradigms for parameterizing generative models over trees: \bu, \mo, and \td.
We provide an illustration of the three paradigms in Fig.~\ref{fig:traj}.
We discuss alternative paradigms for generative models over trees and their challenges in App.~\ref{app: other tree models}.

\begin{itemize}
    \item The \bu{} paradigm generates trees by starting with each taxon as its own subtree containing one node, and iteratively merging two subtrees together until a complete bifurcating tree is formed.
    \item The \mo{} paradigm generates trees by adding one taxon to the tree at a time. Specifically, we start by randomly selecting two species and placing them in an unrooted tree, along with a ``pseudo-taxon'' representing the root of the original tree. We then graft one taxon at a time, attaching it to the current unrooted tree, until all taxa have been added.
    \item The \td{} paradigm generates trees by percolating taxa down from the root one layer at a time. Specifically, we start with all taxa at the root. At each step, we select one taxon with minimal depth and predict whether to percolate it down to the left or right. We stop percolating when a full bifurcating tree is formed.
\end{itemize}

All three of these paradigms are explored in the mathematical phylogenetics and generative modeling literature:
for example, \bu{} has neighbour joining~\citep{Saitou1987-js} and Phyloformer 2 \citep{Blassel2025-vq};
\mo{} has Ford's alpha model~\citep{Ford2005-gf} and ARTree \cite{Xie2023-ed};
and \td{} has Aldous' beta model~\citep{Aldous_beta} and subsplit networks~\citep{Zhang2018-fv}.
In none of these paradigms however is there a flexible conditional generative model over trees.

\subsection{Achieving flexibility with iterative generative models}

How can we develop generative models that are sufficiently flexible to represent arbitrary posterior distributions over tree topologies? We advocate for using \emph{iterative generative models} that perform multiple forward passes through the model to gradually construct a tree one step at a time. Unlike past methods that use a single forward pass to parameterize the full posterior, performing multiple forward passes allows the model to represent arbitrarily complex distributions by conditioning on the intermediate state of the tree before predicting the next step.

Formally, we start the generative process with a structure $T_M$ that depends only on the sequences, without any information about the real tree. We then use a model $q_\theta$ to build up the tree one step at a time by repeatedly sampling from the distribution $q_\theta(T_{m-1} \mid T_m, X_{1:N})$. We continue this process for $M$ steps until we have generated the full tree $T_0$\footnote{For \mo{} we could either start with $T_M$ containing two random sequences, or take $T_M$ to be empty and have the first two steps $q_\theta(T_{m-1} \mid T_m, X_{1:N})$ randomly add two sequences. We take the latter convention.}.
\begin{figure*}[tb]
    \centering
    \includegraphics[width=0.85\linewidth]{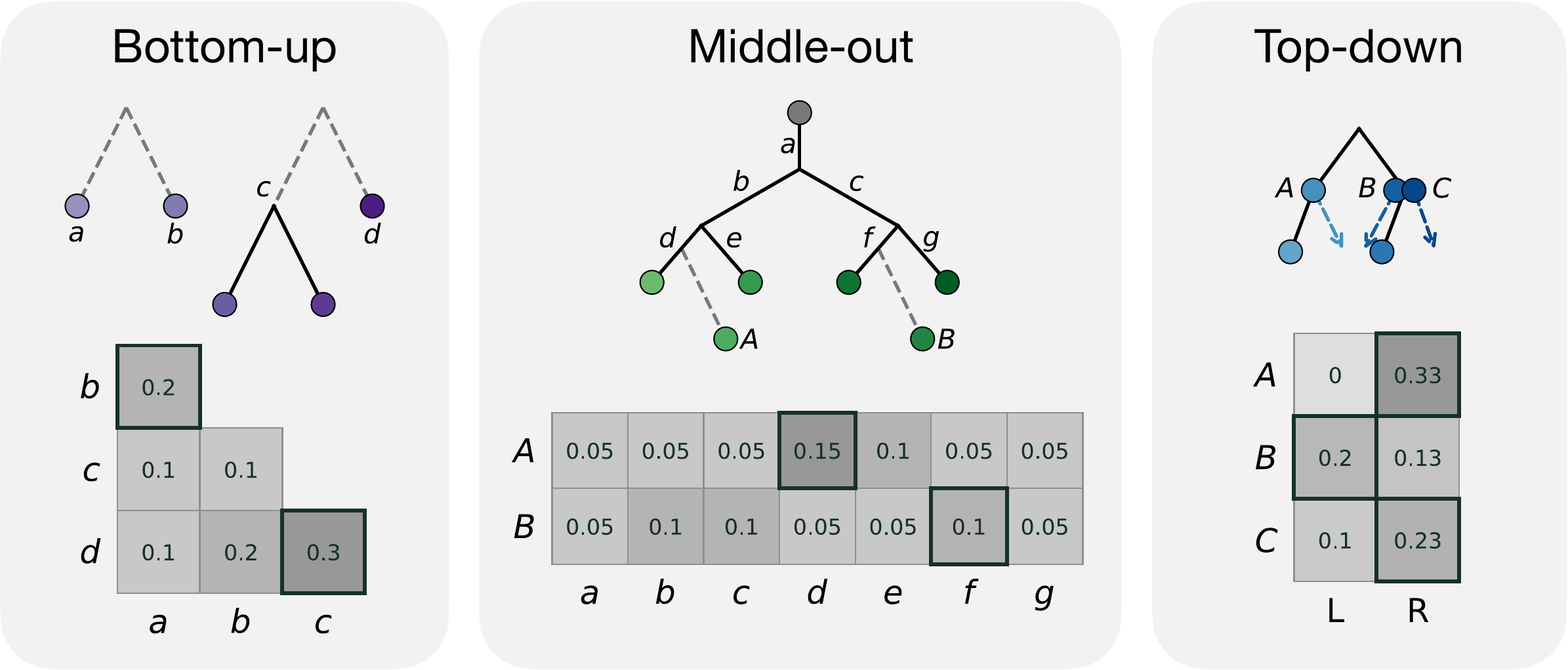}
    \caption{Illustration of sample model predictions for the three generative paradigms.
    For \bu{}, the dashed edges indicate the valid merges consistent with $T_0$, corresponding to the highlighted cells in the matrix of model predictions.
    For \mo{}, the dashed edges indicate the correct attachment edges consistent with $T_0$, corresponding to the highlighted cells in the matrix of model predictions.
    For \td{}, the arrows indicate the correct percolation directions consistent with $T_0$, corresponding to the highlighted cells in the matrix of t predictions. Taxon $A$ is constrained to percolate right as $T_0$ is a bifurcating tree.}
    \label{fig:parameterization}
\end{figure*}
Fig.~\ref{fig:parameterization} illustrates the shape of the model's predictions for a single step of generation under each of the paradigms. 
\begin{itemize}
    \item For the \bu{} paradigm, at each step, the model takes in a set of $m+1$ subtrees, and predicts a distribution of size ${m+1 \choose 2}$ over the next two subtrees to merge.
    \item For the \mo{} paradigm, at each step the model conditions on the current tree and predicts where to attach each missing taxon using a distribution over the tree's edges. 
    \item For the \td{} paradigm, at each step the model conditions on the current state of the tree and predicts the probability of each taxon percolating left or right. 
\end{itemize}

Interestingly, a model that naively outputs uniform predictions leads to a different distribution over tree topologies for each of the three paradigms. For \bu{} we obtain the Yule distribution, for \mo{} we obtain the PDA distribution, and for \td{} we obtain the ``almost completely balanced'' distribution described in \citep{Aldous2001-ec}. This might suggest that our choice of generative paradigm depends on our desired prior over tree topologies. However, we show in App.~\ref{app:expressivity} that all three paradigms can support a full spectrum from PDA to Yule with simple adjustments to the naive uniform predictions, meaning all are sufficiently flexible for our task.
\label{sec:expressivity}

\section{Efficiently training our iterative generative models}
\label{sec:training}

In this section, we describe how to efficiently train iterative generative models over trees using a discrete diffusion framework, explain our choices for a ``noising process'' in each paradigm, and provide details on our implementation.

\subsection{Training iterative generative models with discrete diffusion}
\label{sec:diffusion}

We want to train our models to maximize the likelihood of the simulated data. 
Letting $T_M, T_{M-1}, \ldots, T_1$ be the states of the tree while generating $T_0$, the likelihood is:
\begin{equation}
q_\theta(T_0 \mid X_{1:N}) =
\sum_{T_1, T_2, \ldots, T_M} \prod_{m=1}^M q_\theta(T_{m-1} \mid T_m, X_{1:N})
\label{eqn:likelihood}
\end{equation}
Unfortunately, directly optimizing such a likelihood is intractable as it requires marginalizing over all possible paths that reach the tree $T_0$. 

Instead, we propose training the model $q_\theta$ using the discrete diffusion framework. 
Specifically, we define a noising process that generates paths $p(T_1, \dots, T_M\mid T_0)$ by noising $T_0$; then we train our model to reverse these paths.
We do so by optimizing an ELBO of the form:
\begin{equation}
\log q_\theta(T_0 \mid X_{1:N})
\geq
- M\mathbb E_{m\sim\mathrm{Unif}(1, \dots, M), T_m \sim p\left(T_m \mid T_0 \right)} \left[ D_\textrm{KL}(p(T_{m-1} \mid T_m, T_0) \, || \, q_\theta(T_{m-1} \mid T_m, X_{1:N})\right]
\label{eqn:elbo}
\end{equation}
Training the model now involves simulating a training example $(T_0, X_{1:N})$, applying the noising process $p(T_m \mid T_0)$ for some number of noising steps $m$, and pushing the model's predictions on $T_m$ closer to the true reverse process that recovers $T_0$.
The ELBO is maximized when $q_\theta(T_{m-1} \mid T_m, X_{1:N})=\mathbb E_{p(T_0\mid X_{1:N}, T_m)}p(T_{m-1} \mid T_m, T_0)$at which point $q_\theta=p$.
We review the derivation of the ELBO and cover related work on discrete diffusion in App.~\ref{app:diffusion}.

\subsection{Rao-Blackwellizing with a closed-form loss}

To evaluate the KL in Eqn.~\ref{eqn:elbo}, we need $p(T_{m-1}\mid T_m, T_0)$.
Alternatively however, we may be tempted to write the ELBO
\begin{equation}\label{eqn: naive elbo}
\begin{aligned}
    - \sum_{m=1}^M \mathbb{E} \textrm{KL}(p(T_{m-1} \mid T_m, T_0) \, || \, &q_\theta(T_{m-1} \mid T_m, X_{1:N}))\\
    =&-\sum_{m=1}^M \mathbb{E}_{T_0, T_1, \dots, T_m}\log q_\theta(T_{m-1} \mid T_m, X_{1:N})+\mathrm{const}.
\end{aligned}
\end{equation}
\begin{wrapfigure}{l}{0.4\linewidth}
    \centering
    \vspace{-0.5cm}
    \includegraphics[width=0.99\linewidth]{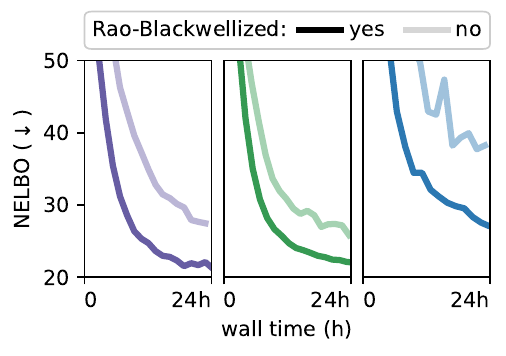}
    \caption{\textbf{Our closed form losses are crucial for training.}
    On the ``complex'' dataset described below we train models with or without Rao-Blackwellizing the KL estimate in Eqn.~\ref{eqn: naive elbo}.
    \vspace{-0.3cm}
    }
    \label{fig:RB ablation}
\end{wrapfigure}and train $q_\theta$ by simulating from $T_0$ up to $T_m$ and predicting the $T_{m-1}$ that was visited along that simulation.
Or we could get a similar loss by picking a single ``correct'' path for each $T_0$ (App.~\ref{app:pf_limits} and ~\ref{app:art_limits}).
One only needs to be able to simulate trajectories to train this way.

In principle however, with a closed form $p(T_{m-1} \mid T_m, T_0)$ one could evaluate the entire KL, integrating over all paths and thereby reducing the variance of the estimate, i.e. Rao-Blackwellizing.
In Fig.~\ref{fig:RB ablation} we see this in practice as well: using the formulas we derive in the next section, our Rao-Blackwellized training acheives substantially better ELBOs.
In App.~\ref{app:pf2 ablations} we show a similar result for the canonicalizing strategy used in previous work.

\subsection{Noising process for generative models over trees}
\label{app:noising}

To train a discrete diffusion model using Eqn.~\ref{eqn:elbo}, we must define a noising process $p(T_{1:M} \mid T_0, m)$ carefully so that the target distribution $p(T_{m-1} \mid T_m, T_0)$ is computable closed-form.

For \bu, at the start of the noising process, we label the root node as ``exposed''. Then, each noising step involves removing an exposed internal node from the current graph, ``exposing'' its two children. We select the next node to remove in each step as:
\begin{align*}
p_\text{BU}(T_m=\texttt{remove}&(T_{m-1}, \texttt{node}) \mid T_{m-1})
\propto |\texttt{descendants}(\texttt{node})|
\end{align*}
We multiply by the number of descendants of $\texttt{node}$, not counting $\texttt{node}$, to get a tractable form for $p(T_{m-1} \mid T_m, T_0)$ involving two tree traversals to calculate described in App.~\ref{app:rb_bu}.

For \mo, each noising step involves pruning one taxon from the tree and collapsing the edges connecting its two siblings. Formally, we have, for a taxon $v$:
\begin{align*}
p_\text{MO}(T_m=\texttt{prune}(T_{m-1}, v) \mid T_{m-1}) = \frac{1}{N - m + 1}
\end{align*}
Here, the distribution $p(T_{m-1} \mid T_m, T_0)$ is again uniform over all valid attachments in $T_m$ that are consistent with $T_0$. Note that for a given taxon, there is only one attachment edge in $T_m$ that is consistent with the ground truth $T_0$. Note also that our target has multiple non-zero entries corresponding to each of the taxa not in the current tree.
\label{sec:mo_rb}

For \td{}, each noising step involves selecting a taxon of maximal depth and percolating it up one layer to its parent. The noising process continues until all taxa reach the root node, at $T_M$. Formally, for a tree $T_{m-1}$ with maximum depth $d$, we have:
\begin{align*}
p_\text{TD}&(T_m=\texttt{percolate\_up}(T_{m-1}, \texttt{taxon}) \mid T_{m-1}) 
= \begin{cases} 0, \quad &\texttt{depth}(\texttt{taxon}) < d \\
\frac{1}{\texttt{\# of taxa at depth} \, d}, \quad &\textrm{otherwise}
\end{cases} 
\end{align*}
The distribution $p(T_{m-1} \mid T_m, T_0)$ is uniform over all percolations in $T_m$ that are consistent with $T_0$.

\paragraph{Noise schedule} 
In the ELBO from \cref{eqn:elbo}, we use a uniform distribution over the number of noising steps $m$. However, in practice some noising steps may be much easier for the model to learn than others. Ideally, to train as efficiently as possible, we would like to sample training examples according to the difficulty, by adapting the noise schedule. In \cref{app:noise_schedule}, we describe how to build a noise schedule mirroring the mutual information schedules of \citet{Austin2021-dg}.
In App.~\ref{app: ablate math} however we show this makes little difference in practice, so future practitioners may proceed with the uniform noise schedule without having to implement or derive a mutual information schedule.

\subsection{Architecture}

\begin{wrapfigure}{r}{0.4\linewidth}
    \centering
    \includegraphics[width=0.99\linewidth]{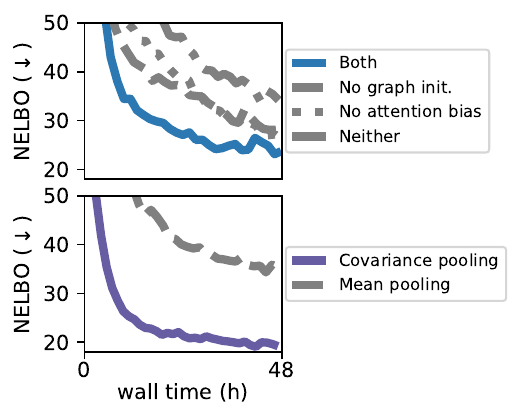}
    \caption{\textbf{Our strategies for conditioning on the graph and covariance pooling are crucial for training.}
    We ablate (Top) our graph conditioning strategies and 
    (Bottom) our covariance pooling.
    Each is shown on an example paradigm: \td{} and \bu{}.
    }
    \label{fig:graph ablation}
\end{wrapfigure}
We adapt the EvoPF architecture from \citep{Blassel2025-vq}. 
EvoPF embeds $N$ sequences of length $L$, $X_{1:N}$, into a tensor of size $[N, L, 128]$ and performs axial attention over the dimensions $N$ and $L$ separately.
Finally it performs a mean across the length dimension to get a matrix of $[N, 128]$ embeddings.
We make two modifications.
We provide full details in App.~\ref{app:architecture} and ablations showing these two modifications were crucial for performance in Fig.~\ref{fig:graph ablation} (details in App.~\ref{app: arch ablations}).

First we need to condition on the current state of the graph $T_m$.
    So we initialized species embeddings with graph topology information, and biased attention among species using graph distance. 

Second, near initialization, the mean over $L>1000$ positions returns nearly identical embeddings across taxa:
    if the embedding at each position represents only its nucleotide, then the mean is only the base composition, nearly the same for every taxon.
    We developed a new architecture we call \textbf{covariance-pooling} which embeds the average similarity between pairs of nodes:
    if embeddings represent nucleotides, now we get the Hamming similarity.

Also, as noted in Sec.~\ref{sec:expressivity}, each paradigm is capable of representing prior distributions ranging from Yule to PDA. 
We initialize at the desired prior by multiplying the neural network outputs with these weights.
In App.~\ref{app: ablate math} we show our models can learn these weights even without this initialization.

\section{Experiment details}

We now describe our experimental setup, including training data, architecture, and training details.

\subsection{Simulated training data}

We train our models on synthetic data simulated following a prior $p(T)$ and evolutionary model $p(X_{1:N} \mid T)$. 
We set $p(T)$ to be a PDA prior on trees to match the priors in MrBayes.

We consider three types of evolutionary models $p(X_{1:N} \mid T)$ for our experiments. First, we train our models on the simplest model of DNA evolution (\textbf{JC}). As described in Sec.~\ref{sec:jc}, the JC model yields tractable likelihoods, allowing us to use MrBayes to obtain gold standard MCMC samples from the posterior. Next, we consider two more realistic evolutionary models: a more \textit{complex model} accounting for natural selection in proteins (\textbf{CAT}) and a model of DNA evolution that has \textit{intractable} likelihoods because of the presence of insertions and deletions (\textbf{InDel}).

We train our models to support datasets with between $N=25$ and $N=65$ species, and sequences of length $L=300$ to $L=2600$, based on the sizes of 8 classical evolutionary datasets~\citep{Zhang2018-fv}. Our models can therefore be immediately applied to most evolutionary and immunological inference tasks on 1-5 genes. We discuss scaling to larger $N$ and $L$ in Sec.~\ref{sec:conclusion}. We provide further details on our simulated training datasets in App.~\ref{app:evolutionary_models}.

\begin{figure*}[tb]
    \centering
    \includegraphics[width=0.9\linewidth]{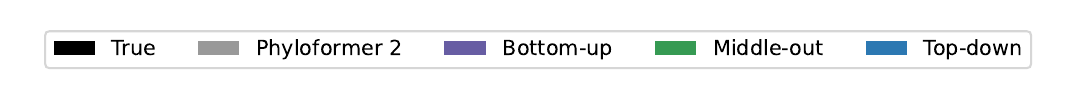}
    \begin{subfigure}{0.49\linewidth}
        \centering
        \includegraphics[
            width=\linewidth,
            trim=0 0 118pt 0,
            clip
        ]{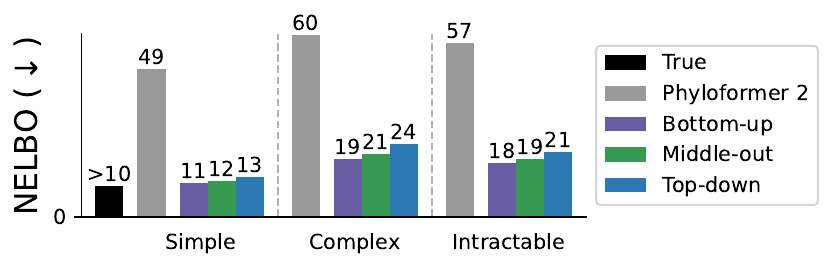}
        \caption{ELBOs on training data}
        \label{fig:elbos}
    \end{subfigure}
    \hfill
    \begin{subfigure}{0.49\linewidth}
        \centering
        \includegraphics[
            width=\linewidth,
        ]{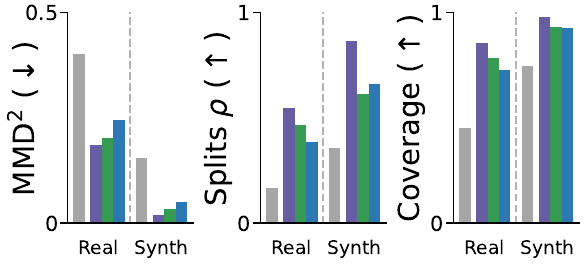}
        \caption{Posterior calibration on real and synthetic datasets}
        \label{fig:mmd}
    \end{subfigure}
    \caption{\textbf{(Left)}  Our models obtain substantially better negative ELBOs than Phyloformer 2.
    The ELBO of a trivial model that scores all trees uniformly is 157 nats.
    \textbf{(Right)}  Our models reconstruct MCMC posteriors on real datasets more accurately than Phyloformer 2.
    \vspace{-0.3cm}}
    \label{fig:elbos-side-by-side}
\end{figure*}

\subsection{Training details}

We use Adam with $\mu$P-rescaled per-layer learning rates~\citep{Yang2022-sy}
(base $\mathrm{lr}=0.01$), CompleteP scaled residual layers~\citep{Dey2025-uv}, $1{,}000$-step linear warmup,
EMA $0.9998$, gradient-norm clip $500$, \texttt{bf16-mixed} precision, and an
effective batch size of $32$ trees per step.
Each model trains for 48 hours on $2$ A100 GPUs, which is approximately $800,000$ datapoints for each paradigm.
Our experiments with Phyloformer 2 are controlled for compute: we train their models according to their default hyperparameters for 48 hours on $2$ A100 GPUs, which is approximately $1,100,000$ datapoints.
We also reduce the pair-attention dimension in our EvoPF architecture from $256$ to $128$, giving our models $13.6$ million parameters compared to $20.0$ million for our baseline, Phyloformer 2.

\section{Results}
\label{sec:results}

In this section, we provide our empirical results. 
We compare against Phyloformer 2, the previous state-of-the-art method for simulation-based inference.
First, we show that our models learn to fit simulated data much better than Phyloformer 2, as measured by ELBOs. 
Second, we show that our models match gold standard MCMC posteriors more accurately than Phyloformer 2. 
Lastly, we show that our models accurately infer the correct tree for the complex and intractable evolutionary models, improving accuracy over the best existing alternative.

We remind the reader that real evolutionary data almost never has a ``true'' tree that we can observe.
Our ground truth always depends on the practitioners' prior $p(T, X_{1:N})$ and our goal is to show that we can perform accurate posterior inference even for priors that have been traditionally challenging to model.

\subsection{Our models are state-of-the-art for simulation-based phylogenetic inference}

In Fig.~\ref{fig:elbos}, we compare the ELBOs achieved by our models against Phyloformer 2. We find that across datasets and generative paradigms, our models consistently fit the data much better than Phyloformer 2.
These results confirm that our flexible parameterization and efficient training via the discrete diffusion ELBO significantly improve the performance of simulation-based inference for phylogenetics. More generally, our work is the first to successfully apply the \td{} and \mo{} paradigms to this problem, and we show that both of these paradigms can far surpass the performance of the previous state-of-the-art.
In App.~\ref{app:pf2 ablations} we show through ablations that our improvement over Phyloformer 2 comes from our new training objective and the flexibility of our framework rather than a pessimistic ELBO or architectural changes.

For the JC data we are able to estimate the true average entropy of the data using MCMC (App.~\ref{app: JC ML}) at $>10$ nats, which is the uncertainty of a uniform distribution over $e^{10}\approx 20,000$ trees.
This demonstrates the significant uncertainty in the true tree even in the simplest synthetic settings, highlighting the importance of fitting a posterior distribution.
We see our models have an average KL of $<3$ nats (difference between our ELBO and the entropy) suggesting an absolute accurate fit.

On these data \bu{} consistently outperformed \mo{} which in turn outperformed \td{}.
This could be because the many noising steps of \td{} increase the variance of the loss during training.
The superior performance of \bu{} may also reflect the bottom-up nature of the likelihood calculation on simple phylogenetic data~\citep{Felsenstein1973-ro}.

\begin{figure*}[tb]
    \centering
    \includegraphics[
        width=0.9\linewidth,
        trim=0pt 0pt 0pt 0pt,
        clip
    ]{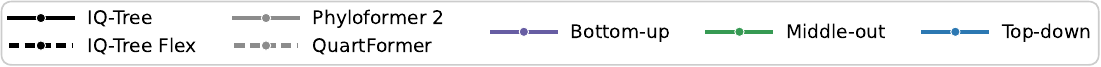}
    \begin{subfigure}{0.49\linewidth}
        \centering
        \includegraphics[
            width=\linewidth,
            trim=0 0 130pt 0,
            clip
        ]{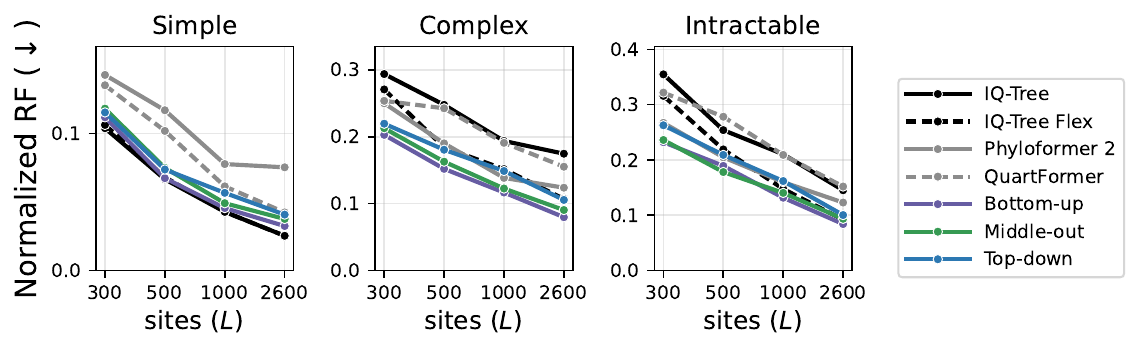}
        \caption{$N = 45$}
        \label{fig:rf_n}
    \end{subfigure}
    \hfill
    \begin{subfigure}{0.49\linewidth}
        \centering
        \includegraphics[
            width=\linewidth,
            trim=0 0 130pt 0,
            clip
        ]{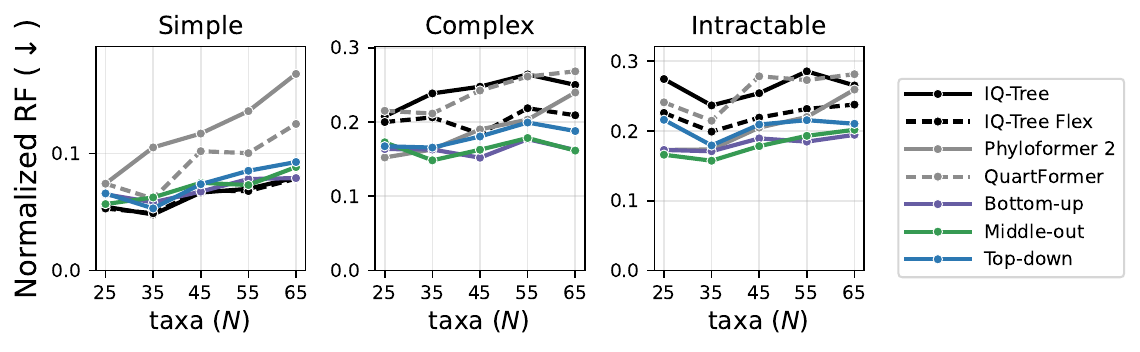}
        \caption{$L = 500$}
        \label{fig:rf_l}
    \end{subfigure}
    \caption{Our models make more accurate predictions than IQ-Tree for the complex and intractable evolutionary models.\textbf{(Left)} We hold $N$ fixed and vary the sequence length $L$. \textbf{(Right)} We hold $L$ fixed and vary the number of taxa $N$.
    The normalized RF distance of a random tree is roughly $1$.
    \vspace{-0.3cm}}
    \label{fig:rf}
\end{figure*}

\subsection{Our models accurately recover calibrated MCMC posteriors on real data}\label{sec: post calib}

We now take eight classical \textbf{real-world} evolutionary datasets from Treebase~\citep{Piel2000-bv}, a collection of inferred trees, originally selected in~\citet{Lakner2008-dd} (App.~\ref{app:real_data} for details).
These include diverse data from tetrapod ribosomal DNA to insect \textit{wingless} gene DNA, $N=27$ to $N=64$, and $L=378$ to $L=2520$, and they are numbered DS1-8 in order of increasing $N$.
We also simulate JC data with the same $N$ and $L$ to each of these datasets.
For each dataset, we infer $1000$ trees using our JC models, and compare the distribution of our inferred trees to the true posterior estimated by MrBayes under the JC model.
We remove constant sites in the sequence and shuffled the order of sites before inferring the tree.

In Fig.~\ref{fig:mmd} we report MMD distance between the posteriors of each model and the gold standard posterior samples from MrBayes for real and synthetic data; we also compare the Spearman correlation of the posterior probabilities of tree partitions and ``coverage'' of high probability partitions.
See App.~\ref{app: posterior calibration} for full results, tree samples, and an explanation of these metrics.
Our models accurately recover the posterior, obtaining much closer posteriors than the previous state-of-the-art Phyloformer 2.
However performance is superior on synthetic data, indicating room for improvement.

\subsection{Our models successfully infer trees for complex evolutionary models}

For the CAT and InDel datasets, obtaining accurate MCMC posteriors is prohibitively expensive or impossible.
To make predictions in these settings, practitioners often resort to point estimates using workhorse maximum likelihood methods such as IQ-Tree.
Practitioners use these procedures despite knowing they are misspecified in hopes that the evidence still points to the correct tree.

As an additional baseline, we also train Quartformer models, adapting the released code from \citep{Su2025-xt}. See \cref{app:quartformer} for more details on our Quartformer implementation.

For each method we infer $8$ trees for every set of sequences $X_{1:N}$ and take the majority-rule unrooted consensus tree, allowing for polytomy.
We evaluate the unrooted Robinson-Foulds (RF) distance with respect to the true tree.
We show our results for $N=45$ and $L=500$ in Fig.~\ref{fig:rf}. 
We see all of our models outperform Phyloformer 2.
Our models also perform just as well as IQ-Tree on the JC data where it is well-specified.
On the CAT and InDel datasets, IQ-Tree ignores more complex effects leading to lower-quality predictions.
In contrast, our models are able to leverage their knowledge about complex evolutionary processes and indels to make better predictions than IQ-Tree.
This is also true on the complex data where IQ-Tree Flex in principle models the same sort of site-variability and non uniform mutation processes that we're including in our simulated data;
    the mismatch between the exact form of the complex data prior and the effects IQ-Tree Flex is modelling however allows our models to outperform.
App.~\ref{app: timing results} shows run-time comparisons, showing our \bu{} and \mo{} models are also faster than IQ-Tree for most $N, L$ settings.

These results demonstrate that our models can give practitioners higher-quality point estimates of the tree under realistic evolutionary models.
Moreover, in contrast with bespoke algorithms such as IQ-Tree, our models simply require the practitioner to provide simulated data.

\section{Conclusion}
\label{sec:conclusion}

We develop flexible deep generative models that can be trained to accurately model posteriors over phylogenetic trees. Our models are trained on simulated data only, allowing them to learn posteriors for more realistic evolutionary models where traditional Bayesian approaches are intractable. 
We found, of the three paradigms, \bu{} to be consistently most accurate, but all three are natural starting points for extensions, and future practitioners may find the mathematics of \mo{} or \td{} more amenable to their setting.
Our contributions open up promising directions for future work: building a universal default tree prior, building better models for inferring evolutionary parameters, and scaling to larger datasets.

\paragraph{A universal tree prior}
There are a plethora of evolutionary models proposed in the literature. In many cases, practitioners may not know which evolutionary model best explains their data, and resort to using a model that is a poor match for the data. For many protein and DNA evolution datasets, the \texttt{GTR+G4+I} prior in MrBayes~\citep{Huelsenbeck2001-mo} acts as a default, and many evolutionary parameters come with common and default values used nearly universally for inference~\citep{Susko2018-nz}. Our framework opens the door for a phylogenetic inference foundation model: by training on simulated data from a variety of evolutionary models, the model automatically selects the appropriate evolutionary process as part of the posterior inference, and can be used across a large variety of applications. The foundation model can also be fine-tuned on simulated data  when a particular evolutionary model is desired. We believe that phylogenetic inference is therefore a promising application area for graph foundation models. 
This strategy may also address our decrease in performance on realistic out-of-distribution data (App.~\ref{app: posterior calibration}).
\citet{Potapczynski2026-rr} for example saw that training on a wider variety of synthetic data allowed them to perform substantially better on real data.

\paragraph{Inferring evolutionary parameters}
In this paper we focused on inference of the tree topology $T$ alone. However, in many settings we are also interested in inferring other parameters, such as branch lengths. As our work addresses the combinatorially challenging part of the inference problem, it can be extended by applying off-the-shelf Bayesian inference methods for Euclidean data to infer other parameters~\citep{Wang2015-md, Papamakarios2016-tp, Fourment2024-tk}. 

\paragraph{Species trees and reticulations}
We have focused here on inferring the sequence evolution of individual genes.
However real evolutionary data has complex phenomena such as recombination or incomplete lineage sorting which require reticulated or species trees~\citep[Ch. 27, 28]{Humphries2013-oc, Felsenstein2004-xi}; 
    real sequence data, such as pandemic data, can also come with information about the time it was collected~\citep{Drummond2003-cy}.
Future work could extend our models to deal with these data as well.

\paragraph{Larger scale inference}
We train our models on datasets with up to $N=65$ sequences of length up to $L=2600$. 
These sizes are sufficient for most important phylogenetic tasks.
It is a misconception that large-data applications such as pandemic data or full-genome analyses necessarily involve the analysis of $N=$ thousands or $L=$ millions all at once.
Instead practitioners more often infer sub-trees for subsets of $N$ or individual genes $L\approx 1000$ and combine their inferred trees using super-tree methods~\citep{Warnow2018-bv, Jiang2024-yw}, or species-tree methods~\citep{Mirarab2014-fy}.
Our method can be combined with these, or scale much further by using methods that have enabled large-scale graph foundation models \citep{bechlerspeicher2026billionscalegraphfoundationmodels}.

\paragraph{Reducing uncertainty}
We saw in App.~\ref{app: posterior calibration} that our posterior predictions were biased to be under-confident, a hallmark of models trained to maximize likelihood.
For language models trained to maximize likelihood, the most popular strategy to get high quality samples is reinforcement learning~\citep{Ouyang2022-vz} or sampling strategies~\citep{Holtzman2019-rr}.
Unlike the directions in the rest of this section, adapting these strategies to phylogenetic inference will involve new algorithms, taking care to not to miscalibrate the uncertainty prediction.

\paragraph{Acknowledgements}
This work was supported by DARPA AIQ
HR00112590066, NSF CAREER IIS-2145492, and NSF CDS\&E-MSS 2134216.

\clearpage
\bibliography{bib}
\bibliographystyle{plainnat}

\clearpage
\onecolumn
\appendix
\crefalias{section}{appendix}
\setcounter{table}{0}
\renewcommand{\thetable}{A\arabic{table}}

\addcontentsline{toc}{section}{Appendix}
\part{Appendix}
\parttoc

\section{Extended related works}
\label{app: ext_rel_works}

Here we discuss the improvements our models make over two related works, ARTree and Phyloformer 2.
This extends the discussion in Sec.~\ref{sec:rel_work}.

\subsection{Improvements upon ARTree}
\label{app:art_limits}

ARTree \citep{Xie2023-ed} is a scalable generative model over trees previously used for variational inference and learning representations.
It generates trees in the \mo{} paradigm.

To our knowledge, ARTree has always applied on a fixed set of $N$ sequences a priori $X_1, \dots, X_N$.
Therefore it iteratively added them to the tree in order, that is, $X_1$ was always added first and $X_N$ always added last.
As well, since the sequences themselves $X_{1:N}$ were fixed during training, they never needed to be fed into a neural network.

In our situation we 1) have a \textbf{changing} set of sequences we have to use to output a posterior 2) have no canonical order $1, \dots, N$.
We make two changes to accommodate these differences.
\begin{enumerate}[noitemsep]
    \item \textbf{Conditioning on sequences:}
    Our network not only conditions on the state of the tree $T_m$, but also all of the sequences $X_{1:N}$.
    Conditioning on a further $>10^5$ tokens required new architectures to prevent poor training behaviour as we show in App.~\ref{app:architecture} and Fig~\ref{fig:graph ablation}.
    \item \textbf{Rao-Blackwellizing the addition order}:
    We could have assigned a canonical order to each set of sequences $X_{1:N}$ and added them to the tree in that order as in ARTree.
    However, this would make the training objective sparse, as there is only one valid addition at each generation step.
    In Fig.~\ref{fig:RB ablation} we demonstrate through ablations that integrating over all possible additions makes a large difference in practice.
\end{enumerate}

Our \mo{} can be thought of as an ARTree model with (1) an architecture that conditions on sequences and (2) integrates over the addition order.

\subsection{Improvements upon Phyloformer 2}
\label{app:pf_limits}

Phyloformer 2 \cite{Blassel2025-vq} is the previous state-of-the-art method for simulation-based inference for Bayesian phylogenetic inference.
Our work on our \bu{} paradigm resembles Phyloformer 2 but improves upon it in several ways:
\begin{enumerate}[noitemsep]
    \item \textbf{Flexibility:} Phyloformer 2 uses a neural network backbone they call EvoPF that predicts embeddings for each species, followed by a module they call BayesNJ which parameterizes the distribution over topologies conditioned on the embeddings. BayesNJ uses a symmetric bilinear layer to convert the embeddings into a probability distribution over each merge, using the same embeddings for the full generation process. However, as noted in Section 3.2 of \cite{Blassel2025-vq}, in general the posterior merge probabilities at a given step might depend on the previous merges.
    We prove that there are simple distributions which cannot be approximated by Phyloformer 2 in Prop.~\ref{prop: pf2 not flex}.
    Our models unlock this expressivity by using iterative forward passes through the network, each conditioning on the current state of the tree.
    \item \textbf{Reliance on canonical merge order}: Another challenge faced by Phyloformer 2 is that it is intractable to sum over the likelihoods of all possible merge orders consistent with a given topology $T_0$.
    To solve this, they introduce a canonical merge order that depends on the branch lengths of the tree.
    As a result, the training objective is sparse, as there is only one valid merge at each generation step.
\end{enumerate}

In App.~\ref{app:pf2 ablations} we demonstrate through ablations that indeed our \bu{} model improves upon Phyloformer 2 by being more flexible and integrating over merge orders.

\subsubsection{Proof of inflexibility of Phyloformer 2}

We show that there are simple distributions over trees that cannot be represented by BayesNJ, the distribution family used by PF2.
We can parameterize the BayesNJ distribution with a set of representations $v_1, \dots, v_N$, a function that updates representations $g$, and a function that takes two representations and outputs a logit of their merger $f$.
At each step, we calculate the logits of all possible merges $(f(v_i, v_j))_{1=i<j=N}$, samples from $i, j\sim\mathrm{Categorical}(\mathrm{softmax}(f(v_i, v_j))_{1=i<j=N})$, merge nodes $i, j$ and create a new representation for the merged node $v_{(i, j)}=g(v_i, v_j)$.
In practice $v_{1:N}$ are 128-dimensional vectors and $g$ and $f$ are bilinear transformations, but we do not need this fact.

The idea for our example is that dependency between the clades cannot be represented by BayesNJ since representations in one clade are not affected by merges in another.
A more expressive merge function or representation-to-logit map cannot fix this. 
And dependencies can occur in realistic data when the hyperparameters in one clade influence those in another.

For 6 species $A, B, C, D, E, F$, call $T^1=((A, B), C), T^2=((A, C), B)$, $T^{\prime 1}=((D, E), F), T^{\prime 2}=((D, F), E)$, and use $*$ to represent the fusion of two trees, so $T^1*T^2=(((A, B), C), ((D, E), F)).$
\begin{proposition}
    Say we have a BayesNJ distribution $p$ where $p(T)<\epsilon$ for any tree $T$ where $A, B, C$ are not separated from $D, E, F$.
    Then $|p(T^1*T^{\prime 1})p(T^2*T^{\prime 2}) - p(T^1*T^{\prime 2})p(T^2*T^{\prime 1})|<K\epsilon$ for an absolute $K>0$.
    Therefore BayesNJ cannot approach a distribution which puts mass on just trees $T^1*T^{\prime 1}$ and $T^2*T^{\prime 2}$ as the first term stays away from $0$ and the second goes to $0$ in the bound.
\end{proposition}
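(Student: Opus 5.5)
The plan is to write the probability of each of the four trees $T^i*T^{\prime j}$ as a sum over merge orders (histories). The key observation is that BayesNJ's dynamics on the clade $\{A,B,C\}$ do not depend on what happens in $\{D,E,F\}$, except through the normalizing constant of each softmax and through the interleaving of steps. Representations are only created by $g$ from the two merged children, so the logit $f(v_i,v_j)$ of any candidate merge depends only on the sub-histories of the two clades being merged. Hence a merge history producing $T^i*T^{\prime j}$ is an interleaving of a history on $\{A,B,C\}$ producing $T^i$ (two specific merges), a history on $\{D,E,F\}$ producing $T^{\prime j}$, and then the final root merge. Its probability is a product of factors $e^{f}/Z_t$, where $Z_t$ sums the exponentiated logits over all pairs currently present.

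First I would split each $Z_t$ into $W_t + C_t$. Here $W_t$ collects the ``within-side'' pairs, both in $\{A,B,C\}$-descended nodes or both in $\{D,E,F\}$-descended nodes. $C_t$ collects the cross pairs (one node from each side). The cross pairs are exactly the merges that lead to trees where $A,B,C$ are not separated from $D,E,F$. Summing the hypothesis $p(T)<\epsilon$ over the finitely many such trees (at most $945$ rooted topologies on 6 leaves) gives a bound on the probability of any cross merge happening before the last step. In particular, at each of the first four steps, the expected value of $C_t/Z_t$ along the path is $O(\epsilon)$.

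Next I would compare $p$ with an idealized product law. Define $p_1(T^i)$ as the law of the BayesNJ process restricted to $\{A,B,C\}$ alone (softmax only over within-clade pairs), and similarly $p_2(T^{\prime j})$. The goal is to show $p(T^i*T^{\prime j}) = c\, p_1(T^i)p_2(T^{\prime j}) + O(\epsilon)$ with a common factor $c$ accounting for interleavings. The cleanest version is a coupling of the full process with a ``side-choosing'' process. At each step the within-side probabilities factor as (probability of choosing side $1$) times (conditional merge within side $1$). The conditional within-side choice equals exactly the restricted BayesNJ step, because the logits of pairs on side $1$ do not depend on side $2$. The side-choice probabilities depend on both sides' current representations, which is the danger.

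This dependence is the main obstacle: the interleaving weights could in principle correlate the two sides. To handle it, I would not try to show independence of interleavings. Instead I would sum over all interleavings and use the following fact. Conditional on reaching the final state with one root per side, which happens with probability $\ge 1-K'\epsilon$, the side-$1$ sequence of merges is distributed as $p_1$ given \emph{any} fixed pattern of side choices, and likewise for side $2$, with the two sides conditionally independent given the pattern. If side choices were exogenous, the final pair would therefore be a product $p_1\otimes p_2$ regardless of the pattern. Side choices are not exogenous, though: they depend on the current state of each side. Here the structure is small. Side $1$ has two merges, and after its first merge only one within-side pair remains, so its ``state'' relevant to side choice is just which first merge occurred.

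I would handle this by a direct computation over the $\binom{4}{2}=6$ interleavings, writing the first-merge weights explicitly. Then I would verify the determinant identity $p(11)p(22)-p(12)p(21)=0$ for the exact process with cross pairs removed. If that identity fails, I would fall back to showing that the $2\times 2$ matrix $p(ij)$ is rank one up to $O(\epsilon)$, which is the claimed conclusion. Finally, the error terms are bounded by $K\epsilon$ with $K$ a combinatorial constant, since all probabilities are at most $1$. The stated consequence then follows: a target with mass on $T^1*T^{\prime1}$ and $T^2*T^{\prime2}$ only makes the first product bounded below while the second tends to $0$, contradicting the bound as $\epsilon\to0$.
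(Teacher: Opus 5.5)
There is a genuine gap at the step that carries the proof: you need to show that, once cross merges are removed, the probabilities of $T^i*T^{\prime j}$ form an exactly rank-one $2\times 2$ matrix. You leave this as ``verify by direct computation'' and are unsure of the outcome. Your fallback, ``show the matrix is rank one up to $O(\epsilon)$'', is just the proposition restated. It also could not rescue anything: the true process is within $O(\epsilon)$ of the cross-free one, so if the exact identity failed, the claim itself would be false.

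The structural fact you offer in its place is also incorrect. Given a fixed pattern of side choices, the side-$1$ merges are neither distributed as $p_1$ nor independent of side $2$. Use the following notation:
\begin{itemize}
\item $a_x=\exp f(v_i,v_j)$ for each possible first merge $x=\{i,j\}$ in $\{A,B,C\}$, and $W_1=\sum_x a_x$;
\item $s_x$ for the weight of the forced second merge, which depends on $x$ through $g$;
\item $b_y,W_2,t_y$ for the analogous quantities in $\{D,E,F\}$.
\end{itemize}
The event ``pattern $(1,2,1,2)$ with first merges $x,y$'' has probability $\frac{a_x}{W_1+W_2}\cdot\frac{b_y}{s_x+W_2}\cdot\frac{s_x}{s_x+t_y}$. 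This is not proportional to $a_x$, and the factor $s_x+t_y$ couples $x$ and $y$.

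The missing idea is the one the paper's proof uses. Whether $T^1$ or $T^2$ appears is decided entirely by the \emph{first} merge inside $\{A,B,C\}$, and at that moment $v_A,v_B,v_C$ are still the untouched leaf representations. So, conditional on the whole past and on the next merge lying in $\{A,B,C\}$, the merge is $x$ with the fixed probability $a_x/W_1$, whatever has happened in $\{D,E,F\}$. The same holds symmetrically for the first merge in $\{D,E,F\}$.

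The state-dependent side choices only react to $x$ after $x$ has been drawn, so they cannot bias either draw. The joint law is therefore exactly $a_xb_y/(W_1W_2)$, i.e.\ your constant $c$ equals $1$. Your brute-force route would have confirmed this, since interleavings sharing a prefix telescope:
\[
\frac{a_x}{W_1+W_2}\Bigl(\frac{s_x}{s_x+W_2}\cdot\frac{b_y}{W_2}+\frac{b_y}{s_x+W_2}\Bigr)+\frac{b_y}{W_1+W_2}\Bigl(\frac{t_y}{W_1+t_y}\cdot\frac{a_x}{W_1}+\frac{a_x}{W_1+t_y}\Bigr)=\frac{a_xb_y}{W_1W_2}.
\]

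You should also make the comparison with the true process explicit, rather than appealing to the expected size of $C_t/Z_t$. On a cross-free path each factor satisfies $e^{f}/(W_t+C_t)\le e^{f}/W_t$, so the cross-free law dominates the true one pathwise. Its total excess is exactly the probability of some cross merge, which is $<936\epsilon$. Each $p(T^i*T^{\prime j})$ is then within $936\epsilon$ of $\frac{a_{x_i}b_{y_j}}{W_1W_2}$, and the determinant is at most $4\cdot 936\,\epsilon$.
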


\begin{proof}\label{prop: pf2 not flex}
    BayesNJ contructs a tree in a series of merge steps, just like bottom up, so we get a distribution $p(T_0, T_1, ..., T_6).$
    Let $n_1$ be the first step where a node in $A, B, C$ merges with something else.
    Then if $\mathrm{merge}(A, B, T_{n_1-1})$ is $T_{n_1-1}$ with $A$ and $B$ merged,
    $$p(T_{n_1}=\mathrm{merge}(A, B, T_{n_1-1})\mid T_{n_1-1})=\frac{\exp(f(v_A, v_B))}{\sum_{i<j\in\{A, B, C\}}\exp(f(v_i, v_j))}+O(\epsilon),$$
    with a similar equality for the merger of $A$ and $C$.
    Note this doesn't depend on $T_{n_1-1}$ beyond a factor of $O(\epsilon).$
    Calling $n_2$ the the first step where a node in $D, E, F$ merges we get a similar result.
    So,
    \begin{equation*}
        \begin{aligned}
            p(T^1*T^{\prime 1}) = & p( T_{n_1}=\mathrm{merge}(A, B, T_{n_1-1}), T_{n_2}=\mathrm{merge}(D, E, T_{n_2-1})) + O(\epsilon)\\
            =& \frac{\exp(f(v_A, v_B))}{\sum_{i<j\in\{A, B, C\}}\exp(f(v_i, v_j))}\times\frac{\exp(f(v_D, v_E))}{\sum_{i<j\in\{D, E, F\}}\exp(f(v_i, v_j))}+O(\epsilon).
        \end{aligned}
    \end{equation*}
    Similar equalities hold for $T^1*T^{\prime 2}$, $T^2*T^{\prime 1}$, and $T^2*T^{\prime 2}$.
\end{proof}

\subsubsection{Correcting leakage bug in Phyloformer 2 code base}

In \citet{Blassel2025-vq}, Phyloformer 2 is shown to outperform IQ-Tree on both tractable and intractable models of evolution. We have identified that this result is tainted by test leakage in their pipeline.

Specifically, their EvoPF architecture is not invariant to the permutation of sequence labels. This is because $\text{flatten}(a_i \otimes a_j) \neq \text{flatten}(a_j \otimes a_i)$ in the outer product mean layer. To generate input datasets, they use AliSim, which outputs rows in a traversal order which contains information about the tree topology. While we make sure to permute sequence labels before passing them into any model, reflecting a real-world application where there is no known ordering, \citet{Blassel2025-vq} skip this step. As such, their model learns to leverage the ordering of the leaves in predicting the tree topology.

We loaded the Phyloformer 2 checkpoint from the latest code release and evaluated normalized RF on 100 greedily-sampled trees from the LG+GC test data at \url{https://zenodo.org/records/13742527/files/paper_test_sets.tar.xz?download=1} with $N=50, L=500$. Without shuffling the input sequences, we obtain a score of 0.046, close to the reported value of around 0.048 in \citet{Blassel2025-vq}. However, after shuffling the input sequences the normalized RF jumps from 0.046 to 0.238.

We then trained our models and Phyloformer 2 on $N=50, L=500$ trees generated from their LG+GC simulated data, but with shuffled sequence order. We trained all models for the same wall time, 28 hours on 2 A100 GPUs, and evaluated on 100 8-consensus trees. IQ-Tree LG+G8 scores a normalized RF of 0.064. Our models scores 0.069, 0.071, and 0.083 for \bu{}, \mo{}, and \td{} respectively. Our implementation of Phyloformer 2 scores slightly worse, at 0.087. Notably, after fixing the training pipeline to eliminate leakage, Phyloformer 2 does not outperform IQ-Tree which is nearly correctly specified.

\section{Other potential paradigms for generative models over trees}
\label{app: other tree models}

In our work, we consider three natural paradigms for generative models over trees: \bu, \mo, and \td. 
Here, we explore other possible paradigms that could be used to define generative models over trees or discuss the obstacles of using them for flexible simulation-based inference.
The purpose of this appendix is to discuss the breadth of our three paradigms in the context of the wider mathematical phylogenetics literature.

\begin{table*}[h]
\centering
\caption{Summary of alternative generative modeling approaches for trees and their limitations.}
\label{tab:tree-models}
\begin{tabular}{p{6.5cm} p{5cm}}
\toprule
\textbf{Approach} & \textbf{Status or challenge} \\
\midrule
Splits and edge contractions & analogous to \td \\
\addlinespace
Leaf delete-regraft & analogous to \mo \\
\addlinespace
Add one leaf at a time and quarter puzzling & analogous to \mo \\
\addlinespace
Coalescents & analogous to \bu \\
\addlinespace
Dyck paths and permutations & models planar, unlabeled trees \\
\addlinespace
Hyperbolic embeddings  & models planar trees \\
\addlinespace
NNI, SPR, and TBR moves & tractability of conditionals \\
\addlinespace
Tree matchings & tractability of conditionals \\
\addlinespace
BHV space  & tractability of conditionals \\
\bottomrule
\end{tabular}
\end{table*}

\subsection{Paradigms that recover our models}

The following paradigms are practical for simulation-based inference, and can fit into one of the three paradigms which we consider.

\paragraph{Splits and edge contractions}
Many distributions of trees start with all species starting at the root, and then splitting left or right at a certain rate~\citep{Aldous2001-ec, Ford2005-gf};
these can include birth-death models.
Indeed this was explored for building generative models in~\citet{Larget2013-kx} and~\citet{Zhang2018-fv}.
However, the number of possible splits of $n$ species is $2^{n-1}$; it is therefore a challenge to parameterize a distribution over splits in a flexible way.
Indeed~\citet{Larget2013-kx} and \citet{Zhang2018-fv} had to restrict the possible splits to those seen in an MCMC run.
Similarly, one could imagine corrupting trees by contracting random edges rather than just leaf edges;
    the de-corruption process would need to deal with the same challenge of parameterizing a distribution of splits of objects.
One could parameterize such a distribution by gradually having nodes split one way or another using the diffusion method of~\citet{Ludke2024-zr}. This recovers the \td{} paradigm we explore.

\paragraph{Leaf delete-regraft}
The ``Aldous chain'' involves tree corruptions via pruning and regrafting individual leaves~\citep{Aldous2000-jv}.
Extensive theory on the process relies on the theory of random Dyck paths~\citep{Lohr2018-lk}, which we discuss below.
We are not sure if $p(T_m \mid T_0)$ is tractable, however we note that schedule-conditioning this model as described in~\citet{Amin2025-ag} leads to something analogous to \mo{}.
In particular, \citet{Amin2025-ag} advocates improving discrete diffusion by keeping track of how many times each token in a sequence has been corrupted during diffusion.
But once a leaf has been corrupted, it no longer contains any information about its original position.
Therefore, the task of predicting $q_\theta(T_{m-1} \mid T_m, \vec s_M)$, where $\vec s_M$ is an indicator of how many times each leaf has been corrupted, simply involves taking the subtree of un-noised leaves and predicting which edge any noised leaf falls in that tree, recovering the exact prediction problem from \mo.

\paragraph{Add one leaf at a time and quartet puzzling}
\citet{Sainudiin2015-jo} and~\citet{Ford2005-gf} describe distributions of trees in which one leaf is added at a time; these differ by introducing different hyperparameters and latent variables, and possibly disallowing addition to edges not connected to a leaf.
Integer vector encodings of trees describe the order in which individual leaves are attached~\citep{Richman2025-no}.
Quartet puzzling is also an algorithm that adds one leaf at a time to a tree based on likelihoods of ``quartet'' splits~\citep{Strimmer1996-hs}. These approaches are analogous to the \mo{} paradigm we explore.

\paragraph{Coalescents}
One of the most popular models for for phylogenetic trees is the coalescent~\citep{Bertoin2010-bh}.
These involve joining subtrees at some rate, which is equivalent to the \bu{} paradigm we explore.

\subsection{Paradigms that model the wrong types of trees}

Alternatively, some paradigms are not obviously applicable for our problem since they model the wrong types of trees. We are interested in modeling \textit{non-planar} trees (which do not distinguish between left and right branches) and \textit{labeled} trees (which distinguish the leaves).

\paragraph{Dyck paths and permutations}
Every labeled planar tree -- that is, a tree distinguishing left and right children -- can be uniquely described by a Dyck path and a permutation of $\{1, \dots, N\}$, suggesting we could model trees over these two objects~\citep{Aldous2000-jv}.
The coalescent point process~\citep{Lambert2013-xw} is another clever approach to efficiently generating trees; every labeled planar tree with branch lengths is characterized by a vector in $(0, 1)^{N-1}$ and a permutation. One issue with this approach is that it is not clear that modeling permutations or Dyck paths is simpler than modeling trees.
Another issue is that we wish to model non-planar trees, and each non-planar tree admits $2^{N-1}$ planar representations.
Stochastically choosing a planar representation for each tree would significantly increase variance when training, but canonically choosing a planar representation would introduce discontinuities or artificial structure.

\paragraph{Hyperbolic embeddings}
\citet{Sarkar2012-hm} shows how to embed trees into hyperbolic space with arbitrarily small distortion.
Putting the distortion aside, the embedding is not unique: rotations, translations, and the ordering of cones emanating from each internal node do not affect the tree distance matrix.
In particular, the ordering of cones can be used to define a planarity of the tree, showing that the symmetries of this embedding cause problems at least as bad as those of the Dyck paths and permutations above.

\subsection{Paradigms that do not admit tractable conditionals for discrete diffusion training}

The following paradigms are challenging choices for simulation-based inference since they do not allow for efficient training with a discrete diffusion objective as explained in Sec.~\ref{sec:diffusion}. Specifically, to train a diffusion model as in~\citet{Campbell2022-zm}, we need the conditionals $p(T(t)\mid T(0))$ to be tractable for any time $t$, or to train a diffusion model as in~\citet{Amin2025-ag}, we need $p(T_m \mid T_0)$ to be tractable for any number $M$, where $T_m$ is the tree after $m$ corruptions\footnote{SCUD~\citep{Amin2025-ag} models can technically be trained with only empirical samples from $p(T_m \mid T_{m-1}, T_0)$. However this training method sacrifices a closed-form ELBO and has extremely high variance in its gradient estimates~\citep{Baron2026-mj}.}.
Many stochastic processes on trees are not suitable for a generative model because these conditionals are intractable.

\paragraph{NNI, SPR, and TBR moves}
Many MCMC methods explore tree-space with proposals based on nearest neighbor interchange (NNI), subtree prune-and-regraft (SPR), and tree bisection and reconnection (TBR) moves.
However, calculating whether two trees $T_1, T_2$ have NNI, SPR, or TBR distance less than or equal to $M$ is NP-Hard~\citep{DasGupta2000-vp, Bordewich2005-bt, Allen2001-so}.
Therefore, if we corrupt our trees using these moves, simply determining if $p(T_m = T' \mid T_0 = T)>0$ is NP-Hard, making it likely intractable.
Meanwhile, other tree distances which are easier to calculate, like RF distance, are not associated with any particular sequence of tree operations to our knowledge.

\paragraph{Tree matchings}
\citet{Diaconis2002-ek} showed that trees are in bijection with pairings of the numbers $\{1, \dots, 2N-2\}$.
They characterized the eigenspectrum of the action of the symmetric group on this representation, finding deep algebraic structure.
However, the exact formulas for calculating $q_\theta(T_m \mid T_0)$, while ``solved'', are not easily computable in practice.

\paragraph{BHV space}
On the continuous Billera-Holmes-Vogtmann tree space~\citep{Billera2001-av}, where topologies change via NNI moves, the remarkable theorem of~\citet{Owen2011-ny} demonstrates that geodesic distances are still computable in polynomial time.
Nevertheless, recent work has highlighted the intractability of calculating the densities of diffusion processes on this space~\citep{Woodman2025-vc}; tractable distributions meanwhile (GGF of ~\citet{Woodman2025-vc}) have restricted support.
It is not clear if other continuous parameterizations, like that of~\citet{Gavryushkin2016-ox}, have more tractable diffusions.

\section{Expressivity of sampling paradigms}
\label{app:expressivity}

Here we prove simple modifications to the logits can allow all three paradigms to generate trees from PDA to Yule, the spectrum of realistic trees.
Note this section is not about proving that our models can represent every sequence distribution (that fact follows from the discussion below Eqn.~\ref{eqn:elbo}).
Rather it is about seeing how each of our paradigms fit simple distributions of trees explicitly.

\subsection{\Bu{}}

Consider a process that generates a tree $T_0$ by starting from $T_{N-1}$, a forest of $N$ singleton components, and building up to a complete tree.
At each step, the process merges two components of the forest $T_m$ rooted at $v$ and $u$ leaves to produce $T_{m-1}$, selecting this pair with probability proportional to $\gamma(n_{\mathrm{leaves}}(v) + n_{\mathrm{leaves}}(u) - 2) + 1$, where $\#\mathrm{leaves}(u)$ is the number of leaves in the subtree rooted at $u$, for some parameter $\gamma \geq 0$.
Each merge creates one internal node of the final tree $T_0$; the first merge creates the deepest internal node and the last creates the root.

Reversing the merge order, we obtain an ordering $v_1, v_2, \dots, v_{N-1}$ of the internal nodes of $T_0$ in which every ancestor appears before its descendants --- that is, a linear extension of $\texttt{int}(T_0)$.
It will be useful below that paths are uniform over linear extensions.

\begin{proposition}\label{prop:gamma_coalescent}
    When $\gamma = 0$, $p(T_0)$ is a Yule distribution over trees; when $\gamma \to \infty$, $p(T_0)$ concentrates on combs.
    When $\gamma = 1$, $p(T_0) = 1/(2N-3)!!$ is the PDA (uniform) distribution over topologies, and each path is equally likely conditioned on the topology:
    $p(T_1, \dots, T_{N-1}\mid T_0)=1/|\mathrm{LE}(\texttt{int}(T_0))|.$
\end{proposition}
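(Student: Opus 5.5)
The approach is to compute the probability of an entire merge path in closed form, using the fact that the normalizing constant at each step does not depend on the current forest. Suppose the forest $T_m$ has $k$ components with leaf counts $n_1,\dots,n_k$, where $\sum_i n_i = N$. Each component appears in exactly $k-1$ pairs, so $\sum_{u<v}(n_u+n_v)=(k-1)N$. The total weight is therefore
$$Z_k=\gamma\Big((k-1)N-2\tbinom{k}{2}\Big)+\tbinom{k}{2}=\gamma(k-1)(N-k)+\tfrac{k(k-1)}{2},$$
which depends only on $k$. A merge of $u$ and $v$ creates the internal node $w$ of $T_0$ with $n_w=n_u+n_v$ leaves, and its weight is $\gamma(n_w-2)+1$. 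Hence, for any path ending at $T_0$,
$$p(T_1,\dots,T_{N-1})=\frac{\prod_{w\in\texttt{int}(T_0)}\big(\gamma(n_w-2)+1\big)}{\prod_{k=2}^{N}Z_k}.$$
This quantity depends only on $T_0$ and not on the order of the merges. Since paths ending at $T_0$ are in bijection with $\mathrm{LE}(\texttt{int}(T_0))$, we get, for every $\gamma$, that $p(T_0)=|\mathrm{LE}(\texttt{int}(T_0))|\cdot(\text{the expression above})$. Moreover, paths are uniform conditionally on $T_0$, which gives the second claim at $\gamma=1$ as a special case.

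For $\gamma=0$, every pair has weight $1$, so the process merges a uniformly random pair at each step. This is the Kingman-coalescent construction of the Yule topology distribution, and I would cite it as standard. If a self-contained check is wanted, the formula gives $p(T_0)=|\mathrm{LE}|/\prod_k\binom{k}{2}$. Using $|\mathrm{LE}|$ computed by the tree hook-length formula, this matches the known Yule probability $2^{N-1-c(T_0)}\,(N-1)!\,/\big(\prod_w (n_w-1)\cdot N!\,(N-1)!/2^{N-1}\big)$ up to rearrangement, where $c(T_0)$ accounts for cherry symmetries. I expect this bookkeeping to be the fiddliest step. I would handle it by noting that the labeled Yule law is defined by exactly this uniform coalescence, so no computation is needed.

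For $\gamma=1$, the weight of node $w$ is $n_w-1$, which equals the number of internal nodes in the subtree rooted at $w$, i.e.\ its hook length in the poset $\texttt{int}(T_0)$. The hook-length formula for rooted forests gives $|\mathrm{LE}|=(N-1)!/\prod_w(n_w-1)$, so the numerator cancels exactly. We are left with
$$p(T_0)=\frac{(N-1)!}{\prod_{k=2}^{N}\frac{(k-1)(2N-k)}{2}}=\frac{2^{N-1}(N-1)!}{(2N-2)!}=\frac{1}{(2N-3)!!},$$
where we use $\prod_{k=2}^{N}(2N-k)=(2N-2)!/(N-1)!$. Dividing the path probability by $p(T_0)$ yields $1/|\mathrm{LE}|$.

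For $\gamma\to\infty$, I would argue directly on the dynamics. The first merge joins two singletons, since all pairs have weight $1$. Afterwards, while there is exactly one non-singleton component, every merge involving it has weight at least $\gamma+1$, while singleton–singleton merges have weight $1$. So the probability of ever forming a second non-trivial component is at most $\sum_k \binom{k}{2}/\big(\gamma(k-1)\big)=O(N^2/\gamma)\to0$. On the complementary event, the tree is built by attaching singletons one at a time to a single growing clade, which is exactly a comb.
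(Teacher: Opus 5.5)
Your proof is correct and follows the paper's route. For $\gamma=0$ you cite Kingman. For $\gamma=1$ you use that the normalizer depends only on the number of components, together with the hook-length formula. For $\gamma\to\infty$ you argue that the single growing clade absorbs every singleton. Your versions are slightly stronger: you show paths are uniform over $\mathrm{LE}(\texttt{int}(T_0))$ for every $\gamma$, and you make the comb limit quantitative with the $O(N^2/\gamma)$ bound.

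One fix to your optional $\gamma=0$ aside: there is no cherry-symmetry factor $2^{N-1-c(T_0)}$. Your own formula already gives $|\mathrm{LE}(\texttt{int}(T_0))|/\prod_{k=2}^{N}\binom{k}{2}=2^{N-1}/\big(N!\prod_{w}(n_w-1)\big)$, which is exactly the labeled Yule probability, so the self-contained check is immediate rather than fiddly.
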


\begin{proof}
The $\gamma=0$ case is Kingman's coalescent, which is well known to produce the Yule distribution~\citep{Aldous2001-ec}; the $\gamma\to\infty$ case follows since the merge probability concentrates on the pair with largest $a+b$, which always includes the component containing all previously merged taxa.

For $\gamma=1$, at step $m$, the forest $T_m$ has $m+1$ components with leaf counts $n_1, \dots, n_{m+1}$.
The normalizing constant is
$$\sum_{i<j}(n_i+n_j-1) = ((m+1)-1)\sum_i n_i - \binom{m+1}{2} = \frac{1}{2}m(2N-(m+1)),$$
since each $n_i$ appears in $(m+1)-1$ pairs.
Taking the product from $m=1$ to $m=N-1$, we get
$$\prod_{m=1}^{N-1}\frac{1}{2}m(2N-(m+1)) = 2^{-(N-1)}(N-1)!\frac{(2N-2)!}{(N-1)!}=(N-1)!(2N-3)!!,$$
with the last equality coming by counting even and odd terms of $(2N-2)!$.
We also note that for $v\in\texttt{int}(T_0)$, $n_{\mathrm{leaves}}(\texttt{left}(v_m))+n_{\mathrm{leaves}}(\texttt{right}(v_m))$ is the number of decendants $v$ has in $\texttt{int}(T_0)$ plus two: $|\texttt{descendants}_{\texttt{int}(T_0)}(v)|+2$.
Finally,
\begin{equation*}
\begin{aligned}
    p(T_0, T_1, \dots, T_{N-1})=&\prod_{m=1}^{N-1}\frac{n_{\mathrm{leaves}}(\texttt{left}(v_m))+n_{\mathrm{leaves}}(\texttt{right}(v_m))-1}{\frac{1}{2}m(2N-(m+1))}\\
    =&\frac{\prod_{v\in\texttt{int}(T_0)}(|\texttt{descendants}_{\texttt{int}(T_0)}(v)|+1)}{(N-1)!(2N-3)!!}
\end{aligned}
\end{equation*}
which does not depend on the order of the merges.
This then equals $1/(|\mathrm{LE}(\texttt{int}(T_0))|(2N-3)!!)$ by the hook length formula $|\mathrm{LE}(\texttt{int}(T_0))|=|\texttt{int}(T_0)|!/\prod_{v\in \texttt{int}(T_0)}(|\texttt{descendants}_{\texttt{int}(T_0)}(v)|+1)$.
Summing over all $|\mathrm{LE}(\texttt{int}(T_0))|$ linear extensions gives $p(T_0)=1/(2N-3)!!$.
\end{proof}

\subsection{\Mo{}}

Consider a process that generates a rooted tree by starting with a tree with a single pseudo taxon and grafting one taxon at a time.
At step $k$, starting at $k=0$, the current tree has $k$ leaves and $\max{}(2k-1, 1)$ edges.
The new taxon subdivides an edge chosen with probability proportional to
$\alpha$ or $1-\alpha$, where \emph{pendant} edges (those incident to a leaf, not including the pseudo-taxon) receive
weight $1-\alpha$ and all other edges receive weight~$\alpha$,
for a parameter $\alpha\in[0,1)$.

\begin{proposition}\label{prop:mo_expressivity}
When $\alpha=0$, $p(T_0)$ is a Yule distribution;
when $\alpha=\tfrac{1}{2}$, $p(T_0)$ is PDA;
and when $\alpha\to 1$, $p(T_0)$ concentrates on combs.
When $\alpha=1/2$, the distribution does not depend on the order in which
taxa are inserted.
\end{proposition}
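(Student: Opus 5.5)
The plan is to treat the four claims separately. Each one reduces to understanding the edge-selection probabilities at step $k$ together with one structural bijection.

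\textbf{Bookkeeping.} After $k\ge 1$ taxa have been grafted, the current object is a rooted bifurcating tree on $k$ leaves with a root edge to the pseudo-taxon. It has $2k-1$ edges: $k$ pendant edges (for $k=1$ the single edge is pendant) and $k-1$ non-pendant edges, the root edge included. The normalizer at step $k$ is therefore $k(1-\alpha)+(k-1)\alpha = k-\alpha$. A given pendant edge is chosen with probability $(1-\alpha)/(k-\alpha)$, and a given non-pendant edge with probability $\alpha/(k-\alpha)$.

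\textbf{The case $\alpha=\tfrac12$.} All edges now have equal weight, so each grafting step is uniform over the $2k-1$ edges. The key step is a bijection. For any fixed label $x$ that is the $(k+1)$-st taxon inserted, the map (tree on $k$ leaves, edge) $\mapsto$ (tree on $k+1$ leaves) is a bijection onto rooted trees on the $k+1$ labels; its inverse prunes $x$ and suppresses the resulting degree-two vertex. By induction on $k$, each rooted topology on the first $k+1$ inserted labels then has probability $\prod_{j=1}^{k}(2j-1)^{-1}=1/(2k+1)!!$. At $k+1=N$ this is $1/(2N-3)!!$, which is PDA. The bijection only uses that some fixed label is inserted next, never which one, so the same product arises for every insertion order; this gives the order-independence claim.

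\textbf{The case $\alpha=0$.} Only pendant edges are chosen, each with probability $1/k$. Grafting onto the pendant edge of leaf $\ell$ replaces $\ell$ by a cherry, so the process is exactly ``pick a uniformly random current leaf and split it'', which is the Yule process. This step needs care because of labels. With a fixed insertion order the first two taxa always straddle the root, so the labeled law is not exchangeable. I would therefore state the claim for the unlabeled shape, or for taxa inserted in uniformly random order as in the \mo{} noising process of Sec.~\ref{sec:mo_rb}. Under either reading the result matches the Yule distribution: exchangeable labels are attached to a Yule shape.

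\textbf{The case $\alpha\to1$.} Here I would show that the probability of ending in a comb tends to $1$. The key observation is that grafting onto any non-pendant edge of a comb yields a comb. Grafting onto the root edge extends the spine at the top, and grafting onto an internal spine edge inserts a leaf mid-spine. Grafting onto the two pendant edges of the deepest cherry also preserves the comb. Hence at step $k\ge 2$, if the current tree is a comb, it remains one with probability at least $(k-1)\alpha/(k-\alpha)\to 1$. At $k\le 2$ every tree is a comb. There are only finitely many steps, so the product of these bounds tends to $1$, and $p(T_0)$ concentrates on combs.

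\textbf{Main obstacle.} I expect the $\alpha=0$ identification with Yule to be the delicate part, because of the labeling and exchangeability issue. The $\alpha=\tfrac12$ case is a clean induction via the prune/regraft bijection, and the $\alpha\to1$ case is a direct bound.
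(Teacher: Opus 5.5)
Your proof is correct, but it follows a more self-contained route than the paper. The paper proves only the order-independence claim itself. It does so through the path identity $p(T_{0:N})=\frac{1}{N!}\cdot 1\cdot 1\cdot 3^{-1}\cdots(2N-3)^{-1}=\frac{1}{N!}p_{\mathrm{PDA}}(T_0)$, and it delegates the Yule, PDA and comb correspondences to Ford's results on the alpha model \citep[Sec.~3.12]{Ford2005-gf}. You instead derive all four claims from the normalizer $k-\alpha$:
\begin{itemize}
\item the prune-and-suppress bijection at $\alpha=\tfrac12$, which also shows why each insertion order reaches a given $T_0$ along exactly one path (the paper's formula leaves this implicit);
\item the identification of $\alpha=0$ with the chain that splits a uniformly chosen leaf;
\item the bound $\prod_{k=2}^{N-1}\frac{(k-1)\alpha}{k-\alpha}\to 1$, using the fact that grafts onto non-pendant edges preserve combs.
\end{itemize}

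The paper's version is shorter. It is also stated directly in the path-level form (uniform paths given $T_0$) that is reused later for the noise schedule. Your argument gives that form immediately, since every step has probability $1/(2j-1)$ whichever label or edge is involved, so you should state it explicitly.

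Your version, in turn, exposes a point the citation hides. At $\alpha=0$ with a fixed insertion order, the first two taxa always straddle the root. So the labeled Yule claim holds only for tree shapes, or for a uniformly random insertion order as in the middle-out noising process. That caveat is correct and worth keeping.

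One small slip: $\prod_{j=1}^{k}(2j-1)^{-1}=1/(2k-1)!!$, not $1/(2k+1)!!$. The value $1/(2N-3)!!$ you state at $k=N-1$ is what the corrected formula gives.
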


\begin{proof}
This is Ford's alpha model~\citep{Ford2005-gf}.
To correspondence to Yule, PDA, and combs is given in \citep[Sec 3.12]{Ford2005-gf}.
Order-independence when $\alpha=1/2$ follows from the uniform probability of adding to any edge:
\[p(T_{0:N})=\frac 1 {N!}1\times 1\times 3^{-1}\times 5^{-1}\times\cdots\times(2N-3)^{-1}=\frac 1 {N!}p_\mathrm{PDA}(T).\qedhere\]
\end{proof}

\subsection{\Td{}}
\label{app:td_expressivity}

The Aldous beta model~\citep{Aldous_beta} generates a labeled
rooted tree by recursively splitting sets of taxa.
At a node containing $n\geq 2$ taxa $v_1, \dots, v_n$, the set is partitioned into two
non-empty groups; the probability of each labeled partition $\{A,B\}$ is
proportional to $B(|A|+\beta+1,\;|B|+\beta+1)$
for $\beta > -2$, where $B(\cdot, \cdot)$ is the beta function.
Each group is then split recursively until every group is a singleton.
When $\beta=0$ this gives the Yule distribution; when $\beta=-\tfrac{3}{2}$
it gives PDA; and as $\beta\to\infty$ it gives the ``almost completely
balanced'' distribution~\citep{Aldous2001-ec}.

In our top-down paradigm, we do not split all taxa at a node simultaneously.
Instead, taxa percolate down one at a time and all taxa at a given depth split into groups before any taxon proceeds to the next depth.
We imagine doing this by picking a random taxa and ``revealing'' how it split under the Aldous beta model.
We can reveal the direction of one taxa at a time by looking at the marginal likelihood: if we've revealed that $a$ taxa, call them $v_1, \dots, v_a$, have percolated into one group and $b$ taxa, call them $v_{n-b+1}, \dots, v_n$, have percolated into the other, what is the posterior probability that the next taxa joins the group with $a$ taxa?
We call this probability with $M=n-a-b$ taxa remaining
$$f(a,b\mid M)=\frac{p(v_{1:a+1}\in A,
    v_{n-b+1:n}\in B)}{p(v_{1:a}\in A,
    v_{n-b+1:n}\in B)}.$$
Note the order of $A$ and $B$ doesn't matter as $f(a, b\mid M)=1-f(b, a\mid M)$, and, when $a=b=0$, there is no distinction between $A$ and $B$.
The procedure is summarized in Alg.~\ref{alg:td-generate}.
\begin{algorithm}[H]
\caption{Single-step Aldous $\beta$}
\label{alg:td-generate}
\begin{algorithmic}
\STATE Initialize $T$ with a single root with $\texttt{leaves}(\texttt{root}) = \{v_1, \dots, v_N\}$
\STATE $d \gets 0$
\WHILE{some node at depth $d$ has $\geq 2$ taxa}
    \STATE $U_d \gets \{u \in T : d_T(u) = d,\ |\texttt{leaves}(u)| \geq 2\}$
    \WHILE{$\texttt{leaves}(u) \neq \emptyset$ for some $u \in U_d$}
        \STATE $u \sim \mathrm{Categorical}\left(|\texttt{leaves}(u)|/ \textstyle\sum_{u' \in U_d} |\texttt{leaves}(u')|\right)$
        \STATE $v \sim \mathrm{Uniform}(\texttt{leaves}(u))$
        \IF{$u$ has no children}
            \STATE create children $w, w'$ of $u$;\quad $\texttt{child} \gets w$
        \ELSE
            \STATE $w, w' \gets$ children of $u$;\quad
                   $a, b \gets |\texttt{leaves}(w)|, |\texttt{leaves}(w')|$
            \STATE $M \gets |\texttt{leaves}(u)|$
            \STATE $\texttt{child} \gets w$ with probability $f(a, b \mid M)$ else $\texttt{child} \gets w'$
        \ENDIF
        \STATE $\texttt{leaves}(\texttt{child}) \gets \texttt{leaves}(\texttt{child}) \cup \{v\}$
        \STATE $\texttt{leaves}(u) \gets \texttt{leaves}(u) \setminus \{v\}$
    \ENDWHILE
    \STATE $d \gets d + 1$
\ENDWHILE
\STATE \textbf{return} $T$
\end{algorithmic}
\end{algorithm}

To implement Alg.~\ref{alg:td-generate} we only need to know the values of $f$.

\begin{proposition}\label{prop:td_expressivity}
The probabilities~\eqref{eq:td_f_ratio} equal
\begin{equation}\label{eq:td_f}
    f(a,b\mid M)=\begin{cases}
      \dfrac{a+\beta+1}{a+b+2\beta+2},
        &a,b>0,\\[10pt]
      \dfrac{B(a+\beta+2,\beta+1)-B(n+\beta+1,\beta+1)}
            {B(a+\beta+1,\beta+1)-B(n+\beta+1,\beta+1)},
        &b=0,\;M>1,\\[10pt]
      0,
        &b=0,\;M=1,
    \end{cases}
\end{equation}
with the $a=0$ cases given by symmetry ($f(0,b)=1-f(b,0)$).
Every percolation order of the taxa is equally likely
conditioned on $T_0$.
\end{proposition}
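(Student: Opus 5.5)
The plan is to reduce everything to a de Finetti–style mixture representation of the Aldous beta split. Write $\pi_\beta$ for the $\mathrm{Beta}(\beta+1,\beta+1)$ density and $Z=B(\beta+1,\beta+1)$. Consider the process: draw $p\sim\pi_\beta$, then send each of the $n$ taxa independently to side $A$ with probability $p$ and to side $B$ otherwise.

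Under this process, a fixed labeled assignment with $k$ taxa in $A$ and $n-k$ in $B$ has probability
$$\int p^k(1-p)^{n-k}\pi_\beta(p)\,dp=\frac{B(k+\beta+1,\,n-k+\beta+1)}{Z}.$$
An unordered partition $\{A,B\}$ with both parts nonempty corresponds to two ordered assignments, and these have equal probability by the symmetry of the Beta function. Conditioning on the event that neither side is empty therefore gives exactly the Aldous beta law, with probability proportional to $B(|A|+\beta+1,|B|+\beta+1)$. I would first verify this carefully, including the normalizer
$$1-\frac{2B(n+\beta+1,\beta+1)}{Z}.$$
Once it holds, $f(a,b\mid M)$ is a ratio of probabilities of cylinder events under this conditioned mixture, and the normalizer cancels.

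Next I would compute the three cases of \eqref{eq:td_f} in turn.

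\textbf{Case $a,b>0$.} The nonemptiness condition is already guaranteed by the revealed taxa, so no conditioning term enters. The probability that $v_{1:a}\in A$ and $v_{n-b+1:n}\in B$ is proportional to $B(a+\beta+1,b+\beta+1)$. Taking the ratio with $a\to a+1$ and applying $B(x+1,y)=B(x,y)\,x/(x+y)$ gives $(a+\beta+1)/(a+b+2\beta+2)$.

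\textbf{Case $b=0$.} The sides are now labeled only by the group of $a$ taxa. The relevant event is that the $a$ taxa are in $A$ and at least one of the other $n-a$ taxa lands in $B$. Its unnormalized probability is
$$\int (p^a-p^n)\,\pi_\beta(p)\,dp\propto B(a+\beta+1,\beta+1)-B(n+\beta+1,\beta+1).$$
Replacing $a$ by $a+1$ in the numerator gives the stated formula. If $M=1$ and $b=0$, the single remaining taxon must go to $B$ to keep $B$ nonempty, so $f=0$; this also matches the formula, since its numerator then vanishes. The $a=0$ cases follow by swapping the roles of $A$ and $B$. When $a=b=0$, the first taxon simply defines the side $A$.

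\textbf{Order independence.} For a fixed final tree $T_0$, I would factor the probability of a percolation path into two parts.

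The first part is the selection of which taxon moves. At depth $d$, choosing node $u$ with probability proportional to $|\texttt{leaves}(u)|$ and then a uniform taxon within $u$ makes every remaining taxon at that depth equally likely. The product of these factors over a depth is therefore $1/K_d!$, where $K_d$ is the number of taxa at depth $d$ in nodes that still split, and this is the same for every order.

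The second part is the direction choices. At each node, the product of the $f$ factors telescopes, by the chain rule for the ratios in \eqref{eq:td_f_ratio}, to the Aldous beta probability of that node's final split, whatever order the taxa were revealed in.

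Hence $p(\text{path},T_0)=\prod_d (K_d!)^{-1}\,p_\beta(T_0)$, which does not depend on the order. The main obstacle I expect is the bookkeeping in the mixture step: the factor of two between ordered and unordered partitions, and making sure the $b=0$ conditioning event (the other side eventually nonempty) is the right one. I would check both against small $n$, such as $n=2$ and $n=3$.
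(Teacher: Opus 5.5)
Your computation is the paper's computation in probabilistic language. The paper expands $S(a,b,M)=\sum_j\binom{M}{j}B(a+j+\beta+1,\,b+M-j+\beta+1)$, inserts $B(s,t)=\int_0^1x^{s-1}(1-x)^{t-1}\,dx$, and sums with the binomial theorem; that is exactly integrating your Bernoulli$(p)$ cylinder probabilities against $p^\beta(1-p)^\beta\,dp$. Your order-independence factorization into $\prod_d 1/M_d!$ times the Aldous law is also the paper's.

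The genuine gap is that your mixture exists only for $\beta>-1$. The model is posed for $\beta>-2$, and the case the paper actually needs is PDA, $\beta=-3/2$, used for the top-down noise schedule. For $-2<\beta\le-1$ the function $p^\beta(1-p)^\beta$ is not integrable. So ``draw $p\sim\pi_\beta$'' has no meaning, and $Z=B(\beta+1,\beta+1)$ is not a convergent integral; its Gamma-function continuation $\Gamma(\beta+1)^2/\Gamma(2\beta+2)$ is actually $0$ at $\beta=-3/2$. Your normalizer $1-2B(n+\beta+1,\beta+1)/Z$ and the ``condition on both sides nonempty'' step therefore break down, and as written your argument does not cover PDA. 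The paper avoids normalizing by working with the unnormalized sums $S$ throughout, which is why its argument largely survives.

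The repair is never to normalize. Work with the $\sigma$-finite measure $\mu_\beta(dp)=p^\beta(1-p)^\beta\,dp$, and integrate only cylinder functions that force a taxon onto each side:
\begin{itemize}
\item $p^a(1-p)^b$ with $a,b\ge1$;
\item $p^a-p^n=p^a(1-p^M)\le Mp^a(1-p)$ with $a\ge1$.
\end{itemize}
Both have finite $\mu_\beta$-mass for every $\beta>-2$. Each equals the corresponding $S(a,b,M)$ termwise, since every term has $a+j\ge1$ and $b+M-j\ge1$. Then $f$ is simply a ratio of finite masses, and no conditioning is needed.

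For the closed form when $b=0$, do not split $\int_0^1x^{a+\beta}(1-x)^\beta(1-x^M)\,dx$ into two Beta integrals, since each diverges for $\beta\le-1$. The paper's own proof is loose at exactly this step. Instead use $1-x^M=(1-x)\sum_{i<M}x^i$ to get $\sum_{k=a}^{n-1}B(k+\beta+1,\beta+2)$, a sum of convergent integrals. Then telescope with the Gamma-function identity $B(x,y+1)=B(x,y)-B(x+1,y)$. This is valid for $\beta\neq-1$; at $\beta=-1$ the stated formula must itself be read as a limit.

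With this change, the rest of your proposal carries over unchanged: the $a,b>0$ ratio, the $M=1$ case, the $a=0$ symmetry, and the telescoping order-independence argument.
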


\begin{proof}

For a node with $n$ taxa where $a$ have percolated one direction and $b$ the other,
with $M=n-a-b$ remaining, define the unnormalized marginal likelihood
\begin{equation}\label{eq:td_S}
    p(v_{1:a}\in A,
    v_{n-b+1:n}\in B)\propto S(a,b,M):=\sum_{\substack{\{A,B\}:|A|\geq 1,|B|\geq 1\\
    v_{1:a}\in A,
    v_{n-b+1:n}\in B}}
    B(|A|+\beta+1,|B|+\beta+1).
\end{equation}
Since $|A|=a+j$ and $|B|=b+M-j$ for $j$ of the
remaining taxa joining~$A$, we can write
\begin{equation}\label{eq:td_S_sum}
    S(a,b,M)=\sum_{j}
    \binom{M}{j}B(a+j+\beta+1,\;b+M-j+\beta+1),
\end{equation}
where the sum runs over all $j$ for which $a+j\ge 1$ and $b+M-j\ge 1$.
The conditional probability that the next taxon goes to group with $a$ taxa is therefore
\begin{equation}\label{eq:td_f_ratio}
    f(a,b\mid M)=\frac{S(a+1,b,M-1)}{S(a,b,M)}.
\end{equation}
We evaluate $S(a,b,M)$ in closed form using the integral representation
$B(s,t)=\int_0^1 x^{s-1}(1-x)^{t-1}dx$ and the binomial theorem.

\paragraph{Case with $a,b>0$.}
When $a\geq 1$ and $b\geq 1$ the constraints $a+j\geq 1$ and $b+M-j\geq 1$ hold
for all $j\in\{0,\dots,M\}$, so from~\eqref{eq:td_S_sum}
\begin{align*}
    S(a,b,M)
    &=\int_0^1 x^{a+\beta}(1-x)^{b+\beta}
      \underbrace{\sum_{j=0}^M\binom{M}{j}x^j(1-x)^{M-j}}_{=1}dx
    =B(a+\beta+1,b+\beta+1).
\end{align*}
Therefore $S(a+1,b,M-1)=B(a+\beta+2,b+\beta+1)$ and
\[
    f(a,b)
    =\frac{B(a+\beta+2,b+\beta+1)}{B(a+\beta+1,b+\beta+1)}
    =\frac{a+\beta+1}{a+b+2\beta+2}.
\]

\paragraph{Case with $b=0$, $M\geq 1$.}
The constraint $b+M-j\geq 1$ forces $j\leq M-1$, excluding the
degenerate all-left split.  Writing $n=a+M$,
\begin{align*}
    S(a,0,M)
    &=\int_0^1 x^{a+\beta}(1-x)^{\beta}
      \left[\underbrace{\sum_{j=0}^M\binom{M}{j}x^j(1-x)^{M-j}}_{=1}
      -x^M\right]dx\\
    &=B(a+\beta+1,\beta+1)-B(n+\beta+1,\beta+1).
\end{align*}
Similarly $S(a+1,0,M-1)=B(a+\beta+2,\beta+1)-B(n+\beta+1,\beta+1)$.
Dividing gives the $b=0$ case of~\eqref{eq:td_f}.
Note when $M=1$ we have $a+1=n$ so $S(a+1,0,0)=B(n+\beta+1,\beta+1)-B(n+\beta+1,\beta+1)=0$.

\paragraph{Order independence.}
In the notation of Alg.~\ref{alg:td-generate}, calling $M_d=\sum_{u\in U_d}n_{\texttt{leaves}}(u)$, and $a_u, b_u$ the number of leaves in $u$'s children for every node $u$,
it is easy to see that, by construction, the probability of the path is the probabilities of the percolation orderings times the probabilities from Aldous' beta model:
$$p(T_{0:M})=\prod_{d=0}^{d_{\max{}}} \frac{1}{M_d!}\prod_{u\in U_d}\frac{B(a_u+\beta+1, b_u+\beta+1)}{B(\beta+1, \beta+1)-2B(a_u+b_u+\beta+1, \beta+1)}=\left(\prod_{d=0}^{d_{\max{}}} \frac{1}{M_d!}\right)\times p(T_0).$$
The first term $\prod_{d=0}^{d_{\max{}}-1} \frac{1}{M_d!}$ is $p(T_{1:M}\mid T_0)$ which is therefore independent of the order of the percolation.
\end{proof}
\section{Discrete diffusion models}
\label{app:diffusion}

\subsection{Background}

Discrete diffusion models have been recently highlighted for flexibly fitting complex distributions of discrete objects.
However previous diffusion models have largely been built to model data of the form $\mathrm{tokens}^\mathrm{length}$ like language~\citep{Austin2021-dg, Campbell2022-zm, Shi2024-fs}.
To apply these models to graphs therefore, one could simply model their adjacency matrices $A\in\{0, 1\}^{N\times N}$~\citep{Vignac2022-du, Bohde2025-ok}.
Modeling phylogenetic trees as abstract graphs is problematic however as (1) internal nodes of the tree are latent, so there is no ground truth adjacency matrix $A$, and (2) trees are highly constrained structures, and there is no clear way to incorporate these constraints during generation.
Our models avoid these issues by building a diffusion model directly on the space of trees.

Discrete diffusion models have also succeeded at modeling other sorts of discrete objects, like permutation processes~\citep{Zhang2024-en}, and sequences of different lengths~\citep{Havasi2025-yx, Baron2026-mj}.
However, these methods do not map to modeling trees.

\subsection{Derivation of discrete diffusion ELBO}
\label{app:elbo}

\[
q_\theta(T_0 \mid X_{1:N}) =
\sum_{T_1, T_2, \ldots, T_M} \prod_{m=1}^M q_\theta(T_{m-1} \mid T_m, X_{1:N})
\]

Here, we provide the derivation of the discrete diffusion ELBO in Eqn.~\ref{eqn:elbo}. We start by taking the logarithm of the expression for the likelihood in Eqn.~\ref{eqn:likelihood}. Then, we multiply and divide each term by $p\left(T_{1:M} \mid T_0 \right)$, yielding

\begin{align*}
\log q_\theta(T_0 \mid X_{1:N})
&=
\log \sum_{T_{1:M}} \, p\left(T_{1:M} \mid T_0 \right) \frac{\prod_{m=1}^M q_\theta(T_{m-1} \mid T_m, X_{1:N})}{p\left(T_{1:M} \mid T_0 \right)} \\
&= \log \mathbb{E}_{T_{1:M} \sim p\left(T_{1:M} \mid T_0 \right)} \left[\prod_{m=1}^M \frac{q_\theta(T_{m-1} \mid T_m, X_{1:N})}{p(T_{m-1} \mid T_m, T_0)} \right] \\
&\geq
\mathbb{E}_{T_{1:M} \sim p\left(T_{1:M} \mid T_0 \right)} \left[\sum_{m=1}^M \log \frac{q_\theta(T_{m-1} \mid T_m, X_{1:N})}{p(T_{m-1} \mid T_m, T_0)} \right] \\
&=
\sum_{m=1}^M \mathbb{E}_{T_{m-1}, T_m \sim p\left(T_{m-1}, T_m \mid T_0\right)} \left[\log \frac{q_\theta(T_{m-1} \mid T_m, X_{1:N})}{p(T_{m-1} \mid T_m, T_0)} \right] \\
&=
- \sum_{m=1}^M \mathbb{E}_{T_m \sim p\left(T_m \mid T_0 \right)} \left[ D_{\text{KL}}(p(T_{m-1} \mid T_m, T_0) \, || \, q_\theta(T_{m-1} \mid T_m, X_{1:N})) \right] \\
&=- M\mathbb E_{m\sim\mathrm{Unif}(1, \dots, M), T_m \sim p\left(T_m \mid T_0 \right)} \left[ D_\textrm{KL}(p(T_{m-1} \mid T_m, T_0) \, || \, q_\theta(T_{m-1} \mid T_m, X_{1:N})\right].
\end{align*}
We used the fact that $p(T_M\mid T_0)$ and $q(T_M\mid X_{1:N})$ are both assumed to have all their mass on a unique starting position;
    for \td{} this has all the taxa at the root,
    for \bu{} this has all taxa disconnected,
    and for \mo{} the tree has only the pseudo-taxon.

\section{Additional information on paradigms}

For each paradigm we describe the noising schedule, how to calculate the target $p(T_{m-1}\mid T_m, T_0)$, and how to sample from our mutual-information informed noising schedule.

\subsection{Noise schedule details}
\label{app:noise_schedule}

The ELBO in Eqn.~\ref{eqn:elbo} requires computing
$p(T_{m-1} \mid T_m, T_0)$, the distribution over the
previous state given the current noised state and the
ground truth.
We achieve this by introducing a noise schedule $\pi(T_m \mid T_0)$ and rewriting the ELBO as $\mathbb E_{p(T_0, X_{1:N})}\log q_\theta(T_0 \mid X_{1:N})\geq$
\begin{align}
- \mathbb{E}_{T_m \sim \pi\left(\cdot \mid T_0 \right)} \left[ w(T_m) \cdot D_\textrm{KL}(p(T_{m-1} \mid T_m, T_0) \, \| \, q_\theta(T_{m-1} \mid T_m, X_{1:N}))\right]
\end{align}
where
\begin{equation}
w(T_m) = \frac{M \cdot p(T_m \mid T_0)}{\pi(T_m \mid T_0)}.
\end{equation}
A classic heuristic for designing discrete diffusion noise schedules is to choose $\pi$ to minimize the variance of this Monte Carlo estimator, assuming $q_\theta$ has converged to the true reverse process~\citep{Kingma2021-za, Austin2021-dg}.
To make this analytically tractable, we further assume a simple reference posterior $p(T_0 \mid X_{1:N})$.
A natural choice is any of the prior distributions described in Sec.~\ref{sec:expressivity} whose reverse process we can compute in closed form; we use $p_\mathrm{PDA}(T_0)$ as a concrete example throughout, though any such prior works equally well.

Under this assumption, the variance-minimizing $\pi$ sets $w( T_m) \cdot D_\mathrm{KL}(\cdots)$ to a constant across $(m, T_m)$, giving
\begin{equation}\label{eq:pi-optimal}
\pi(T_m \mid T_0) \propto M \cdot p(T_m \mid T_0) \cdot D_\textrm{KL}(p(T_{m-1} \mid T_m, T_0) \, \| \, p_\mathrm{PDA}(T_{m-1}\mid T_m)),
\end{equation}
where $p_\mathrm{PDA}(T_{m-1}\mid T_m)=\mathbb E_{T_0\sim p_{\mathrm{PDA}(T_0)}}p(T_{m-1} \mid T_m, T_0)$.
The normalizing constant of $\pi$ is
\begin{equation}\label{eq:zpi}
\sum_{m=1}^{M} \sum_{T_m} M \cdot p(T_m \mid T_0, m) \cdot D_\textrm{KL}(p(T_{m-1} \mid T_m, T_0) \, \| \, p_\mathrm{PDA}(T_{m-1}\mid T_m)) = -M \log p_\mathrm{PDA}(T_0),
\end{equation}
which is $-M\log(2N-3)!!$.
Under this choice,
$$w(T_m)=\frac{\log p_\mathrm{PDA}(T_0)}{D_\textrm{KL}(p(T_{m-1} \mid T_m, T_0) \, \| \, p_\mathrm{PDA}(T_{m-1}\mid T_m))}.$$

Marginalizing over $T_0 \sim p_\mathrm{PDA}$, the quantity $p(T_m \mid T_0) \cdot D_\textrm{KL}(p(T_{m-1} \mid T_m, T_0) \, \| \, p_\mathrm{PDA}(T_{m-1}\mid T_m))$ becomes the mutual information $\mathrm{MI}(T_{m-1};\, T_0 \mid T_m)$: the amount of information about $T_0$ that is lost at step $m$.
This mirrors the mutual information schedules of \citet{Austin2021-dg} and concentrates training on steps where the model must add the most information about $T_0$ in a single step.

To implement this schedules, we need two things:
\begin{itemize}
    \item The form of $p_\mathrm{PDA}(T_{m-1}\mid T_m)$ to calculate $w(T_m)$.
    \item A procedure to sample from $\pi(T_m \mid T_0)$.
\end{itemize}
Sampling will be handled below but note App.~\ref{app:expressivity} gives us the first goal:
    we've already built procedures $q$ to sample sequences of trees such that $\prod_{m=0}^{M-1} q(T_{m-1}\mid T_m)= p_{\mathrm{PDA}}(T_0)\times p_{\mathrm{noising}}(T_{1:M}\mid T_0)$.
    Therefore, these $p(T_{m-1}\mid T_m)$ must be $p_\mathrm{PDA}(T_{m-1}\mid T_m)$ as we will see below.

\subsection{\Bu{}}
\label{app:rb_bu}

\subsubsection{Distribution over paths}

It will be convenient to derive the distribution over paths $p(T_1, \dots, T_{N-1}\mid T_0)$.
The path $(T_m)_m$ is characterized by an order of internal vertices which are removed from $T_0$, $v_1, \dots, v_{N-1}$;
on the other hand every ordering of the internal vertices of $T_0$ $v_1, \dots, v_{N-1}$ corresponds to a unique path as long as the ancestors of every node appear before it in the ordering.
An ordering of the internal vertices that satisfies this rule is known in the literature as a \textit{linear extension} of the subtree of $T_0$ made of its internal nodes, \texttt{int}($T_0$).
We call $\mathrm{LE}(T)$ the set of linear extensions of a tree $T$.
We now show our choice of noising process makes \textit{every linear extension equally likely}; this is the property that will allow us to derive the quantities we need below.

\begin{proposition}
    The probability of a path $(T_m)_m$ that removes vertices in order $v_1, \dots, v_{N-1}$ is
    $$p(T_1, \dots, T_{N-1}\mid T_0)=1/|\mathrm{LE}(\texttt{int}(T_0))|.$$
\end{proposition}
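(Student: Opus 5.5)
The plan is to compute the probability of a fixed admissible removal order $v_1,\dots,v_{N-1}$ directly as a product of the one-step noising probabilities, and show that it matches the formula already obtained for $\gamma=1$ in Prop.~\ref{prop:gamma_coalescent}.

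At step $m$ (producing $T_m$ from $T_{m-1}$), the exposed internal nodes are exactly the roots of the nontrivial components of the current forest. Write $\delta(v)=|\texttt{descendants}_{\texttt{int}(T_0)}(v)|$ for the number of internal descendants of $v$. Then the number of descendants of $v$ in $T_0$, not counting $v$ itself, is $2\delta(v)+1$: the subtree rooted at $v$ has $\delta(v)+1$ internal nodes and hence $\delta(v)+2$ leaves. The removal weight is therefore an affine function of $\delta(v)$. The key observation concerns the normalizer $Z_m=\sum_{u\text{ exposed}}(2\delta(u)+1)$. It is the total number of non-root vertices, internal and leaf, lying in components that still contain an internal node. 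Since every leaf lies under some exposed node until it is isolated, $Z_m$ counts all remaining non-isolated, non-exposed vertices. This quantity depends only on $m$, not on which nodes were removed earlier. Concretely, $Z_m=(2N-1)-(\text{number of vertices already exposed or removed})$, and this drops by exactly $2$ at each step, because each removal exposes two children. So $Z_m=2N-2m+1-\dots$, and I will pin down the exact constant by a careful count, for example $Z_1=2N-2$ (all vertices except the root).

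With the normalizer known, the path probability factorizes as
$$\prod_{m}\frac{2\delta(v_m)+1}{Z_m}.$$
The numerator is $\prod_{v\in\texttt{int}(T_0)}(\cdots)$, which is independent of the order, and the denominator is also order-independent. Hence all linear extensions receive the same probability. Since each admissible order corresponds to exactly one path, and the path probabilities sum to $1$, each must equal $1/|\mathrm{LE}(\texttt{int}(T_0))|$.

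This last step uses only the fact that the path probabilities are constant; the hook length formula is not needed. I could still cross-check the result against it as a sanity check.

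The main obstacle is the bookkeeping for the normalizer: precisely what "descendants, not counting node" means, and showing rigorously that $Z_m$ is a deterministic function of $m$. I will verify it first on a small tree, such as a comb with $N=3$. If the weight instead turns out to be proportional to internal descendants only, the same argument still goes through, as long as the normalizer is path-independent. That is the property I will check.
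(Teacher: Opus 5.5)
Your proposal is correct and is essentially the paper's proof. The path probability is $\prod_{v}|\texttt{descendants}(v)|$ divided by the normalizer $\prod_{m=0}^{N-2}(2N-2-2m)$, which depends only on the step count, so all linear extensions are equally likely; the paper likewise treats the hook length formula only as a cross-check. One arithmetic fix: $v$ has $(\delta(v)+1)+(\delta(v)+2)-1=2\delta(v)+2$ descendants, not $2\delta(v)+1$, and it is exactly this all-vertex count that gives $Z_1=2N-2$ and makes your drop-by-two argument work. Your internal-descendants fallback would not go through, because the normalizer would then depend on how many exposed children are leaves.
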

\begin{proof}
Recall for a root of a subtree $v$,
$p(\texttt{remove}\ v\mid T_m)\propto|\texttt{descendants}(v)|,$
excluding $v$ as a descendant.
The denominator is the sum of the descendants of all roots, so it's simply the number of nodes in the tree, minus the roots: $2N-2(m+1)$.
Finally,
\begin{equation*}
\begin{aligned}
    p(T_1, \dots, T_{N-1}\mid T_0)=&\prod_{m=0}^{N-2}p(\texttt{remove}\ v\mid T_m)\\
    =&\prod_{m=0}^{N-2}\frac{|\texttt{descendants}(v_{m+1})|}{2N-2(m+1)}\\
    =&\frac{\prod_{v\in \texttt{int}(T_0)}|\texttt{descendants}(v)|}{2^{N-1}(N-1)!}
\end{aligned}
\end{equation*}
which does not depend on the order of the vertices.
One can verify this also matches the classical hook length formula by noting $|\texttt{descendants}(v)|$ is twice the number of descendants in $\texttt{int}(T_0)$ plus $1$, $|\texttt{descendants}_{\texttt{int}(T_0)}(v)|+1$, and that $|\mathrm{LE}(\texttt{int}(T_0))|=|\texttt{int}(T_0)|!/\prod_{v\in \texttt{int}(T_0)}(|\texttt{descendants}_{\texttt{int}(T_0)}(v)|+1)$.
\end{proof}

\subsubsection{Calculating the target}

We can now derive $p(T_{m-1}\mid T_m, T_0)$.
The set of removed nodes $\{v_1, \dots, v_m\}$ forms a rooted subtree of $\texttt{int}(T_0)$ containing the root; call it $R_m$.
Its leaves are exactly the candidates for the last node removed.
Since all linear extensions of $\texttt{int}(T_0)$ are equally likely, the probability that node $v$ was removed last is simply the fraction of linear extensions of $R_m$ that end in $v$:
$p(T_{m-1} = \texttt{add}(T_m,v)\mid T_m, T_0) = \frac{|\mathrm{LE}(\texttt{remove}(R_m, v)|}{|\mathrm{LE}(R_m)|}.$

\begin{proposition}
    For a leaf $v$ of $R_m$,
    \begin{equation}
        p(T_{m-1} = \texttt{add}(T_m,v)\mid T_m, T_0) = \frac{1}{m}\prod_{u\in\texttt{ancestors}(v, R_m)}\frac{|\texttt{descendants}_{R_m}(u)|+1}{|\texttt{descendants}_{R_m}(u)|},
        \label{eq:bu_reverse}
    \end{equation}
    where $\texttt{descendants}_{R_m}(u)$ counts the descendants of $u$ in $R_m$, not including $u$ itself.
\end{proposition}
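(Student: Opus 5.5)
We start from the identity stated just before the proposition: since every linear extension of $\texttt{int}(T_0)$ is equally likely, the probability that $v$ was the last node removed is
\[
p(T_{m-1}=\texttt{add}(T_m,v)\mid T_m,T_0)=\frac{|\mathrm{LE}(\texttt{remove}(R_m,v))|}{|\mathrm{LE}(R_m)|}.
\]
This ratio counts linear extensions of $R_m$ ending in $v$. The continuation beyond $R_m$ contributes the same factor whether or not $v$ comes last, so it cancels. The plan is to evaluate both counts with the hook length formula for rooted forests already used above:
\[
|\mathrm{LE}(R)|=\frac{|R|!}{\prod_{u\in R}h_R(u)},\qquad h_R(u)=|\texttt{descendants}_R(u)|+1 .
\]
$R_m$ has $m$ nodes and $\texttt{remove}(R_m,v)$ has $m-1$. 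The factorials therefore give $(m-1)!/m!=1/m$, which is the prefactor in \eqref{eq:bu_reverse}.

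Next we compare the hook products. Removing a leaf $v$ of $R_m$ deletes its own hook, which equals $1$ because $v$ has no descendants in $R_m$. It also decreases by one the descendant count of every ancestor $u$ of $v$ in $R_m$, and leaves every other node's count unchanged. So
\[
\frac{\prod_{u\in R_m}h_{R_m}(u)}{\prod_{u\in \texttt{remove}(R_m,v)}h_{\texttt{remove}(R_m,v)}(u)}=\prod_{u\in\texttt{ancestors}(v,R_m)}\frac{|\texttt{descendants}_{R_m}(u)|+1}{|\texttt{descendants}_{R_m}(u)|}.
\]
Multiplying by $1/m$ gives exactly \eqref{eq:bu_reverse}. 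The denominators are nonzero because every ancestor of $v$ has $v$ as a descendant.

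The main point needing care is the first reduction. We must check that conditioning on $T_m$ fixes exactly the set $R_m$ of the first $m$ removed nodes and nothing more. We must also check that the number of full linear extensions of $\texttt{int}(T_0)$ with a prescribed prefix order on $R_m$ does not depend on that order. It equals the number of linear extensions of the forest $\texttt{int}(T_0)\setminus R_m$, because $R_m$ is an ancestor-closed set containing the root. Given the uniformity proposition above, this is routine, and the rest is the hook length bookkeeping.
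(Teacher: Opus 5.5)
Your proposal is correct and matches the paper's proof. Both compute the ratio $|\mathrm{LE}(\texttt{remove}(R_m,v))|/|\mathrm{LE}(R_m)|$ with the hook length formula, getting the $1/m$ from the factorials and the ancestor factors from the decremented hooks. Your extra check also helps: conditioning on $T_m$ fixes only the set $R_m$, and the number of completions does not depend on the order of that prefix. This justifies the reduction that the paper asserts without proof just before the proposition.
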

\begin{proof}
    By the hook length formula for rooted trees,
    $|\mathrm{LE}(R_m)| = m!/\prod_{u\in R_m}(|\texttt{descendants}_{R_m}(u)|+1).$
    Removing a leaf $v$ from $R_m$ decreases the numerator from $m!$ to $(m-1)!$, removes the factor $|\texttt{descendants}_{R_m}(v)|+1=1$ from the denominator, and decrements $|\texttt{descendants}_{R_m}(u)|$ by $1$ for every strict ancestor $u$ of $v$.
    Taking the ratio gives Eqn.~\ref{eq:bu_reverse}.
\end{proof}
This can be calculated with a post-order traversal of $R_m$ to compute $|\texttt{descendants}_{R_m}(v)|$ for all $v$, then a pre-order pass to accumulate the product over ancestors.
The cost of computing $p(T_{m-1}\mid T_m, T_0)$ for every possible $T_{m-1}$ is therefore $O(m)$.

\subsubsection{Noise schedule}

\paragraph{The PDA backwards steps}

By Prop.~\ref{prop:gamma_coalescent}, the PDA coalescent ($\gamma = 1$) also produces uniform linear extensions conditioned on $T_0$.
Therefore the reverse model $p_\mathrm{PDA}(T_{m-1} \mid T_m)$ merges subtree roots $u, w$ in the forest $T_m$ with probability proportional to $n_\mathrm{leaves}(u) + n_\mathrm{leaves}(w) - 1$.

\paragraph{Sampling from $\pi$}

We write
\begin{equation*}
    \begin{aligned}
        \pi(T_m\mid T_0) \propto& p(T_m\mid T_0)D_\mathrm{KL}(p(T_{m-1}\mid T_m, T_0)\Vert p_\mathrm{PDA}(T_{m-1}\mid T_m))\\
        =& \sum_{\tilde T_{m-1}}p(T_m, \tilde T_{m-1}\mid T_0)\log \frac{p(\tilde T_{m-1}\mid  T_m, T_0)}{p_\mathrm{PDA}(\tilde T_{m-1}\mid  T_m)}\\
        =&\sum_{\tilde T_{1:M}\mid \tilde T_m=T_m}p(\tilde T_{1:M}\mid T_0)\log \frac{p(\tilde T_{m-1}\mid \tilde T_m, T_0)}{p_\mathrm{PDA}(\tilde T_{m-1}\mid \tilde T_m)}\\
        =&:\sum_{\tilde T_{1:M}\mid \tilde T_m=T_m}p(\tilde T_{1:M}\mid T_0)\tilde q(m\mid \tilde T_{0:M}).
    \end{aligned}
\end{equation*}
It turns out that $\tilde q(m\mid \tilde T_{0:M})$ is an unnormalized distribution  with a fixed normalizing constant as
$$\sum_{m=1}^M \log \frac{p(\tilde T_{m-1}\mid \tilde T_m, T_0)}{p_\mathrm{PDA}(\tilde T_{m-1}\mid \tilde T_m)}=\log \frac{p( \tilde T_{1:M}\mid T_0)}{p_\mathrm{PDA}(T_0, \tilde T_{1:M})}=-\log p_\mathrm{PDA}(T_0).$$
Therefore, calling $ q(m\mid \tilde T_{1:M})=\tilde q(m\mid \tilde T_{1:M})/\log(2N-3)!!$
we can simply sample from $\pi(T_m\mid T_0)$ by simple sampling a noising path $T_{1:M}\sim p(\tilde T_{1:M}\mid T_0)$
and selecting $m\sim q(m\mid T_{1:M})$.

\sloppy 
The wrinkle is that $q(m\mid T_{1:M})$ need not be positive!
The solution is to instead rejection sample from the proposal distribution
$$\hat \pi(T_m\mid T_0) \propto p(T_m\mid T_0)\mathbb E_{p(T_{m-1}\mid T_m, T_0)} [-\log p_\mathrm{PDA}(T_{m-1}\mid T_m)].$$
With a bit of algebra, one can show that $\tilde \pi(T_m\mid T_0)$ has normalizing constant $-\mathbb E_{p(T_{1:M}\mid T_0)}\log p_\mathrm{PDA}(T_{1:M}, T_0)=\log(2N-3)!!+\log|\mathrm{LE}(\texttt{int}(T_0))|$ and, as above,
\begin{equation*}
    \begin{aligned}
        \tilde \pi(T_m\mid T_0)
        =&\sum_{\tilde T_{1:M}\mid \tilde T_m=T_m}p(\tilde T_{1:M}\mid T_0)\hat q(m\mid \tilde T_{0:M}),
    \end{aligned}
\end{equation*}
where $\hat q(m\mid T_{0:M})=-\log p_\mathrm{PDA}(T_{m-1}\mid T_m)/(\log(2N-3)!!+\log|\mathrm{LE}(\texttt{int}(T_0))|).$
This gives Alg.~\ref{alg:bu-rejection}.

\begin{algorithm}[H]
\caption{Importance sampling the \bu{} noise schedule}
\label{alg:bu-rejection}
\begin{algorithmic}
\REQUIRE Tree $T_0$ with $N$ leaves
\REPEAT
    \STATE Draw a noising path $T_{1:M}\sim p(T_{1:M}\mid T_0)$
    \STATE Draw $m \in \{1, \dots, N-1\}$ with probability $q(m\mid T_{0:M}) \propto -\log p_\mathrm{PDA}(T_{m-1} \mid T_m)$
    \STATE Compute acceptance ratio $\rho \leftarrow \frac{D_\mathrm{KL}\!\left(p(T_{m-1} \mid T_m, T_0)\,\Vert\, q_\mathrm{PDA}(T_{m-1} \mid T_m)\right)}{\mathbb E_{p(T_{m-1} \mid T_m, T_0)}\left[ -\log q_\mathrm{PDA}(T_{m-1} \mid T_m)\right]}$
    \STATE Accept with probability $\rho$
\UNTIL{accepted}
\STATE \textbf{return} $T_m$
\end{algorithmic}
\end{algorithm}

Is this an efficient way to sample?
The average acceptance probability is $\mathbb E_{\hat \pi}\rho=\frac{\log(2N-3)!!}{\log(2N-3)!!+\log|\mathrm{LE}(\texttt{int}(T_0))|}\geq \frac{\log(2N-3)!!}{\log(2N-3)!!+\log(N-1)!}\geq 1/2.$
This lower bound shows we shouldn't expect to reject, on average, more than once per sample.

\subsection{\Mo{}}
\label{app:rb_mo}

\subsubsection{Distribution over paths}

The noising process removes one taxon at each step, selecting uniformly among the $N - m $ taxa present in $T_{m}$.
A path from $T_0$ to $T_{N}$ is a sequence of $N$ distinct taxa removed at each step.
The fully noised state $T_{N}$ is a tree containing only the pseudo-taxon.
Clearly we have

\begin{proposition}\label{prop:mo-uniform-paths}
Every removal sequence has the same probability:
$p(T_{1:N} \mid T_0) = 1/{N!}.$
\end{proposition}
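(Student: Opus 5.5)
The plan is to show that a noising path is the same thing as an ordering of the taxa, and then multiply the per-step probabilities along the chain.

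\textbf{Step 1: paths are orderings of the taxa.} Starting from $T_0$, each step $m\to m+1$ prunes one taxon still present in $T_m$. Its two neighbouring edges are then collapsed, which is a deterministic operation. So $T_{m+1}$ is determined by $T_m$ together with the identity of the removed taxon. Conversely, two different choices of taxon at the same step give different trees, because the leaf sets differ. Hence the map from removal sequences $(u_1,\dots,u_N)$ of distinct taxa to paths $T_{1:N}$ is a bijection. The pseudo-taxon is never removed, so every path ends at the common state $T_N$ containing only the pseudo-taxon. The number of such sequences is therefore the number of orderings of the $N$ taxa, namely $N!$.

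\textbf{Step 2: multiply along the Markov chain.} By the Markov property of the noising process,
\[
p(T_{1:N}\mid T_0)=\prod_{m=0}^{N-1}p(T_{m+1}\mid T_m).
\]
At step $m$ the tree $T_m$ contains $N-m$ taxa. The kernel $p_\text{MO}$ picks one of them uniformly, so each factor equals $1/(N-m)$. This holds whichever taxon was actually removed, because only the count $N-m$ enters the probability. The product is therefore
\[
\prod_{m=0}^{N-1}\frac{1}{N-m}=\frac{1}{N!},
\]
which is independent of the order and of $T_0$.

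\textbf{Main obstacle: indexing.} The only delicate point is reconciling indices. The paper writes the kernel as $p_\text{MO}(T_m=\texttt{prune}(T_{m-1},v)\mid T_{m-1})=1/(N-m+1)$, which is the same count $N-(m-1)$ of taxa in $T_{m-1}$ as used above. With that matched, the product runs over $N,N-1,\dots,1$.

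Finally, the factors sum to one over the $N!$ orderings of Step 1. This confirms the statement and shows that conditioned on $T_0$ the path is a uniformly random permutation of the taxa. That uniformity is what later gives the uniform reverse target over consistent attachments.
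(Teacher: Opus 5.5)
Your proof is correct and follows the paper's argument: the paper just says ``clearly'', because each step removes one of the $N-m$ remaining taxa uniformly, so the path probability is $\prod_{m=0}^{N-1}\frac{1}{N-m}=\frac{1}{N!}$. Your extra observation that different taxon choices give different leaf sets is a useful detail the paper leaves implicit. It ensures each transition probability $p(T_{m+1}\mid T_m)$ is a single uniform factor rather than a sum.
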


\subsubsection{Calculating the target}

At state $T_m$, the tree contains $N - m$ taxa and the pseudo-taxon.
To reverse one noising step, we reattach one of the $m$ pruned taxa to the appropriate edge of $T_m$.
For each pruned taxon there is exactly one edge in $T_m$ such that grafting produces a tree consistent with $T_0$.

\begin{proposition}\label{prop:mo-reverse}
The reverse distribution is uniform over the $m$ valid re-attachments:
\[
p(T_{m-1} \mid T_m, T_0) = \frac{1}{m}.
\]
\end{proposition}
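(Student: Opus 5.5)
The plan is to compute the reverse conditional from the uniform path law by Bayes' rule, using Prop.~\ref{prop:mo-uniform-paths}. The state $T_m$ is determined by $T_0$ together with the \emph{set} $S_m$ of $m$ taxa removed so far. Pruning a set of taxa from $T_0$ and collapsing the degree-two vertices does not depend on the order of pruning, so $T_m$ is simply $T_0$ restricted to the remaining taxa plus the pseudo-taxon.

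\textbf{Step 1: the law of $T_m$ given $T_0$.} Each removal sequence has probability $1/N!$, so every $m$-subset $S_m$ is equally likely. Its probability is $p(T_m\mid T_0)=m!(N-m)!/N!$, which counts the orderings of the first $m$ and last $N-m$ removals.

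\textbf{Step 2: the joint law of consecutive states.} For each $v\in S_m$, the state $T_{m-1}$ obtained by grafting $v$ back corresponds to the set $S_m\setminus\{v\}$. The joint probability is
\[
p(T_{m-1},T_m\mid T_0)=\frac{(m-1)!\,(N-m)!}{N!},
\]
since the first $m-1$ removals are any ordering of $S_m\setminus\{v\}$, the $m$-th removal is $v$, and the remaining removals are any ordering.

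\textbf{Step 3: Bayes' rule.} Dividing the joint law by the marginal gives $p(T_{m-1}\mid T_m,T_0)=1/m$ for each of the $m$ candidates. Any other $T_{m-1}$ receives probability zero.

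The remaining check, and the only mildly delicate point, is that these $m$ candidates are exactly the ``valid re-attachments'' in the statement and are pairwise distinct. For each pruned taxon $v$ there is a unique edge of $T_m$ where grafting yields $T_0|_{(\text{remaining})\cup\{v\}}$: restrict $T_0$ to the remaining taxa plus $v$, and the edge of $T_m$ that was subdivided by $v$'s attachment point is determined. Grafting different taxa produces trees with different leaf sets, so the candidates are distinct. Hence the reverse distribution is uniform over the $m$ valid re-attachments.
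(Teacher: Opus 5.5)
Your proposal is correct and follows the paper's approach: the paper's proof just says the result follows from the uniform path law of Prop.~\ref{prop:mo-uniform-paths}. Your Bayes'-rule computation (marginal $m!(N-m)!/N!$, joint $(m-1)!(N-m)!/N!$, ratio $1/m$), together with the check that the $m$ candidates are distinct and are exactly the valid re-attachments, supplies the details that one-line proof leaves implicit.
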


\begin{proof}
Follows from Prop.~\ref{prop:mo-uniform-paths}.
\end{proof}

\subsubsection{Noise schedule}
\label{app:mo_noise_schedule}

\paragraph{The PDA backwards steps}
By Prop.~\ref{prop:mo_expressivity}, the PDA Ford alpha model ($\alpha = 1/2$) produces uniform paths $T_{1:M}$ conditioned on $T_0$.
Therefore the reverse model $p_\mathrm{PDA}(T_{m-1} \mid T_m)$ picks a taxon to add and branch in the tree uniformly at random:
$$p_\mathrm{PDA}(T_{m-1} \mid T_m) = \frac{1}{m\left(2(N - m) - 1\right)}.$$

\paragraph{Sampling from $\pi$}

As for \bu{} we can write
$$\pi(T_m\mid T_0)
        =\sum_{\tilde T_{1:M}\mid \tilde T_m=T_m}p(\tilde T_{1:M}\mid T_0)q(m\mid T_0)$$
where $q(m\mid T_0)= \frac{\log \frac{p(\tilde T_{m-1}\mid \tilde T_m, T_0)}{p_\mathrm{PDA}(\tilde T_{m-1}\mid \tilde T_m)}}{\log(2N-3)!!}.$
In this case, for $m=1, \dots, N-1$, $q(m\mid T_0)=\frac{\log(2(N-m)-1)}{\log(2N-3)!!}$ is positive, so it's a probability distribution.
Therefore we can sample from $\pi$ by sampling a noising path and selecting a step according to $q$.
This gives Alg.~\ref{alg:mo-noise}.

\begin{algorithm}[H]
\caption{Importance sampling the \mo{} noise schedule}
\label{alg:mo-noise}
\begin{algorithmic}
\REQUIRE Tree $T_0$ with $N$ leaves
    \STATE Draw a noising path $T_{1:M}\sim p(T_{1:M}\mid T_0)$
    \STATE Draw $m \in \{1, \dots, N-1\}$ with probability $q(m\mid T_{0:M}) \propto \log(2(N-m)-1)$.
\STATE \textbf{return} $T_m$
\end{algorithmic}
\end{algorithm}

\subsection{\Td{}}

\subsubsection{Distribution over paths}

The noising process percolates taxa $v$ upward from depth $d_{T_0}(v)$ to the root, one level at a time.
Since only taxa at maximal depth are eligible, the process decomposes into \emph{rounds}: round~$d$ (for $d = d_{\max}, d_{\max}-1, \dots, 1$, where $d_{\max}$ is the maximum leaf depth in $T_0$) moves every taxon currently at depth~$d$ up to depth~$d-1$.

Let $V_d = \{u \in \texttt{int}(T_0) : d_{T_0}(u) \geq d\}$ be the taxa that are percolated upwards in round~$d$.
At the start of round~$d$, all of these taxa are at depth~$d$: those originally at depth exactly~$d$ have remained there, while those originally deeper have been brought to depth~$d$ by earlier rounds.
Within round~$d$, the taxa are moved from depth~$d$ to $d-1$ one at a time, each selected uniformly from the taxa still at depth~$d$.

\begin{proposition}\label{prop:td-uniform-paths}
Every sequence $T_{1:M}$ has the same probability:
$p(T_{1:M} \mid T_0) = 1/\!\prod_{d=1}^{d_{\max}} |V_d|!.$
\end{proposition}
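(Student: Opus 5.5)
The plan is to write the path probability as a product of one-step noising probabilities and group the factors by round. Fix $T_0$ and any valid noising path $T_{1:M}$. By the definition of $p_\text{TD}$, each step moves a taxon of maximal current depth up one level, chosen uniformly among the taxa at that depth. The key structural fact is that the current maximal depth $d$ stays constant throughout round $d$, and rounds occur in the order $d_{\max}, d_{\max}-1, \dots, 1$. Hence the path is a concatenation of rounds, and within round $d$ the set of taxa sitting at depth $d$ starts out as exactly $V_d$ (read as the taxa whose depth in $T_0$ is at least $d$). That set then shrinks by one with each step.

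\textbf{Step 1: every valid path has the same round structure.} I will verify by downward induction on $d$ that at the start of round $d$:
\begin{itemize}
\item every taxon of $T_0$-depth at least $d$ sits at depth exactly $d$;
\item every other taxon sits at its original position;
\item the maximal depth is $d$.
\end{itemize}
The base case is $d=d_{\max}$, which is immediate. For the inductive step, consider round $d$. While any taxon remains at depth $d$, the maximal depth is $d$, so only those taxa are eligible to move. Once all $|V_d|$ of them have moved up, the maximal depth becomes $d-1$, and the taxa now at depth $d-1$ are exactly $V_{d-1}$. It follows that round $d$ consists of exactly $|V_d|$ steps, and the total number of steps is $M=\sum_d |V_d|$, independent of the path.

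\textbf{Step 2: compute the probability of each round.} At the $k$-th step of round $d$ (with $k=1,\dots,|V_d|$), there are $|V_d|-k+1$ taxa at maximal depth. The probability of the realized choice is therefore $1/(|V_d|-k+1)$. Multiplying over $k$ gives $1/|V_d|!$ for the round, and multiplying over rounds gives the claimed $1/\prod_{d=1}^{d_{\max}}|V_d|!$. Since this value does not depend on which path was taken, every path has the same probability.

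\textbf{Main subtlety.} The main obstacle is not the arithmetic but the bookkeeping in Step 1. Percolating a taxon up from depth $d$ must never change the depth of any other taxon, so the count $|V_d|-k+1$ is correct at each step. I would also check that the state $T_m$ determines a unique choice of moved taxon at each step, so that distinct sequences of choices give distinct paths $T_{1:M}$. This holds because taxa are labeled, so the moved taxon can be read off from $T_{m-1}$ and $T_m$. Given this bijection between paths and sequences of choices, the product formula transfers directly to $p(T_{1:M}\mid T_0)$.
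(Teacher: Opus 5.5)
Your proposal is correct and is essentially the paper's argument. The paper's proof is the single observation that within round $d$ every ordering of the taxa in $V_d$ is equally likely, so the uniform choices among the remaining $|V_d|-k+1$ deepest taxa multiply to $1/|V_d|!$; the round decomposition is asserted informally in the text just before the proposition. Your downward induction on the round structure, and your remark that labeled taxa make paths correspond bijectively to choice sequences, just make explicit what the paper takes for granted. You also read $V_d$ correctly as the taxa of $T_0$-depth at least $d$, which is the intended meaning despite the $\texttt{int}(T_0)$ in the paper's definition.
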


\begin{proof}
Within round~$d$, each of the orderings of the taxa in $V_d$ is equally likely.
\end{proof}
Note $|V_d|$ is the same as $M_{d-1}$ from Prop.~\ref{prop:td_expressivity}.

\subsubsection{Calculating the target}

At state $T_m$, some rounds have been completed and one is partially processed.
Let $d$ be the depth of the current round (the maximal depth of any taxon in~$T_m$).
Let $ M_m = |\{v : d_{T_m}(v) = d\!-\!1 \text{ and } d_{T_0}(v) \geq d\}|$
be the number of taxa that have percolated up in $T_m$ in round $d$.

\begin{proposition}\label{prop:td-reverse}
The reverse distribution is uniform over $M_m$ valid downward moves:
\[
p(T_{m-1} \mid T_m, T_0) = \frac{1}{M_m}.
\]
\end{proposition}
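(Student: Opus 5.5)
The argument mirrors the proof of Prop.~\ref{prop:mo-reverse}: path uniformity plus a counting argument over completions of the current state. By Bayes' rule and the Markov property of the noising process,
\[
p(T_{m-1}\mid T_m,T_0)=\frac{p(T_{m-1}\mid T_0)\,p(T_m\mid T_{m-1})}{p(T_m\mid T_0)},
\]
and every quantity here can be written as a sum over paths. By Prop.~\ref{prop:td-uniform-paths}, every path $T_{1:M}$ consistent with $T_0$ has the same probability $1/\prod_{d}|V_d|!$. Therefore $p(T_{m-1},T_m\mid T_0)$ is proportional to the number of full paths passing through both states, and the ratio reduces to counting how many prefixes $T_{0:m-1}$ of paths reach $T_m$ through each candidate $T_{m-1}$.

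First, describe a prefix of a path up to $T_m$. It consists of:
\begin{itemize}
    \item complete orderings of the rounds $d_{\max},\dots,d+1$, which are already finished, and
    \item an ordering of the $M_m$ taxa already moved up within the current round $d$.
\end{itemize}
Call the set of these $M_m$ taxa $S$. They sit at depth $d-1$ in $T_m$ but belong at depth $\ge d$ in $T_0$. Within round $d$, the choice of which taxon moved at each step is free, because the noising kernel selects uniformly among the taxa at depth $d$ (the maximal depth). So any ordering of $S$ is a valid partial round, and the number of prefixes reaching $T_m$ is $\bigl(\prod_{d'>d}|V_{d'}|!\bigr)\cdot M_m!$.

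Second, classify the reverse moves with nonzero mass under the conditioning. Such a move must undo the last noising step, so $T_{m-1}$ equals $T_m$ with one taxon $v\in S$ pushed back down one level into the child dictated by $T_0$. Each $v\in S$ gives exactly one such $T_{m-1}$, and these are distinct. The number of prefixes through the pair $(T_{m-1},T_m)$ is the same completed-rounds product times $(M_m-1)!$, since $v$ is forced to be last. Dividing gives $1/M_m$ for every $v\in S$. The masses then sum to $1$ over the $M_m$ candidates, which confirms that the support is exactly the valid downward moves.

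The main obstacle is the boundary case $M_m=0$, where $T_m$ sits exactly between rounds. There the previous step belongs to round $d+1$, and the valid reverse moves are pushing down any taxon of $V_{d+1}$ that now sits at depth $d$. I would handle this by redefining $d$ and $M_m$ so that $M_m=|V_{d+1}|$ whenever the current round has not started. Equivalently, index the round by the last move made, and interpret $M_m$ this way in the statement. The counting argument above then applies verbatim. I also need to check that the depth constraint (only maximal-depth taxa move) never rules out an ordering within a round; this holds because all taxa of $V_d$ are at depth $d$ throughout round $d$.
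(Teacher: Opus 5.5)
Your proposal is correct and follows the paper's route. The paper's proof is the one-line remark that the result follows from the uniformity of noising paths (Prop.~\ref{prop:td-uniform-paths}), and your prefix count is exactly the computation behind that remark, with the suffix count from $T_m$ cancelling. Your boundary fix is also right: under the paper's literal definitions $M_m=0$ at the start of each round, and indexing the round by the last noising move, which gives $M_m=|V_{d+1}|$ there, is the correct reading of the statement.
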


\begin{proof}
Follows from Prop.~\ref{prop:td-uniform-paths}.
\end{proof}

\subsubsection{Noise schedule}
\label{app:td_noise_schedule}

\paragraph{The PDA backwards steps}

By Prop.~\ref{prop:td_expressivity}, the PDA single-taxa splitting model ($\beta = -3/2$) also produces uniform paths $T_{1:M}$ conditioned on $T_0$.
Therefore the reverse model $p_\mathrm{PDA}(T_{m-1} \mid T_m)$ picks a taxon at the current depth at random and percolates down according to $f$, following the procedure described in Alg.~\ref{alg:td-generate}.

\paragraph{Sampling from $\pi$}
Say $T_m$ is on round $d$ with $a_u, b_u$ nodes percolated for every node $u\in U_d$, and $\bar a_u$, $\bar b_u$ nodes not yet percolated which belong to the group with $a_u$ and $b_u$ taxa in $T_0$.
Call $M_u=\bar a_u+\bar b_u$ the number of leaves at $u$ in $T_m$, $|\texttt{leaves}(u)|$, $N_u=\bar a_u+\bar b_u+a_u+b_u$ the number of leaves in the subtree rooted at $u$ in $T_0$, $n_\mathrm{leaves}(u)$, and $A_u=a_u+\bar a_u$, $B_u=b_u+\bar b_u$.

At every $u\in\texttt{int}(T_0)$, $p_\mathrm{PDA}(A_u, B_u):=\frac{B(A_u+\beta+1, B_u+\beta+1)}{B(\beta+1, \beta+1)-2B(N_u+\beta+1, \beta+1)}$ is, intuitively, the amount of information created at node $u$.
In particular note that $\prod_{u\in\texttt{int}(T_0)}p_\mathrm{PDA}(A_u, B_u)=p_\mathrm{PDA}(T_0)=1/(2N-3)!!.$
First we sample a $u$ according to how much information is created at that node.
Define, for $u\in\texttt{int}(T_0)$, 
$$\pi_u(T_m\mid T_0)= \frac{1}{-\log p_\mathrm{PDA}(A_u, B_u)}\sum_{T_{m-1}\text{ decreases }M_u}p(T_m, T_{m-1}\mid T_0) \log\frac{p(T_m\mid T_{m-1}, T_0)}{p_\mathrm{PDA}(T_m\mid T_{m-1})}.$$
Then clearly $\pi(T_m\mid T_0)=\sum_{u\in\texttt{int}(T_0)}\frac{-\log p_\mathrm{PDA}(A_u, B_u)}{\log (2N-3)!!}\pi_u(T_m\mid T_0)$.
Now say $u\in U_d$, and $m_1, \dots, m_2$ are all in stage $d$, so $T_{m_1}$ is the start of stage $d$ and $T_{m_2}$ is the end, both deterministic.
Then $\log (2N-3)!!\sum_{T_m\text{ depth }d} \pi_u(T_m\mid T_0)$
\begin{equation*}
    \begin{aligned}
        =&\sum_{T_m\text{ depth }d}\sum_{\ \ T_{m-1}}p(T_m, T_{m-1}\mid T_0)\mathbbm 1(T_{m-1}\text{ decreases }M_u)\log\frac{p(T_m\mid T_{m-1}, T_0)}{p_\mathrm{PDA}(T_m\mid T_{m-1})}\\
        =&\mathbb E_{T_{m_1+1:m_2}\sim p(T_{m_1+1:m_2}\mid T_0)}\sum_{m=m_1+1}^{m_2}\mathbbm 1(T_{m-1}\text{ decreases }M_u)\log\frac{p(T_m\mid T_{m-1}, T_{m_2})}{p_\mathrm{PDA}(T_m\mid T_{m-1})}\\
        =&-\mathbb E_{T_{m_1+1:m_2}\sim p(T_{m_1+1:m_2}\mid T_0)}\log p_\mathrm{PDA}(A_u, B_u)\\
        =& -\log p_\mathrm{PDA}(A_u, B_u).\\
    \end{aligned}
\end{equation*}
The third equality follows from summing the log probabilities of revealing the splitting of $u$ one node at a time.
Therefore, $\pi_u(T_m\mid T_0)$ is a probability distribution!
To sample from $\pi$ therefore we can sample $u$ in proportion to $-\log p_\mathrm{PDA}(A_u, B_u)$ then sample from $\pi_u$.

Now how do we sample from $\pi_u$?
Calling $I_u(a_u, b_u)=-\frac{\bar a_u}{M_u}\log f(a_u, b_u\mid M_u)-\frac{\bar b_u}{M_u}\log f(b_u, a_u\mid M_u)$ (and $I_u(0,0)=0$) we have, with a bit of algebra,
$$\pi_u(T_m\mid T_0)\propto p(T_m\mid T_0)\frac{M_u}{\sum_{u'}M_{u'}}I_u(a_u, b_u).$$
If $u\in U_d$, then $p(T_m\mid T_0)$ is a hypergeometric over $(M_u, a_u)_{u\in U_d}$:
\begin{equation*}
    \begin{aligned}
        \binom{\sum_u N_u}{\sum_u M_u}^{-1}\prod_{u\in U_d}\binom{A_u}{a_u}\binom{B_u}{b_u}=&\left(\binom{\sum_u N_u}{\sum_u M_u}^{-1}\prod_{u\in U_d}\binom{N_u}{M_u}\right)\\
        &\prod_{u\in U_d}\times\mathrm{HyperGeom}(N_u, A_u, N_u-M_u)(a_u)\\
        =&\left(\binom{\sum_u N_u}{\sum_u M_u}^{-1}\binom{N_u}{M_u}\binom{\sum_{u'\neq u}N_{u'}}{\sum_{u'\neq u}M_{u'}}\right)\\
        &\times\mathrm{HyperGeom}(\sum_{u'\neq u}N_{u'}, (A_{u'})_{u'\neq u}, \sum_{u'\neq u} M_{u'})(M_{u'})_{u'\neq u}\\
        &\times\prod_{u\in U_d}\mathrm{HyperGeom}(N_u, A_u, N_u-M_u)(a_u).
    \end{aligned}
\end{equation*}
This also shows that once we've sampled the variables affected by the weight in front of $p(T_m\mid T_0)$ --- $M_u, \sum_u M_u$ and $a_u$ --- then we can sample $(M_{u'})_{u'\neq u}$ and $(a_{u'})_{u'\neq u}$ from the hyper geometrics above.
Now we only need to sample from a distribution proportional to
$$\frac{\binom{\sum_{u'\neq u} N_{u'}}{\sum_{u'\neq u} M_{u'}}\binom{N_u}{M_u}}{\binom{\sum_u N_u}{\sum_u M_u}}\frac{M_u}{\sum_{u'}M_{u'}}\hat\pi_u(a_u, M_u)$$
where $\hat\pi_u(a_u, M_u)=I_u(a_u, b_u)\mathrm{HyperGeom}(N_u, A_u, N_u-M_u)(a_u).$
But that pre-factor is
$$\frac{\binom{\sum_{u'\neq u} N_{u'}}{\sum_{u'\neq u} M_{u'}}\binom{N_u}{M_u}}{\binom{\sum_u N_u}{\sum_u M_u}}\frac{M_u}{\sum_{u'}M_{u'}}=\binom{\sum_{u'\neq u} N_{u'}}{\sum_{u'\neq u} M_{u'}}\frac{B(\sum_u M_u, \sum_u (N_u-M_u)+1)}{B(M_u, N_u-M_u+1)}$$
which is a beta binomial $$\mathrm{BetaBin}\left(\sum_{u'\neq u} N_{u'}, M_u, N_u-M_u+1\right)\left(\sum_{u'\neq u} M_{u'}\right)$$
To sample from $\hat\pi_u(a_u, M_u)$ we build the $A_u\times B_u$ table explicitly.
This gives Alg.~\ref{alg:td-sample}.

\begin{algorithm}[H]
\caption{Importance sampling the \td{} noise schedule}
\label{alg:td-sample}
\begin{algorithmic}
\REQUIRE Tree $T_0$ with $N$ leaves
    \STATE Draw $u\sim\mathrm{Categorical}(-\log p_\mathrm{PDA}(A_u, B_u)/\log(2N-3)!!)$. Call $d=d_{T_0}(u)$.
    \STATE Draw $a_u, M_u\sim \mathrm{Categorical}\left(\hat\pi_u(a_u, M_u)/\sum_{a, M}\hat\pi_u(a, M)\right)$.
    \STATE Draw $\sum_{u'\in U_d\setminus\{u'\}} M_{u'}\sim\mathrm{BetaBin}\left(\sum_{u'\in U_d\setminus\{u'\}} N_{u'}, M_u, N_u-M_u+1\right)$.
    \STATE Draw $(M_{u'})_{u'\in U_d\setminus\{u'\}}\sim\mathrm{HyperGeom}\left(\sum_{u'\in U_d\setminus\{u'\}}N_{u'}, (A_{u'})_{u'\in U_d\setminus\{u'\}}, \sum_{u'\in U_d\setminus\{u'\}} M_{u'}\right)$.
    \STATE Draw $(a_{u'})_{u'\in U_d\setminus\{u'\}}\sim \prod_{u'\in U_d\setminus\{u'\}}\mathrm{HyperGeom}(N_{u'}, A_{u'}, N_{u'}-M_{u'})$.
    \STATE For each $u\in U_d$ assign $a_u$ taxa uniformly at random to one child and $b_u$ to the other.
\STATE \textbf{return} $T_m$
\end{algorithmic}
\end{algorithm}

\section{Architecture details}
\label{app:architecture}

Our model $q_\theta$ must condition on the sequences $X_{1:N}$ and current state of the tree $T_m$. To condition on the sequences, we adapt the EvoPF architecture from Phyloformer 2 \citep{Blassel2025-vq}.
EvoPF embeds $N$ sequences of length $L$, $X_{1:N}$, into a tensor of size $[N, L, 128]$ and performs axial attention over the dimensions $N$ and $L$ separately.
After a trunk made of 12 layers, we turn embeddings from each pair of node into a set of logits.

\subsection{An Embedding trunk that conditions on the tree $T_m$}
We pass sequence embeddings through twelve EvoPF layers modified to condition on the tree structure $T_m$.
We make two modifications that allow us to condition on $T_m$:

\paragraph{Internal nodes}

\sloppy We include internal nodes of $T_m$ in our calculation, so our tensor is of size $[N+\#\texttt{internal\_nodes}(T_m), L, 128]$.
Say $h_v^{(k)}\in\mathbb R^{(L+1)\times 128}$ is the embedding of node or leaf $v$ at layer k.
The $+1$ is from an extra position we also add to the beginning of each sequence which will allow us to read out a single vector.

For an internal node $v$ we initialize $h_v^{(0)}$ using topological information and information about nearby leaves.
\begin{itemize}
    \item For topological information on $v$ we use the depth and the number of leaves in the subtree $\texttt{depth}(v), |\mathrm{leaves}(v)|$.
    We call these features $f_{\mathrm{topo}, v}$.
    \item For every position $\ell$, we average the embeddings of all leaves in the subtree of the node
    $\frac{1}{|\mathrm{leaves}(v)|}\sum_{w \in \mathrm{leaves}(v)} h_{w,\ell}^{(k)}$.
    We call these features $f_{\mathrm{subtree}, v, \ell}^{(0)}$.
    \item For every position $\ell$, we average the embeddings of all leaves according to the Dirichlet energy features from \citet{Xie2023-ed}.
    This can be summarized as multiplying by a $|T_m|\times |T_m|$ matrix $A$: $\sum_{w \in T_m} A_{v, x}h_{w,\ell}^{(k)})$.
    We call these features $f_{\mathrm{Dirichlet}, v, \ell}^{(0)}$.
\end{itemize}
We feed all of these features through an MLP at every position
$$h_{v,\ell}^{(0)} \leftarrow \mathrm{MLP}_{0, \theta}\!\left(f_{\mathrm{topo}, v}, 
f_{\mathrm{subtree}, v, \ell}^{(0)}, f_{\mathrm{Dirichlet}, v, \ell}^{(0)}
\right).$$

After each layer, each internal node $v$ has graph information injected into it as
$$h_{v, \ell}^{(k)} \leftarrow h_{v, \ell}^{(k)} + \mathrm{MLP}_{k, \theta}\!\left(f_{\mathrm{topo}, v}, 
f_{\mathrm{subtree}, v, \ell}^{(k)}, f_{\mathrm{Dirichlet}, v, \ell}^{(k)}
\right)$$
where $\mathrm{leaves}(v)$ denotes the leaf descendants of $v$ in $T_m$.

\paragraph{Attention bias}
During attention over $N$ we bias the attention using the graph distance in $T_m$ (which can be $\infty$):
    for nodes $v, u$, position $\ell$, and head $h$,
    $$\mathrm{Attention}_{v, u, \ell, h}\gets \mathrm{Attention}_{v, u, \ell, h}+\mathrm{embed}_{\theta, h}(\texttt{dist}(u, v\mid T_m)).$$
    If $u$ and $v$ are leaves then we also bias the attention using their Hamming distance:
    $$\mathrm{Attention}_{v, u, \ell, h}\gets \mathrm{Attention}_{v, u, \ell, h}+\mathrm{MLP}_{\theta, h}(\texttt{sinusoidal}(\texttt{hamming}(X_u, X_v))).$$

\subsection{Readout}
After the trunk we arrive at a vector $\texttt{embed}_\theta(v)=h^{(12)}_{v, \ell}\in\mathbb R^{128}$ for every vertex or leaf $v$ in $T_m$ and position $\ell$ which depends on both $T_m$ and $X_{1:N}$.
Now we need a strategy to first pool over the length dimension and then a strategy for each paradigm to get a set of scores for each possible $T_{m-1}$ which we then normalize.

\paragraph{Covariance pooling along the length dimension}

EvoPF~\citep{Blassel2025-vq} pools over the length dimension by taking a mean.
We found this resulted in embeddings that were nearly identical for different nodes, especially when sequences were long.
This made training extremely unstable.

We suggest a fix inspired by classical phylogenetics methods.
To motivate the method, first note at initialization, for a leaf $v$, $h^{(12)}_{v,\ell}$ is roughly the embedding of the letter at position $\ell$ in $v$.
When we take a mean, $\frac 1 {L+1}\sum_{\ell}h^{(12)}_{v,\ell}$ then our embedding is simply the average composition of $v$.
For randomly generated trees, each nucleotide or amino acid appears in roughly the same frequency across sequences, so this embedding is nearly identical across all $v$.
To build useful embeddings, phylogeneticists instead build a matrix of average similarity across sequences, which we emulate by taking $$\texttt{embed}(v, w):=\frac 1 {L+1}\sum_{\ell}h^{(12)}_{v, \ell}\circ h^{(12)}_{w, \ell}-\left(\frac 1 {L+1}\sum_{\ell}h^{(12)}_{v, \ell}\right)\circ\left(\frac 1 {L+1}\sum_{\ell}h^{(12)}_{w, \ell}\right)$$ for all pairs $v, w\in T_m$.
At initialization, entries of $h_{u, w}$ are large for leaves $v, w$ which have similar sequences, and small for those with dissimilar sequences.
Note this pooling step can not straightforwardly be dropped into EvoPF -- it is specific to our stretegy of iteratively building up the tree.

After pooling we apply a self-attention layer to $\{h_{v, w}\}_{v, w\in T_m}.$
Note the number of pairs of sequences is typically small compared to the length $L$, so this computation has negligible cost in our overall architecture.

\paragraph{Scoring $T_{m-1}$}

For all pairs of nodes $v, w\in T_m$ we now have a vector of similarities $\texttt{embed}(v, w)$.
We simply choose the right pair of nodes to compare to compute un-normalized logits.

\begin{itemize}
    \item \bu{}: for each pair of subtree roots $v, w$ in the current forest,
    $$\mathrm{logit}(v, w)=\texttt{MLP}\left(\texttt{embed}(v, w)\right).$$

    \item \mo{}: here we need to score each unattached taxon $t$ against each parent-to-child edge $(v, w)$ in $T_m$.
    To do so, before pooling the length dimension we define edge embeddings
    $$h^{(12)}_{(v, w), \ell}=W_\theta[h^{(12)}_{v, \ell}, h^{(12)}_{w, \ell}]$$
    for a matrix $W_\theta\in\mathbb R^{128\times 256}$ so that our pooling step gives us $\texttt{embed}(t, (v, w))$.
    Finally we take
    $$\mathrm{logit}(t, (v, w))=\texttt{MLP}\left(\texttt{embed}(t, (v, w))\right).$$

    \item \td{}: each taxon $t$ at the deepest layer of $T_m$ sits at a host
    internal node with left and right children $\texttt{left}(t), \texttt{right}(t)$;
    the percolation logits are
    \begin{align*}
    \mathrm{logit}_{\text{left}}(t)
    &= \texttt{MLP}\left(\texttt{embed}(t, \texttt{left}(t))\right),\\
    \mathrm{logit}_{\text{right}}(t)
    &= \texttt{MLP}\left(\texttt{embed}(t, \texttt{right}(t))\right).
    \end{align*}
    if $t$ has only one child then the embedding of the missing child is set to be a fixed vector plus $-\texttt{embed}(t, \texttt{child}(t))$.
    Note for top-down there can be multiple taxa at a node;
    we therefore represent nodes and taxa that may occupy a node separately in our trunk.
\end{itemize}

\section{Experiment details}

\subsection{Additional information on training data}
\label{app:evolutionary_models}

We simulate data from three priors by first simulating a tree, and then simulating sequences conditional on the tree.

\subsubsection{Dataset size}

For each set of data point, we sample
$$N\sim\mathrm{Uniform}(25, 65), L\sim\mathrm{Uniform}(300, 2600).$$
Then we generate a tree with $N$ species and sequences with length $L$ before any insertions or deletions.

\subsubsection{Trees}

MrBayes~\citep{Huelsenbeck2001-mo} and CAT~\citep{Lartillot2004-ri} both have uniform (proportional to distinguishable arrangements (PDA)) priors on tree topologies.
We simulate PDA trees using the Ford-$\alpha$ process with $\alpha=1/2$~\citep{Ford2005-gf}.

\subsubsection{Sequences and branch lengths}\label{app: ali priors}
All simulations were done using AliSim~\citep{Ly-Trong2022-nt}, part of the IQ-TREE software used under the GPL-2.0 license.
We use $L$ to denote the length of the sequences.

\paragraph{Simple model}
To simulate from the Jukes-Cantor model, we passed \texttt{JC} to the model flag in AliSim.
We simulate branch lengths independently from $ \mathrm{Exp}(1)/10$ to match the default MrBayes prior.

\paragraph{Complex model}
We simulate from a CAT model~\citep{Lartillot2004-ri} which models site-wise biases in mutation rates in amino acid sequences.
This model has branch lengths $(r_{u\to v})_{u, v}$, class diversity parameter $\alpha\in(0, \infty)$, and a symmetric generator matrix $Q$ representing relative mutation rates as global parameters.
Then each site $\ell$ mutates according to its rate $g_\ell$ and its stationary distribution $\vec\pi_\ell$.
In particular, we (1) modify $Q$ to get a stationary rate $\vec\pi_\ell$, $\tilde R_\ell = Q\mathrm{diag}(\vec \pi_\ell)-\mathrm{diag}(Q\vec \pi_\ell)$; (2) normalize the rates $R_\ell=\tilde R_\ell/(-\vec\pi_\ell^T\mathrm{diag}(\tilde R_\ell))$; and finally (3) multiply by the rate $Q_\ell=g_\ell\times R_\ell$.

We simulate parameters from a CAT prior as used in PhyloBayes MPI~\citep{Lartillot2013-fj}.
See the manual at \url{https://github.com/bayesiancook/phylobayes/tree/master}.
\begin{itemize}
    \item $Q_{ij}\sim\mathrm{Exp}(1)$ for $i<j$. This matches the \texttt{-gtr} flags.
    \item $\alpha\sim\mathrm{Exp}(1)\times 10$.
    \item $r_{u\to v}\sim\mathrm{Exp}(1)/\mu$ independently where $\mu\sim\mathrm{Exp}(1)\times 10$.
    \item $(g_\ell)_{\ell=1}^L\sim \mathrm{Gamma}(\theta, 1)/\theta$, discretized to 4 components, which has mean $1$ and variance $1/\theta$, where $\theta\sim\mathrm{Exp(1)}$.
    \item $(\vec\pi_{\ell})_{\ell}^L\sim\mathrm{DirichletProcess}(\alpha, \mathrm{Dirichlet(\eta_1, \dots, \eta_{20})})$ where each $\eta_b\sim\mathrm{Exp}(1).$
    This matches the \texttt{-cat} flag.
    We simulate class probabilities using the standard stick-breaking procedure up to 100 classes. 
\end{itemize}
Finally we can simulate this prior using AliSim by passing \texttt{GTR20\{...\}+FMIX\{...\}+G4\{...\}} filling in the sampled parameters for $Q$, the probabilities and classes from the sampled stick breaking process, and $\theta$.

\paragraph{Intractable model}
To simulate from a realistic prior including indels, we first simulate from a standard model of DNA mutation, \texttt{GTR\{q\_matrix\}+GC\{g\_std\}+I\{invar\_rate\}}, 
where \texttt{q\_matrix} is sampled from a Dirichlet as in the CAT model, \texttt{g\_std}$\ \sim\mathrm{Exp}(1)$, and \texttt{invar\_rate}$\ \sim\mathrm{Uniform}(0, 0.4)$.
AliSim will simulate the stationary DNA distribution, what we called $\vec\pi$ in the CAT model, from an empirical distribution.
We also simulate branch lengths from $\mathrm{Exp}(1)/10.$

To add insertions and deletions, we pass \texttt{--indel ins\_rate,del\_rate} where we simulate \texttt{ins\_rate}$\ \sim\mathrm{Uniform}(0, 0.01)$ and \texttt{del\_rate}$\ \sim\mathrm{Uniform}(0, 0.05)$.
This simulates insertions up to 1\% and deletions up to 5\% as frequently as substitutions; AliSim then samples insertion and deletion sizes from an empirically fit Zipfian distribution~\citep{Fletcher2009-yg}.

After simulating, we align sequences using \texttt{kalign3}~\citep{Lassmann2019-on} with default parameters before performing inference.
Note models do not see the ground truth insertion and deletion labels generated during simulation, only those inferred by \texttt{kalign3}.

\paragraph{Missing-ness}
To simulate missing-ness, we draw a $p\sim\mathrm{Beta}(1.25, 8.75)$ and give each non-gap position an independent $\mathrm{Bernoulli}(p)$ chance of being masked.

\subsection{Real sequence data}
\label{app:real_data}

We downloaded sequence datasets from~\citet{Lakner2008-dd}.
We use DS1 originally taken from~\citet{Hedges1990-tm}, DS6 original taken from~\citet{Brower2000-um}, and DS9 originally taken from~\citet{Rossman2001-tz}.

\subsection{Inference with likelihood-based methods}

We compared our models to MCMC sampling with MrBayes and a maximum likelihood based method, IQ-Tree.

\subsubsection{MrBayes}
We used code from \url{https://nbisweden.github.io/MrBayes/download.html} under a GPL-3.0 license.
We ran MrBayes with a uniform stationary distribution \texttt{prset statefreqpr=fixed(equal)} and exponential branch lengths \texttt{prset brlenspr=unconstrained:exponential(10.0)}.
Our gold standard runs were run until the convergence criterion, the standard deviation in subsplit probabilities across two runs, was less than $0.01$, sampling every 1000 steps.

\subsubsection{IQ-Tree}
We use IQ-Tree3~\citep{Wong2026-rc} code from \url{https://github.com/Cibiv/IQ-TREE} under a GPL-2.0 license.
We ran IQ-tree with 16 CPUs with the flag \texttt{-nt AUTO} and otherwise used default settings.
We ran IQ-Tree with \texttt{-m JC} for both DNA datasets and \texttt{-m Poisson} for the protein dataset.
Our ``flexible'' IQ-Tree model was \texttt{-m GTR+G+I} for the DNA dataset and \texttt{-m LG+F+G+I} for the protein dataset.
We attempted to use the more flexible \texttt{-m LG+C60+F+G+I} for the protein dataset but this took over an hour to generate a single tree for a dataset with $N=45$ and $L=1000$.
To generate $8$ trees to build a consensus, we ran IQ-Tree with $8$ different seeds.

\subsection{Evaluation metrics}
\label{app: metrics}

We describe how we compare trees and their distributions using metrics derived from the Robinson-Foulds distance.

\subsubsection{Robinson-Foulds distance}

The Robinson-Foulds (RF) metric is often used for comparing trees due to its ease of computation~\citep[Ch. 30]{Felsenstein2004-xi}.
The metric can be used to compare rooted and unrooted trees.
Since JC and CAT are reversible, the position of the root is not identifiable, so we use the unrooted RF distance to compare all trees.

To compute the RF distance, we compute the partition features of each tree:
for a binary partition $P_1, P_2\subset \{1, \dots, N\}$ a tree has that feature if there is an edge in the unrooted tree separating all nodes $P_1$ from all nodes $P_2$.
Then the distance between two trees is simply the symmetric difference of their features.

\subsubsection{Comparing tree distributions using RF MMD}\label{app: mmd}

To compare two distributions $p$, $q$, using a kernel $k$, we can calculate
$$\mathrm{MMD}_k^2(p, q)=\mathbb E_{T_1\sim p, T_2\sim p}k(T_1, T_2) + \mathbb E_{T_1\sim q, T_2\sim q}k(T_1, T_2) - 2\mathbb E_{T_1\sim p, T_2\sim q}k(T_1, T_2).$$

We can build kernels using the RF distance:
we can think of RF as embedding a tree $T$ into $v(T)\in\mathbb R^{2^{N-1}}$ with dimensions indexed by binary partitions of $\{1, \dots, N\}$, where, for a partition $P_1, P_2\subset \{1, \dots, N\}$, $v(T)_{\{P_1, P_2\}}=1$ if there is an edge separating all nodes $P_1$ from all nodes $P_2$.
Then the RF distance between trees $T, T'$, is simply $\Vert v(T)-v(T')\Vert$.

We could compare distributions with the linear kernel $k(T, T')=v(T)^Tv(T')$;
the MMD in this case is simply the distance between the vectors of subsplit probabilities $\Vert \mathbb E_{T\sim p}v(T) - \mathbb E_{T\sim q}v(T)\Vert$.
However, two distinct distributions can have zero MMD simply by matching the average subsplit probabilities.
To more flexibly detect distributional differences we look at a more flexible kernel $k(T, T')=\exp(-\Vert v(T)-v(T')\Vert^2/2\sigma^2)$ where we chose $\sigma^2=(N-3)/2$ so $\Vert v(T)-v(T')\Vert^2/2\sigma^2$ is twice the ``relative RF'' distance between $T$ and $T'$.

\subsubsection{JC marginal likelihood estimate}\label{app: JC ML}

We wanted to estimate the average entropy $-\mathbb E_{T_0,X_{1:N}}\log p(T_0\mid X_{1:N})$ on the simulated JC data where $p$ is the true prior model.
Then we could bound the absolute error of our models as
$$\mathbb E_{X_{1:N}}D_\mathrm{KL}(p(T_0\mid X_{1:N})\Vert q_\theta(T_0\mid X_{1:N}))\leq \mathbb E_{T_0,X_{1:N}}\log p(T_0\mid X_{1:N})-\mathrm{ELBO}(q_\theta).$$
The entropy isn't tractable so we estimated it two ways.

First we ran MrBayes with sequences $X_{1:N}$ to get a large number of posterior samples $T^1, \dots T^M$ and simply estimated $p(T_0\mid X_{1:N})\approx \hat p=\frac {\#\{m\mid T^m=T_0\}} M.$
This is an under-estimate as $\mathbb E\log \hat p\approx \log p(T_0\mid X_{1:N}) -\frac 1 2\left(\frac{\mathrm{std}\hat p}{\hat p}\right)^2$.
After a simulation, if $\frac 1 2\left(\frac{\mathrm{std}\hat p}{\hat p}\right)^2$ was greater than $0.1$ then we re-ran the simulation for twice as long, repeating until the estimated error was less than $0.1$.
We ran MCMC with \texttt{mcmc nruns=4 nchains=4 ngen=2000000 samplefreq=500} and increased \texttt{ngen} as necessary.

Second we estimated the likelihood by first decomposing
$$\log p(T_0\mid X_{1:N}) = \log p(T_0) + \log p(X_{1:N}\mid T_0) - \log p(X_{1:N}).$$
Under the PDA prior, $\log p(T_0)=-\log\#\{\text{trees of size }N\}$.
The next two terms require marginalizing over branch lengths, and $\log p(X_{1:N})$ also involves marginalizing over trees. 
These were estimated using the stepping-stone algorithm in MrBayes.
This algorithm again gives an under-estimate for each of these quantities and reports a standard deviation;
    again, if the standard error was larger than $0.1$ then we re-ran the simulation for twice as long.
We ran the stepping stone algorithm, with or without a fixed tree, with \texttt{ss ngen=1000000 nruns=4 nchains=4 samplefreq=500 nsteps=50 alpha=0.4} and increased \texttt{ngen} as necessary.

For 100 sampled tree-sequence pairs, our first method failed to sample the true tree after $10^9$ generations for 15/100 pairs.
On the remaining pairs, which typically exclude those high-entropy trees, our first method gave an average of $9.4$ and the second method gave an average of $8.9$; these were further apart than our $0.1$ threshold likely because the simple approximations above were not accurate.
On all the data, our second method gave an estimate of $10.8$ nats with a standard error of $0.5$.
We therefore report a value of $>10$ nats in the main text.
In principle we could perform the same estimation with CAT, but the computational cost is prohibitive: even this simulation took roughly 2 hours on 8 CPUs to evaluate a single tree-sequence pair.

\subsection{Phyloformer 2}

We trained Phyloformer 2~\citep{Blassel2025-vq} for 48 hours on two A100s --- the same as our models.
We used the training architecture and parameters from the supplementary information of \url{https://openreview.net/forum?id=18pFYYqFiG}, including the branch-length canonicalization.
However we removed the branch length prediction head.
In particular the model was trained with a learning rate of $0.0001$, with a batch size of $64$ and precision of \texttt{fp32}.
The settings are similar to our model except we use lower precision and half the batch size.
In the original code the trained on data with a fixed $N$ and $L$;
    we modified their code to train on varied $N$ and $L$ every batch just like our model.

\subsection{Quartformer}
\label{app:quartformer}

We trained Quartformer \citep{Su2025-xt} models for 48 hours on two A100 GPUs, consistent with our models. The architecture details follow from their paper, including sampling blocks of 16 taxa and taking a 3-way cross-entropy loss for all 1820 quartets. Our only changes are:
\begin{itemize}
    \item We use a a learned column MLP mean-pooled over sites rather than their histogram approach, which would be intractable for the larger vocabulary of amino acids
    \item Replacing the post-normalization with pre-normalization blocks and final LayerNorm, to improve training stability.
\end{itemize}

To sample a tree, we sample 16-taxa blocks until each taxon appears at least 40 times and store the predicted outputs. Following \citet{Su2025-xt}, the predictions are then assembled using wQFM.

\section{Supplementary results}

\subsection{Training strategy ablations}\label{app: ablate math}

Here we ablate choices in our training strategy.

\paragraph{Noise schedule}
Above we derived noise schedules so that de-noising steps were generated in proportion to where information is generated when $p(T_0)$ is the PDA distribution; this makes the variance in our loss estimate $0$ at the beginning of training.
Compared to sampling $m$ uniformly however, Fig.~\ref{fig:IS ablation} there is little, if any, benefit.

\begin{figure}[H]
    \centering
    \includegraphics[width=0.95\linewidth]{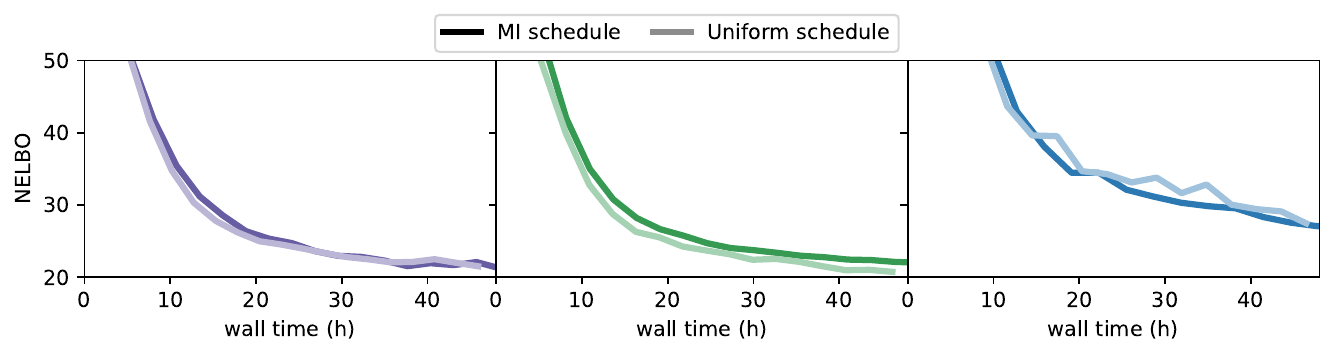}
    \caption{\textbf{The bespoke mutual information schedules do not make much of a difference when training.}
    On the ``complex'' data, with the same architecture and training parameters we train models for 24 hours on 2 A100s.
    We train models either with our derived mutual information schedule or by sampling the sampling step $m$ uniformly between $1$ and $M$.
    }
    \label{fig:IS ablation}
\end{figure}

We hypothesize this is because during training the model diverges very sharply away from PDA which was the basis of the noising process, starting from a trivial average likelihood of near 150 nats (the entropy of PDA on our data) to around 20 nats.
Since PDA is the best approximation to $p(T_0\mid X_{1:N})$ that does not use any information about $X_{1:N}$, a better noising distribution must include information about the sequences themselves.
On one hand, this suggests that a better schedule would require even more complicated analysis, but on the other, it suggests we can be satisfied with the simple uniform schedule.

\paragraph{Prior corrected initialization}

In App.~\ref{app:expressivity} we showed that all three paradigms can express distributions between PDA and Yule.
We consider PDA data and note that, while our \mo{} paradigms is initialized at the PDA distribution, \td{} and \bu{} are not.
We hypothesized that this may harm training, so we multiplied the outputs of our neural networks by our derived PDA posterior $\log p_{\mathrm{PDA}}(T_{m-1}\mid T_m)$ for \bu{} and \td{}.
However, Fig.~\ref{fig:prior correctes ablation} we see that this improves \td{}, whose PDA posterior is somewhat complicated, early in training.
However, after one day of training there is no detectable benefit to doing so, suggesting these paradigms can learn the PDA correction in practice.

\begin{figure}[H]
    \centering
    \includegraphics[width=0.7\linewidth]{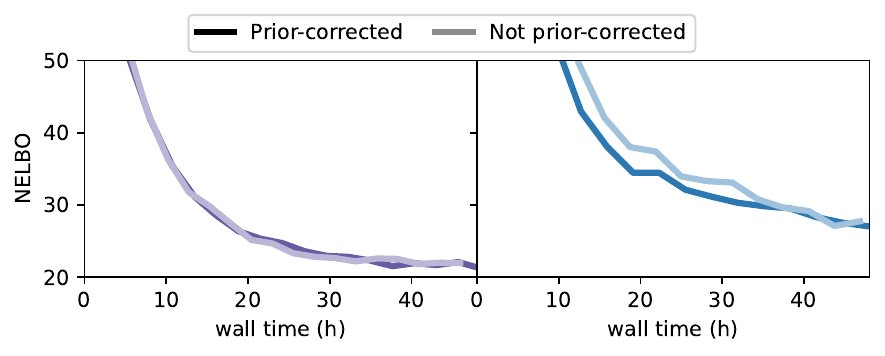}
    \caption{\textbf{The prior corrected outputs do not make much of a difference when training.}
    On the ``complex'' data, with the same architecture and training parameters we train models for 24 hours on 2 A100s.
    We train \td{} or \bu{} models either by multiplying outputs by $\log p_{\mathrm{PDA}}(T_{m-1}\mid T_m)$ or leaving them biased at initialization..
    }
    \label{fig:prior correctes ablation}
\end{figure}

\subsection{Architecture ablations}\label{app: arch ablations}

\paragraph{Graph conditioning}
We ran ablations demonstrating the importance of our two strategies for injecting graph information into the model.
We removed $f_{\mathrm{subtree}, v, \ell}^{(0)}$ and $f_{\mathrm{Dirichlet}, v, \ell}^{(0)}$ from the initialization of internal node initializations, and we removed $\mathrm{embed}_{\theta, h}(\texttt{dist}(u, v\mid T_m))$ from each attention layer.
In Fig.~\ref{fig:graph ablation}, on the ``complex'' data, with the same architecture and training parameters we trained these models for 48 hours.

\paragraph{Pooling strategy ablations}
We show that an architecture with mean pooling performs substantially worse than one with covariance pooling.
For a bottom-up model, we replace our readout with a mean $\texttt{embed}(v)=\frac{1}{L+1}\sum_\ell h^{(12)}_{v, \ell}$.
We apply two dense layers and then read out, for each pair of subtree roots $v, w$ in the current forest,
    $$\mathrm{logit}_\theta(u, v)=\texttt{MLP}\left([\texttt{embed}(v)+\texttt{embed}(w), \texttt{embed}(v)\circ \texttt{embed}(w)]\right).$$
We trained these models in Fig~\ref{fig:graph ablation}, on the ``complex'' data, with the same architecture and training parameters  for 48 hours.

\subsection{Phyloformer 2 ablations}\label{app:pf2 ablations}

In the main text we saw that our models significantly outperform Phyloformer 2.
We proposed that this was due to the increased flexibility of our models and our improved objectives:
    our models take in a partial tree $T_m$ and do a full forward pass to predict a posterior over the next move $T_{m-1}$, meanwhile Phyloformer 2 constructs fixed embeddings for every sequence $X_{1:n}$ in one forward pass and these embedding vectors $\mathbb R^{N\times128}$ must parameterize a distribution over trees that must predict a single ``correct'' path.
However, there are two other plausible hypotheses for our improvements we ablate here.

\paragraph{Hypothesis 1: Our Phyloformer 2 ELBO is pessimistic}
Phyloformer 2 originally also predicted branch lengths.
Without this prediction, the branch-length based canonicalization introduces an irreducible error in the ELBO.
Note this can only explain the ELBO result and not the MMD or RF distance results where we compare actual samples from Phyloformer 2.

To describe the source of the pessimism, we need to introduce the canonicalization.
As mentioned in the text, for a particular tree $T$, there are many possible paths $T=T_0, \dots, T_M$.
Phyloformer 2 defines a deterministic ``correct'' path $P(T, \vec b)=T_0, \dots, T_M$ where $\vec b$ are the branch lengths and the cherry with the shortest branch lengths is removed every step (see \citet{Blassel2025-vq} for details).
Then we can write an ELBO
\begin{equation}\label{eqn:pf2 ELBO}
    \mathbb E_{T, X_{1:N}\sim p}\log q_\theta(T\mid X_{1:N})\geq \mathbb E_{T, X_{1:N}, \vec b\sim p}\log q_\theta(P(T, \vec b)\mid X_{1:N}).
\end{equation}
In Phyloformer 2 their goal was to predict $\log q_\theta(T, \vec b\mid X_{1:N})$ so the ELBO Eqn.~\ref{eqn:pf2 ELBO} could be made tight if the model learned $q_\theta(T_0, \dots, T_M, \vec b\mid X_{1:N})=0$ if $(T_0, \dots, T_M)\neq P(T_0, \vec b)$.
In our case however, $P(T, \vec b)$ is a random variable and the best prediction is
$q_\theta(T_0, \dots, T_M\mid X_{1:N})=p(T_0\mid X_{1:N})p(P(T_0, \vec b)=(T_0, \dots, T_M)\mid T_0, X_{1:N}).$
The second term creates an irreducible gap in the ELBO of size $\mathbb E_{T, X_{1:N}\sim p}\mathrm{Entropy}(P(T, \vec b)\mid T, X_{1:N})$.

To ablate this effect, we replace the true branch lengths used to generate the tree with branch lengths predicted from $T$ and $X_{1:N}$, i.e. we replace $P(T, \vec b)$ with a $P(T, X_{1:N})$ so it is possible for the ELBO to be tight; i.e. $\mathrm{Entropy}(P(T, X_{1:N})\mid T, X_{1:N})=0$.
To do so, we first estimate the distance between every pair of DNA sequences $X_n$ and $X_{n'}$ as $b_{n, n'}=-\frac 3 4\times\log(1-\frac 4 3\times \text{fraction mismatches})$ -- this estimator converges to the exact distance under the JC model when $L\to\infty$.
Then, traversing the tree from the bottom-up, for every node $v$ with children $\ell$ and $r$, and other node $w$, we estimate $b_{v, w}=\frac{1}{2}(b_{\ell, w}+b_{r, w}-b_{\ell, r})$.
Given these branch lengths $\vec b(T_0, X_{1:N})$, we then construct $P(T, \vec b(T_0, X_{1:N}))$

\paragraph{Hypothesis 2: Our architecture improvements explain the improved performance}

Here we port the improvements we made to our architecture to Phyloformer 2.
In particular, we add $\mu$P-rescaled per-layer learning rates, initialization, and attention normalization~\citep{Yang2022-sy}, CompleteP scaled residual layers~\citep{Dey2025-uv}, EMA $0.9998$, gradient-norm clip $500$, and \texttt{bf16-mixed} precision.
We scan learning rates $0.001, 0.003, 0.01, 0.03, 0.1$ each for 12 hours, and find that $0.003$ is optimal.
We see the loss is at each step is similar to before, but the throughput increased roughly by $2\times$.

\paragraph{Ablation results}

Fig.~\ref{fig:pf2 ablation} shows the results of the ablations described above compared to the original Phyloformer 2 and \bu{} results.
We see hypotheses 1 and 2 are not true: evaluating a non-pessimistic ELBO, or porting our architectural improvements to Phyloformer 2 did not majorly affect training.
The model trained with a non-pessimistic ELBO may have done slightly worse than the original Phyloformer 2 because the canonicalization we propose may have been harder to learn.

On the other hand, as in App.~\ref{app: ablate math}, we see Rao-Blackwellizing the objective results in a substantially better fit.
However there is still a significant gap to our \bu{} model, which likely comes from our other contribution, the increased flexiblity.

\begin{figure}[H]
    \centering
    \includegraphics[width=0.65\linewidth]{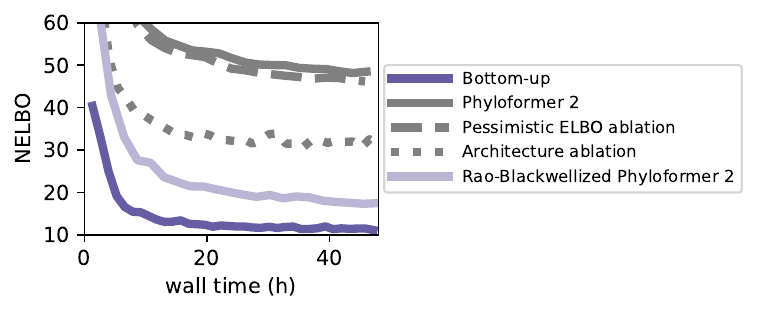}
    \caption{\textbf{Our improvements over Phyloformer 2 came from our improved objectives and the flexibility of our models.}
    We ablate the architectural improvements we've made and the form of the canonicalized objective.
    These changes don't make a large change, while Rao-Blackwellize does, but not enough to close the gap to our model.}
    \label{fig:pf2 ablation}
\end{figure}

\subsection{Posterior calibration}\label{app: posterior calibration}

On the JC data we have access to a computationally cheap likelihood $p(X_{1:N}\mid T)$ and can approximate the ground truth posterior with MrBayes.
This allowed us to quantify an approximate average KL between our inferred posteriors $q_\theta(T_0\mid X_{1:N})$ and the true posterior.
Here we ask whether \textit{samples} from $q_\theta(T_0\mid X_{1:N})$ look like samples from the true posterior.
We also ask how much this depends on using our synthetic data versus real data which can be out-of-distribution.
This extends the analysis in Sec.~\ref{sec: post calib}.

We consider classical evolutionary datasets labeled DS1-8~\citep{Lakner2008-dd} numbered in order of increasing $N$ and, for each dataset, simulate JC data with the same $N$ and $L$.
Then we compare our inferences to those of a long MrBayes run on these $16$ data using 3 methods.
\begin{itemize}
    \item \textit{How far is $q_\theta(T_0\mid X_{1:N})$ from the true posterior?} We compare them with MMD.
    MMD measures the average distance between trees sampled from separate distributions and subtracts the average distance between trees sampled from the same distribution, with the best MMD score of 0 achieved when the two distributions are identical. See App.~\ref{app: mmd} for more details. 
    \item \textit{If we make conclusions about our confidence of evolutionary events $q_\theta(T_0\mid X_{1:N})$ will they match the confidence of those from the true posterior?}
    In a tree, every parent-child branch partitions the sequences $X_{1:N}$ into two groups: those that are descendants of the parent, and everything else.
    Whether two species often belong to the same split or different split determines their related-ness.
    Therefore the posterior likelihood of a split is regularly used to make conclusions about our confidence of species-relatedness.
    So, for every possible split that appears in any observed tree, we count how likely it is to appear in the MrBayes posterior versus samples from our posterior.
    \item \textit{Do samples from $q_\theta(T_0\mid X_{1:N})$ look like those from the true posterior?} We plot samples for visual comparison.
\end{itemize}

In Fig.~\ref{fig:all mmds} we plot MMDs for the real and synthetic DS 1-8.
We see we substantially outperform Phyloformer 2.
We also see that our MMDs are near $0$ for the synthetic data, but all models find the real data much more challenging.

\begin{figure}[H]
    \centering
    \includegraphics[width=0.9\linewidth]{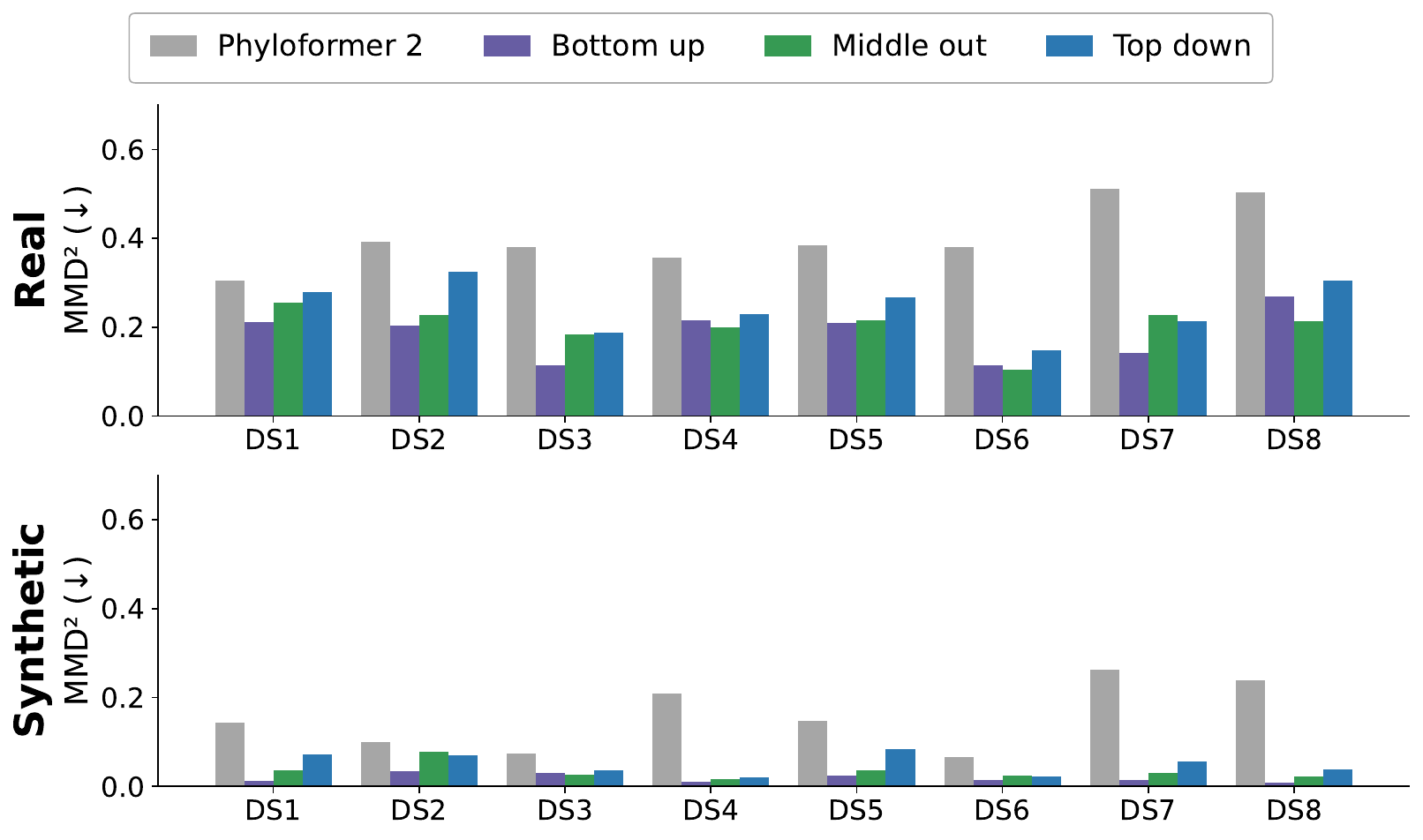}
    \caption{\textbf{Our model matches the MrBayes posterior distribution.}
    We plot MMDs on DS1-8 between $1000$ samples from each model and MrBayes.
    The top row is a reproduction of Fig.~\ref{fig:mmd} included for comparison.}
    \label{fig:all mmds}
\end{figure}

In Fig.~\ref{fig:all calib} we plot the posterior probabilities of each observed split from MrBayes and our models (each point is a split).
We report the Spearman correlation ($\rho$) as well as the coverage of likely splits (we take all splits with a posterior probability $\geq 0.9$ under MrBayes and report the average posterior probabilities of these splits under our model).
We see all models are under-confident (below the diagonal line), likely because the likelihood objective encourages models to spread-out.
This is in particular true for the real data, suggesting that our models' errors come largely from abstaining to make confident predictions on OOD data.
Again, we see our models are much more calibrated than Phyloformer 2, and that models are less calibrated on real data.
However, we see that the coverage of our models remains high: most high probability splits in MrBayes are also high probability under our models.

\begin{figure}[H]
    \centering
    \includegraphics[width=0.99\linewidth]{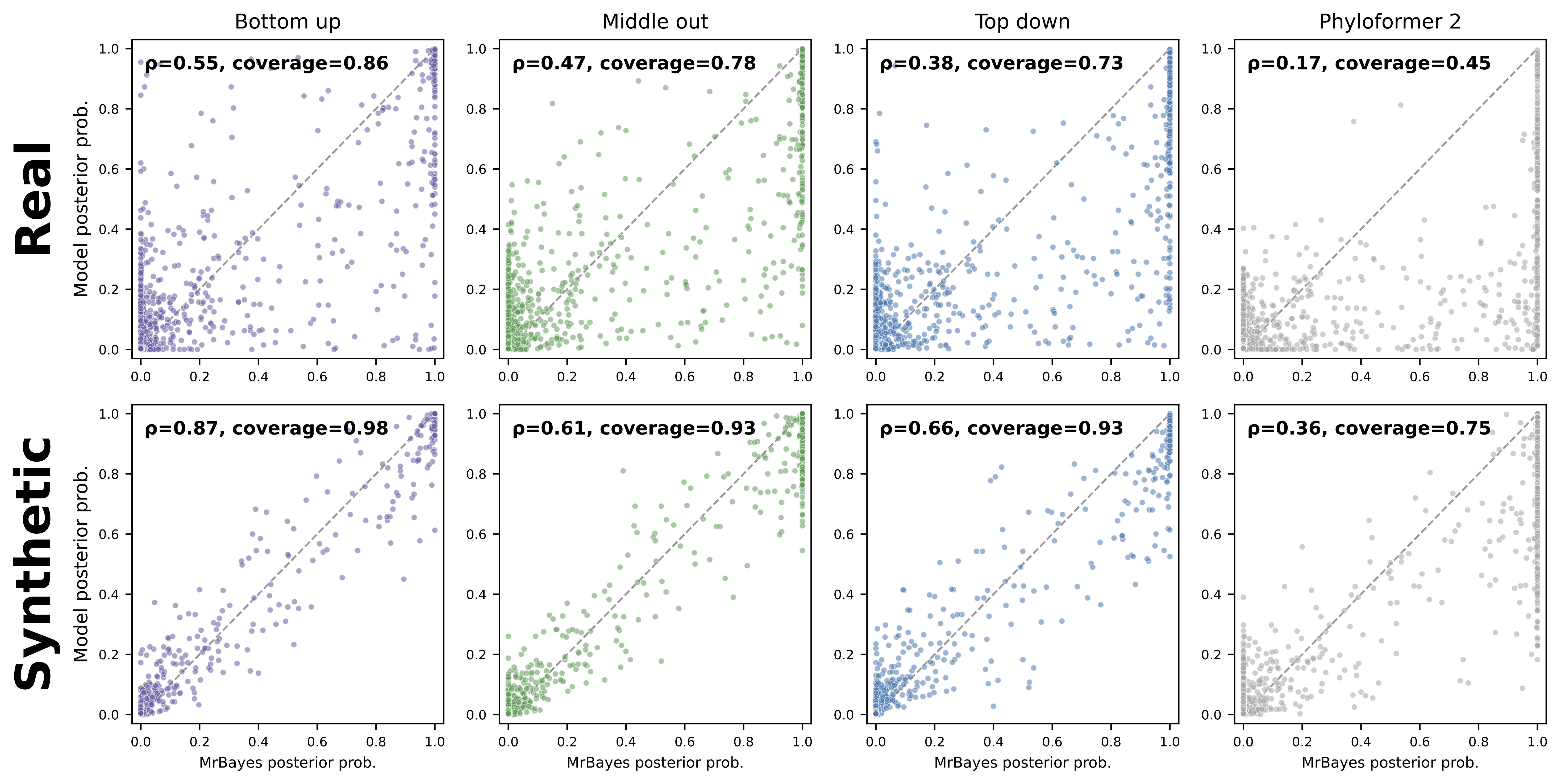}
    \caption{\textbf{Our models have accurate posterior split probabilities.}
    We plot posterior probabilities of observed splits on all of DS1-8 between $1000$ samples from each model and MrBayes.
    }
    \label{fig:all calib}
\end{figure}

In Fig.~\ref{fig:ds3 real trees} and \ref{fig:ds3 synth trees} we plot the consensus and sample trees from each method for DS3 real and synthetic data.
This is a setting in which the uncertainty was low, making it easier to visualize.
We see, as above, that our models are under-confident and perform better on synthetic data.
Trees from our models however clearly better respect the original tree than those from Phyloformer 2.

\begin{figure}[h]
    \centering
    \includegraphics[width=0.88\linewidth]{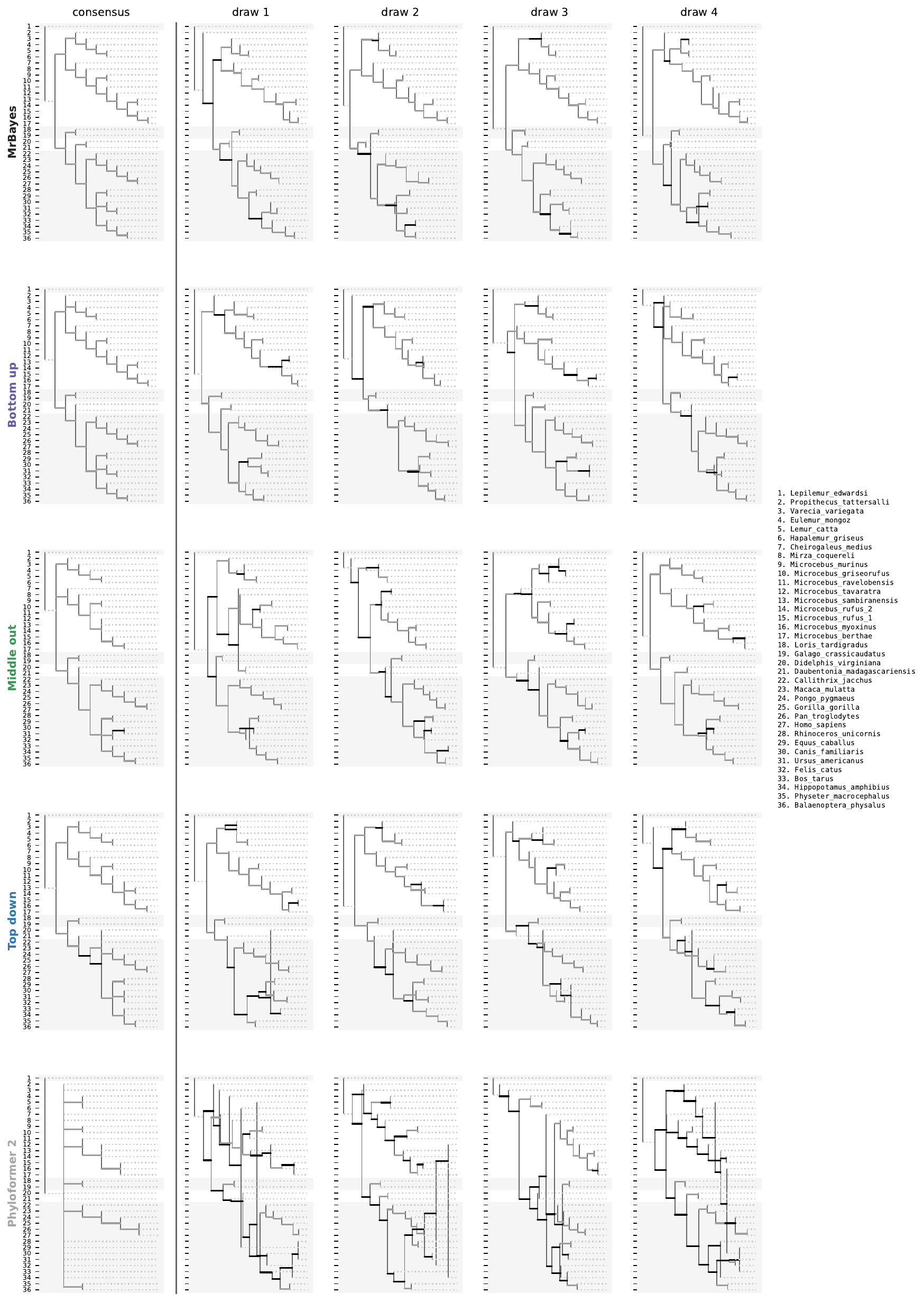}
    \caption{Consensus and inferred trees on DS3.
    We plot the consensus over 500 sequences.
    We colour branches that don't appear in the MrBayes consensus black.
    We root trees by picking a fixed random out-group among the nodes.
    Clades are shaded grey to help in comparing trees.
    Species are ordered to allow us to show the MrBayes consensus tree as a planar tree.}
    \label{fig:ds3 real trees}
\end{figure}

\begin{figure}[h]
    \centering
    \includegraphics[
            width=0.7\linewidth,
            trim=0 0 50 0,
            clip
        ]{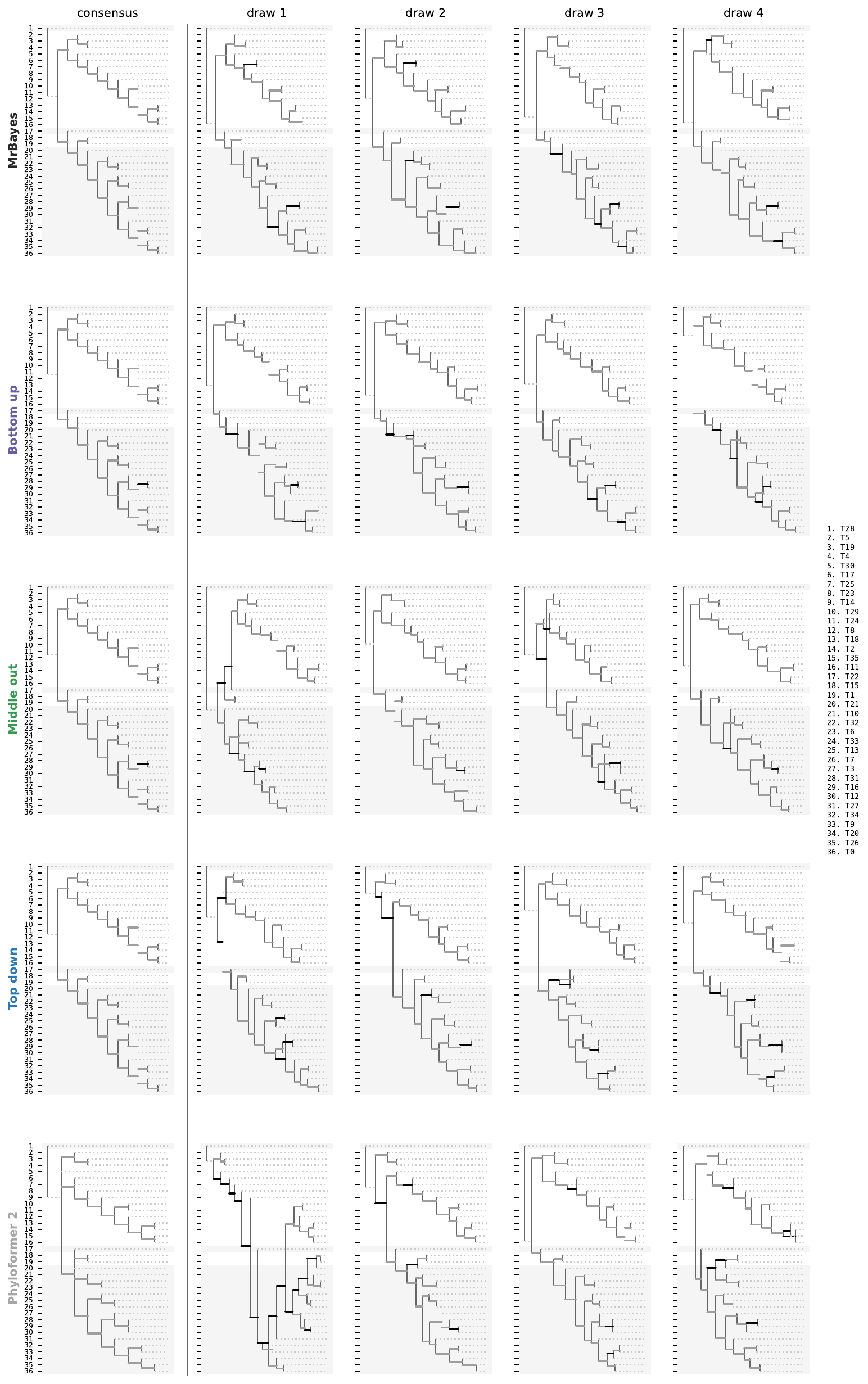}
    \caption{Consensus and inferred trees on a synthetic DS3.
    We plot the consensus over 500 sequences.
    We colour branches that don't appear in the MrBayes consensus black.
    We root trees by picking a fixed random out-group among the nodes.
    Clades are shaded grey to help in comparing trees.}
    \label{fig:ds3 synth trees}
\end{figure}

\subsection{Model sampling timing comparisons}
\label{app: timing results}

Our \bu{} and \mo{} models samples scale as $N$ steps of $\times O(N^4+NL^2)$ per step.
Phyloformer 2 in principle scales the same way but only requires one step.
Despite the extra cost of our model, it is cheaper than the cheapest default settings of IQ-Tree for most parameter settings, making it appropriate for use in standard phylogenetics workflows.
The exception is our \td{} model, which needs many more iterative steps to generate a tree, between $\Theta(N\log N)$ and $\Theta(N^2)$ depending on the depths of the nodes in the tree (average $\Theta(N^{3/2})$ for PDA).
The extra compute our models spend during sampling allow them to represent much more flexible distributions, bringing these resources to bear on this problem.

\begin{figure}[H]
    \centering
    \includegraphics[width=1\linewidth]{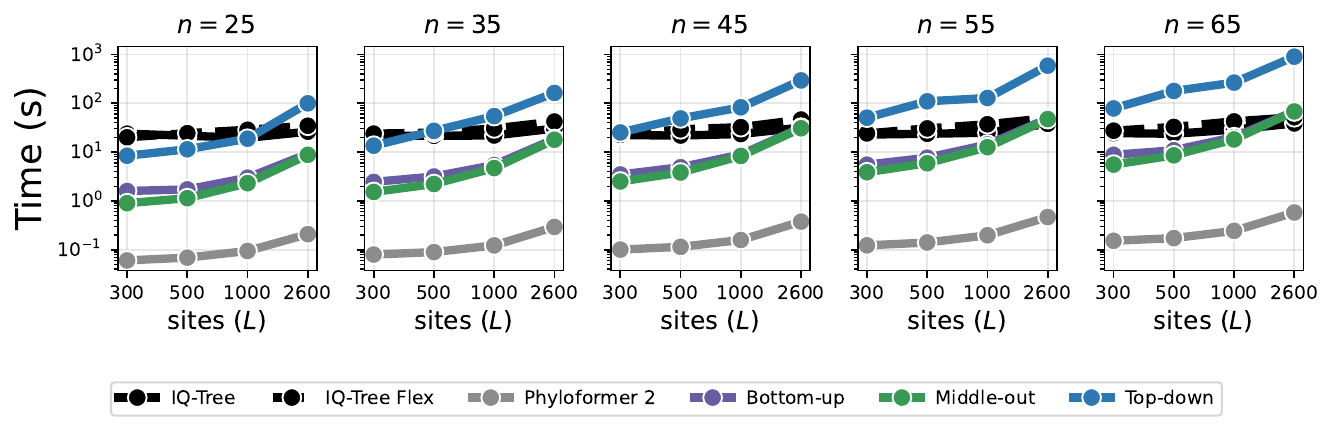}
    \caption{Comparison of sampling runtimes for our models and baselines on an A100 GPU.}
    \label{fig:placeholder}
\end{figure}

\end{document}